\newcommand{\STATE}{\ALC@it}}
\newcommand{\STATE}{\@ifstar\STATEstar\STATEnostar}}
\newcommand{\STATEstar}{\item[]}
\newcommand{\STATEnostar}{\ALC@it}
\newcommand{\tabincell}[2]{\begin{tabular}{@{}#1@{}}\linespread{0.2}#2\end{tabular}}
\newtheorem{theorem}{Theorem}
\newtheorem{assumption}{Assumption}
\newtheorem{example}{Example}
\newtheorem{lemma}{Lemma}
\title{Federated Submodel Optimization for \\Hot and Cold Data Features}
\author{Yucheng Ding$^1$\quad\quad Chaoyue Niu$^1$\thanks{Chaoyue Niu (rvince@sjtu.edu.cn) is the corresponding author.}\quad\quad Fan Wu$^1$\quad\quad Shaojie Tang$^{2}$\quad\quad \vspace{0.2em}\\
\textbf{Chengfei Lv$^3$\quad\quad Yanghe Feng$^4$\quad\quad Guihai Chen$^1$} \vspace{0.4em} \\ 
$^1$Shanghai Jiao Tong University\quad\quad$^2$University of Texas at Dallas \vspace{0.2em} \\
$^3$Alibaba Group\quad\quad$^4$National University of Defense Technology
}
\begin{document}

\maketitle

\begin{abstract}

We focus on federated learning in practical recommender systems and natural language processing scenarios. The global model for federated optimization typically contains a large and sparse embedding layer, while each client’s local data tend to interact with part of features, updating only a small submodel with the feature-related embedding vectors. We identify a new and important issue that distinct data features normally involve different numbers of clients, generating the differentiation of hot and cold features. We further reveal that the classical federated averaging algorithm (FedAvg) or its variants, which randomly selects clients to participate and uniformly averages their submodel updates, will be severely slowed down, because different parameters of the global model are optimized at different speeds. More specifically, the model parameters related to hot (resp., cold) features will be updated quickly (resp., slowly). We thus propose federated submodel averaging (FedSubAvg), which introduces the number of feature-related clients as the metric of feature heat to correct the aggregation of submodel updates. We prove that due to the dispersion of feature heat, the global objective is ill-conditioned, and FedSubAvg works as a suitable diagonal preconditioner. We also rigorously analyze FedSubAvg’s convergence rate to stationary points. We finally evaluate FedSubAvg over several public and industrial datasets. The evaluation results demonstrate that FedSubAvg significantly outperforms FedAvg and its variants.
\end{abstract}

\section{Introduction}

Federated learning (FL)~\cite{McMahan_aistats17} allows a large number of clients (e.g., millions of smartphone users) to collaborate in the training of a global machine learning (ML) model under the coordination of a cloud server without sharing raw data. For the production use on the top, Google has deployed FL among the users of its Android keyboard, called Gboard, to polish language models \cite{Gboard}. As the default optimization algorithm of FL at the bottom, federated averaging (FedAvg) averages the participating clients' local new models to update the global model. Much effort of existing work was devoted to proving the convergence of FedAvg, and the key challenge is that the clients' local data are normally non-independent and identically distributed (non-i.i.d.)~\cite{li_osdi14, lian_icml18, tang_icml18, Yu_icml19}. One line of work~\cite{khaled_corr19, Stich_corr19, Stich_iclr19, wang_corr18, yu_aaai19} established an $O(1/\sqrt{NT})$ convergence, where $N$ denotes the total number of clients and $T$ denotes the total number of iterations. However, these work required all the clients to participant in each round of FL, which is not practical in cross-device FL. Another line of work \cite{cho_corr20, Li_iclr20, Karimireddy_icml20, avdi_icml21, frab_icml21} allowed partial client participation and proved an $O(1/\sqrt{KT})$ convergence of FedAvg, where $K$ denotes the number of chosen clients in each round. Some other work proposed variants of FedAvg to better deal with data heterogeneity. For example, Li et al. \cite{li_mlsys20} proposed FedProx by adding a proximal term to local objective; and Karimireddy et al. \cite{scaffold} proposed Scaffold by adding a control variate for each client to control local training.


Besides non-i.i.d. from data distribution, the interactions between the data features and the clients are mutual, partial, and differentiated, especially in practical recommender systems (RS) and natural language processing (NLP) scenarios. First, distinct data features are normally involved by different numbers of clients. For example, the differentiation of popular and unpopular items in RS or hot and cold words in NLP is common. We call such an observation {\em feature heat dispersion}. Second, a certain client's local data tend to involve a small subspace of the full feature space, which further implies that the client needs to download and update only the feature-related part of the full global model, called a {\em submodel} in~\cite{niu_mobicom20}. For example, deep recommendation and language models are normally stacked with a large and sparse embedding layer and some other dense layers, while a client's submodel comprises the full dense layers and the embedding vectors for the client's few local items or words rather than the full and huge embedding layer.

The existing work on FL has not studied the issue of feature heat dispersion yet. However, different parameters of the global model involve distinct features and will be optimized by different numbers of clients and at different speeds, severely deteriorating the performance of FedAvg and its variants. We take an extreme example in RS for illustration. For an unpopular item 1 that appears only in 1\% of the clients' (denoted as client group $G_1$) local datasets, the corresponding embedding vector for item 1 is involved in the submodels of those clients in $G_1$. Using conventional FedAvg and its variants, only the clients in $G_1$ will upload non-zero updates, and the update of the embedding vector for item 1 will be slowed down 100 times. In contrast, for a popular item 2 that appears in all the clients' local datasets, the update of the corresponding embedding vector will not be slowed down. 

To deal with feature heat dispersion, we propose federated submodel averaging (FedSubAvg), which first averages the local updates of the chosen clients, just like FedAvg, but then multiplies the aggregated update of each model parameter with the ratio between the total number of clients and the number of clients who involve this model parameter. Such a small correction ensures that the expectation of each model parameter's global update is equal to the average of the local updates of the clients who involve this parameter. We theoretically demonstrate the advantage of FedSubAvg over FedAvg and analyze the convergence of FedSubAvg. We first prove that the global objective is ill-conditioned, leading to the slow convergence of FedAvg, and FedSubAvg works as a suitable diagonal preconditioner. We also obtain an $O(\sqrt{{N}/({n_{\min}KT}}))$ convergence with respect to stationary points, where $n_{\min}$ denotes the minimum of the number of clients who involve each individual parameter.

We summarize the key contributions of this work as follows:
\begin{itemize}[leftmargin=*]
\item To the best of our knowledge, we are the first to make an in-depth study of FL from the differentiation of hot and cold data features, which is common in practice.
\item We identify the defect of FedAvg and its variants in handling feature heat dispersion and propose a novel, effective, and efficient FedSubAvg algorithm.
\item We theoretically show that the the global objective is ill-conditioned, and FedSubAvg essentially works as a preconditioner for acceleration. We also give the convergence rate of FedSubAvg in the general non-convex case.
\item Using the public MovieLens, Sentiment140, and Amazon datasets, as well as an industrial dataset from Alibaba, we extensively evaluate FedSubAvg\footnote{The code is available on \href{https://github.com/sjtu-yc/federated-submodel-averaging}{https://github.com/sjtu-yc/federated-submodel-averaging}.} and compare it with FedAvg, FedProx, Scaffold, and FedAdam. The evaluation results reveal the superiority of FedSubAvg from faster convergence and smaller train loss.
\end{itemize}


\section{Problem Formulation}

In this section, we formulate the federated submodel optimization problem with feature heat dispersion in RS and NLP scenarios. 

\textbf{Optimization Objective.} We consider a distributed optimization setting, in which $N$ clients collaboratively solve the following consensus optimization problem:
\begin{align*}
f\left(\mathbf{X}\right)=\frac{1}{N}\sum_{i=1}^N
\mathbb{E}_{\xi_i \sim D_i}\left[F\left(\mathbf{X},\xi_i\right)\right]
=\frac{1}{N} \sum_{i=1}^N f_i\left(\mathbf{X}\right),
\end{align*}
where $D_i$ denotes client $i$'s local empirical distribution and $\xi_i\sim D_i$ denotes the local training data; 
${\bf X}$ is the full global model; $F({\bf X}, \xi_i)$ is the train loss of ${\bf X}$ over the local data $\xi_i$; and $f_i({\bf X})$ is the local empirical error, taking expectation over the randomness of the local data.


\textbf{Model Structure and Submodel.} The full recommendation or language model ${\bf X}$ normally adopts the network structure of an embedding layer plus some other dense layers, where sparse input features are mapped into embedding vectors, concatenated, and fed into the upper layers. In practice, a client's local data involve only part of the full global model, namely, a submodel, which is related to the client's local data features. For example, in RS (resp., NLP) scenario, client $i$ can use the data collected in the previous week as its local training set $D_i$, and retrieve the submodel in a key-value lookup way, typically, by retrieving a few embedding vectors based on the local item ids (resp., word ids) and directly taking the other dense network layers. Considering the full embedding layer is far beyond any mobile device's capacity, such a submodel design makes FL in these industrial scenarios possible. Formally, we use $S =\{1, 2, \cdots, M\}$ to index the parameters of the full model ${\bf X}  \in \mathbb{R}^M$ and call it the full index set. We let $S(i) \subseteq {S}$ denote the index set of client $i$'s submodel ${\bf X}_{S(i)}$. In other words, the full model excluding the submodel (i.e., ${\bf X}_{S\setminus S(i)}$) does not affect the model output, and the local gradient of ${\bf X}_{S\setminus S(i)}$ will always be zero. 
This further implies that we can rewrite the global objective function in a distributed submodel way:
$$
f\left(\mathbf{X}\right)
=\frac{1}{N}\sum_{i=1}^N f_i\left({\bf X}_{{S}(i)}\right).
$$
The gradient of $f$ is 
$$\nabla f\left(\mathbf{X}\right) =\frac{1}{N}\sum_{i=1}^N \nabla f_i\left({\bf X}_{{S}(i)}\right),$$ 
and the Hessian is 
$${\bf H} \overset{\triangle}{=} \nabla^2 f\left(\mathbf{X}\right) = \frac{1}{N}\sum_{i=1}^N \nabla^2 f_i\left({\bf X}_{{S}(i)}\right) =\frac{1}{N}\sum_{i=1}^N {\bf H}_i.$$ 
Note that when doing any operation (e.g., summation) over multiple submodels, gradients, and Hessians, they will be automatically aligned according to the indices.


\textbf{Feature/Parameter Heat Dispersion.} We finally introduce the metric of feature heat dispersion (resp., the resulting parameter heat dispersion), which is defined as the ratio between the maximum and the minimum of the number of clients who involve each individual feature (resp., parameter). Considering the fact that a parameter may involve one or multiple data features in RS and NLP\footnote{For example, an embedding vector corresponds to an item in RS or a certain word in NLP, while the dense layers are related to all the items or words.}, the feature heat dispersion is a lower bound of the parameter heat dispersion. In other words, {\em high feature heat dispersion inevitably leads to high parameter heat dispersion.} For convenience and clarity in the submodel-level analysis, we mainly take the metric of parameter heat dispersion. We let $n_m$ denote the number of clients involving the parameter with index $m \in S$ and set $n_{\max} = \max_{m\in S} n_m, n_{\min} = \min_{m\in S} n_m$. Then, the parameter heat dispersion is $n_{\max}/n_{\min}$.

\begin{figure}[!t]
    \begin{minipage}{0.6\linewidth}
        \centering
        \includegraphics[width=\columnwidth]{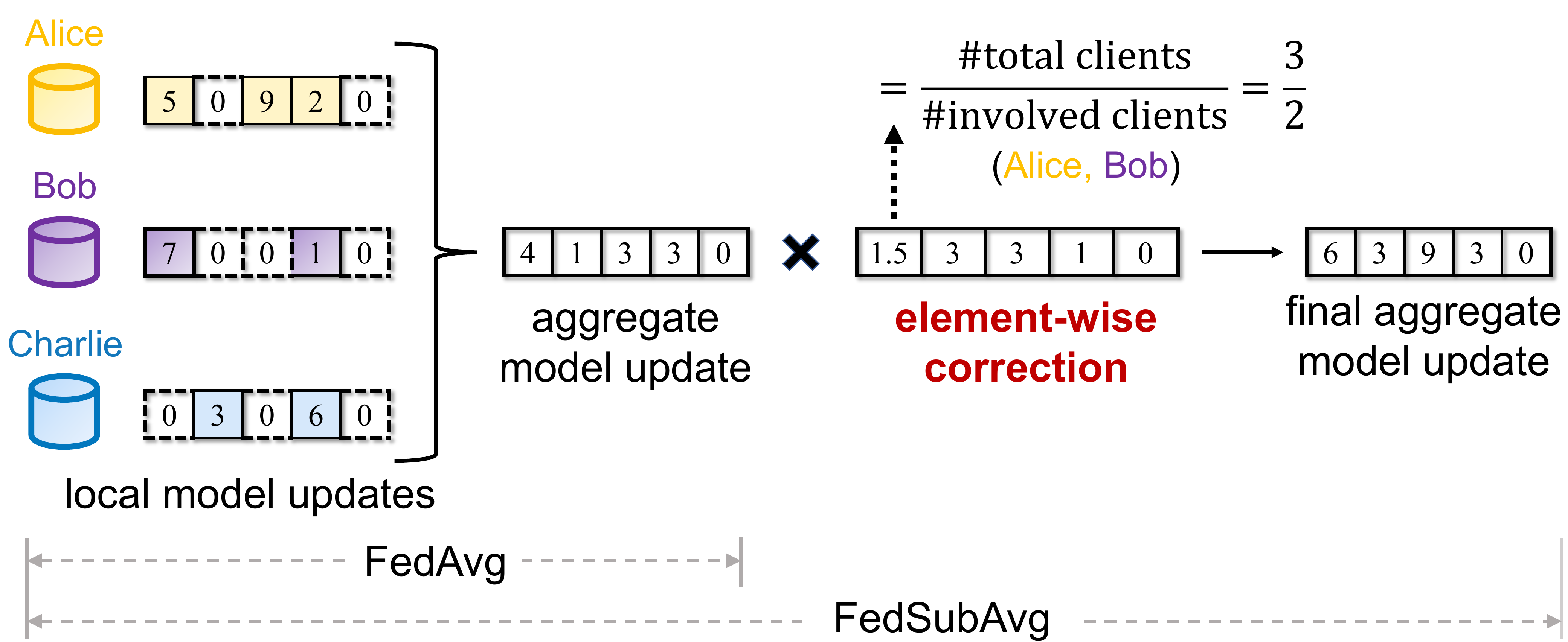}
        \caption{FedAvg vs. FedSubAvg with $N=3$ clients in total and $2$ clients involving the first model parameter. The dashed line indicates that the client does not download the parameter and upload no update.}
    \label{fedsubavg-fig}
    \end{minipage}
    \hfill
    \begin{minipage}{0.36\linewidth}
        \centering
        \includegraphics[width=\columnwidth]{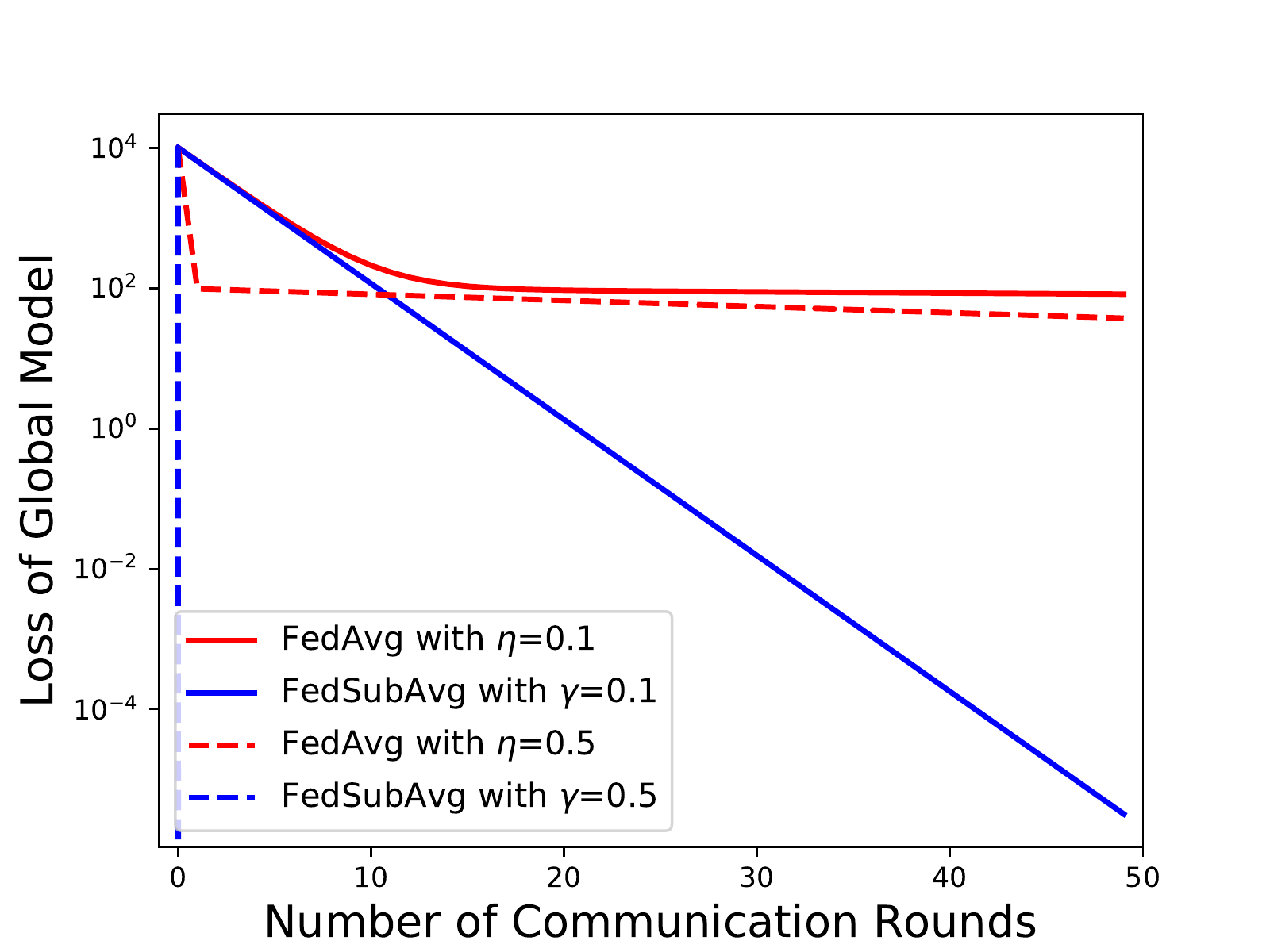}
        \vspace{-1em}
        \caption{FedAvg (learning rate $\eta$) vs. FedSubAvg (learning rate $\gamma$) for Example \ref{ex_submodel} with the parameter heat dispersion of $100$.}
        \label{simulation-fig}
    \end{minipage}
\end{figure}

\section{Algorithm Design}
 
In this section, we first show that FedAvg suffers from high parameter heat dispersion in distributed submodel optimization and then propose FedSubAvg to remedy the defect.

\subsection{Slow Convergence of FedAvg}

\begin{example}\label{ex_submodel}
We consider a special distributed convex optimization problem with two model parameters, denoted as $w_1$ and $w_2$. Each client $i$'s local data $\xi_i\sim D_i$ are with mean ${\bf e}_i= \mathbb{E}[\xi_i]={\bf 0}$. In addition, $w_1$ involves only client 1, while $w_2$ involves all the $N$ clients. Then, the parameter heat dispersion is $n_2/n_1 = N$. We formulate this learning problem as minimizing the mean square error:
\begin{align*}
f((w_1, w_2)) = \frac{1}{N} \sum_{i=1}^N f_i \left((w_1, w_2)_{S(i)}\right),
\end{align*}
where $f_1\left((w_1, w_2)_{S(1)}\right) = \mathbb{E}_{\xi_1\sim D_1}\left[\parallel(w_1, w_2) - \xi_1\parallel^2\right]=w_1^2+w_2^2 + \mathbb{E}_{\xi_1\sim D_1}[\parallel \xi_1-{\bf e}_1\parallel^2]$; and for $i=2,3,\cdots, N$, $f_i \left((w_1, w_2)_{S(i)}\right) = \mathbb{E}_{\xi_i\sim D_i}[(w_2 - \xi_i)^2] = w_2^2 + \mathbb{E}_{\xi_i\sim D_i}[(\xi_i-{\bf e}_i)^2]$.
\end{example}

For this example, the optimal model is $(w_1^*, w_2^*)=(0, 0)$. We leverage FedAvg with only one local iteration and let each client compute the exact (not stochastic) gradient. The learning rate is denoted as $\eta$, and the model is initialized as $(w_1^0, w_2^0)$. After $r$ rounds, the model will become
\begin{align*}
\left(w_1^r, w_2^r\right)^\top = \begin{bmatrix} 1- \frac{2\eta}{N} & 0 \\ 0 & 1-2\eta \end{bmatrix}^r \left(w_1^0, w_2^0\right)^\top.
\end{align*}
By choosing $\eta=0.5$, $(w_1^r, w_2^r) = ((1-1/N)^r w_1^0, 0)$, we can find that in the FL scenario with high parameter heat dispersion $N$, $w_1$ will converge at a quite low speed. We also depict the optimization process of FedAvg in Figure \ref{simulation-fig}.

\subsection{Federated Submodel Averaging}


To mitigate parameter heat dispersion, we propose FedSubAvg. As shown in Figure~\ref{fedsubavg-fig}, compared with FedAvg, the key principle of FedSubAvg is to further multiply the aggregated update of each model parameter with the ratio between the total number of clients and the number of clients who involve this model parameter\footnote{With some privacy preserving methods, we can obtain $n_m$ without revealing the real index set of any client’s submodel. Please refer to Appendix \ref{privacy_method} for details.} (i.e., for the model parameter with index $m$, the correction coefficient is $N/n_m$). We still examine Example~\ref{ex_submodel} for illustration. FedSubAvg will multiply the aggregated update of $w_1$ and $w_2$ with $N$ and $1$, respectively. By correction, the model at the $r$-th round is:
\begin{align*}
\left(w_1^r, w_2^r\right)^\top = \begin{bmatrix} 1- {2\gamma} & 0 \\ 0 & 1-2\gamma \end{bmatrix}^r \left(w_1^0, w_2^0\right)^\top,
\end{align*}
where $\gamma$ is the learning rate of FedSubAvg. Therefore, FedSubAvg converges quickly to the optimal model $(0, 0)$. We also depict the optimization processes of FedSubAvg and FedAvg for Example~\ref{ex_submodel} in Figure \ref{simulation-fig}, when the parameter heat dispersion is $100$. We can observe that FedSubAvg greatly outperforms FedAvg from convergence speed and loss.

\begin{algorithm}[!ht]
	\caption{Federated Submodel Averaging (FedSubAvg)}\label{fedsubavg}
	\begin{algorithmic}[1]
	    \REQUIRE The total number of clients $N$, the number of local iterations $I$, the number of clients involving the $m$-th model parameter $n_m$, the number of selected clients $K$ per round.
		\STATE* {\tt /* Cloud server's process */}
		\STATE Initializes the global model ${\bf X}^1$; 
		\FOR{each communication round $r=1,2,\cdots,R$}
		\STATE Randomly selects $K$ clients, denoted as $C_r$;
		\FOR{each client $i\in C_r$}
		\STATE Receives the index set $S(i)$, returns the submodel ${\bf X}_{S(i)}^{r}$, and receives the submodel update $\Delta{\bf x}_i^{r}$;
		\ENDFOR
		\STATE ${\bf X}^{r + 1} \leftarrow {\bf X}^{r}$;
		\FOR{each index $m \in \bigcup_{i \in C_r} S(i)$}
		\STATE ${\bf X}^{r+1}_{\{m\}} \leftarrow {\bf X}^{r}_{\{m\}} + \frac{N}{n_mK}\sum_{i\in C_r}\Delta{\bf x}^r_{i,\{m\}}$;
		\ENDFOR
		\ENDFOR
		\STATE* {\tt /* Client $i$'s process */}
		\STATE Determines its index set $S(i)$ based on the local data;
		\STATE Uses $S(i)$ to download the submodel ${\bf X}_{S(i)}^{r}$ from the cloud server; 
		\STATE Initializes the local submodel ${\bf x}_i^{r,1} \leftarrow {\bf X}_{S(i)}^{r}$
		\FOR{local iteration $j=1,2,\cdots,I$}
		\STATE ${\bf x}_i^{r,j+1} \leftarrow {\bf x}_i^{r,j} - \gamma \nabla F({\bf x}_i^{r,j}, \xi_i)$ for $\xi_i \sim D_i$;
		\ENDFOR
		\STATE Uploads $\Delta{\bf x}_i^r={\bf x}_i^{r,I+1}-{\bf x}_i^{r,1}$ to the cloud server.
	\end{algorithmic}
\end{algorithm}

We now present the design details of FedSubAvg in Algorithm \ref{fedsubavg}. In each round $r$ of FL, the cloud server first selects $K$ clients to participate (Line 3), denoted as $C_r$. Each selected client $i \in C_r$ determines its index set ${S}(i)$ of submodel based on the local training set (Line 12). Then, client $i$ uses ${S}(i)$ to download the submodel ${\bf X}_{S(i)}^{r}$ from the cloud server and initializes the local submodel ${\bf x}_i^{r,1}$ (Lines13--14). Client $i$ locally trains its submodel by doing $I$ iterations of stochastic gradient descent (SGD) (Lines 15--17) and uploads the submodel update $\Delta{\bf x}_{S(i)}^{r}$ (Line 18). After receiving the submodel updates from the selected clients, the cloud server performs aggregation for each index in the union of the participating clients' index sets and updates the global full model (Lines 7--10). In particular, for the global model parameter with index $m$, namely, ${\bf X}_{\{m\}}^r$, its expected update, after being corrected with the coefficient $N/n_m$, is
\begin{equation*}
\begin{aligned}
\mathbb{E}_{C_r}\left[\Delta {\bf X}_{\{m\}}^r\right]
= \frac{N}{n_m}\left[\frac{1}{N}\sum_{i=1}^N \Delta{\bf x}^r_{i,\{m\}}\right] 
=\frac{1}{n_m}\sum_{\{i|m\in {S}(i)\}}\Delta{\bf x}_{i, \{m\}}^{r},
\end{aligned}
\end{equation*}
which is equal to the average of the local updates of the clients involving it, as required. 


\section{Theoretical Analysis}

In this section, we first prove that in FL with high parameter heat dispersion, the global objective $f$ is ill-conditioned. We then prove that optimizing $f$ with FedSubAvg approximates to optimizing a preconditioning objective $\hat{f}$ with gradient descent (GD), thereby remedying ill conditioning. In particular, the diagonal preconditioning matrix comprises of the correction coefficients $\{N/n_m|m \in S\}$ in FedSubAvg and can be obtained without needing to access any client's local raw data and without any expensive computing, which keeps the tenet of FL. Further by analyzing the Hessian of $\hat{f}$, we demonstrate that the superiority of optimizing $\hat{f}$ over $f$. We also obtain the convergence guarantee for FedSubAvg.


\subsection{FedAvg with Ill-Conditioned Global Objective}

We first make an assumption about the eigenvalues of each client's local Hessian. 
\begin{assumption}[Bounded Hessian]\label{convex_ratio}
    For any model ${\bf X}$, the global Hessian is non-singular, and for each client $i$, the Hessian of $f_i$ satisfies:  
    $$-\rho_2 I_i \preceq \nabla^2 f_i({\bf X}_{S(i)})\preceq-\rho_1 I_i \ \ or\ \ \rho_1 I_i\preceq\nabla^2 f_i({\bf X}_{S(i)})\preceq \rho_2 I_i,\ \ with\ \ 0<\rho_1<\rho_2.$$
    Further, if ${\bf X}$ is in a locally convex area of $f$, for each parameter $m$, more than $1-\alpha$ of the related clients satisfies $\rho_1I_i \preceq \nabla^2 f_i({\bf X}_{S(i)})\preceq \rho_2 I_i$, where $\alpha$ is a constant with $(1-\alpha)\rho_1 - \alpha\rho_2>0$.
\end{assumption}
Assumption \ref{convex_ratio} bounds the eigenvalues of $\nabla^2 f_i({\bf X}_{S(i)})$ and ensures that when ${\bf X}$ is in a locally convex area of $f$, ${\bf X}_{S(i)}$ is also in a locally convex area of $f_i$ for most of the clients, which helps to obtain the bounds of the global Hessian. 
Under Assumption \ref{convex_ratio}, we can analyze the condition number of ${\bf H}$: $\kappa({\bf H})\overset{\triangle}{=}\sigma_{\max}({\bf H})/\sigma_{\min}({\bf H})$, where $\sigma_{\max}({\bf H})$ and $\sigma_{\min}({\bf H})$ denote the biggest and smallest singular values of ${\bf H}$, respectively.
\begin{theorem}\label{cn_fedavg}
    Under Assumption \ref{convex_ratio}, for any model ${\bf X}$ in a locally convex area, the condition number of the global Hessian of $f$ satisfies: 
    $$\kappa({\bf H}) \ge \frac{n_{\max} \left(\rho_1-\alpha(\rho_1+\rho_2)\right)}{n_{\min}\rho_2}.$$
\begin{proof}
Please refer to Appendix \ref{app_cn_fedavg}.
\end{proof}
\end{theorem}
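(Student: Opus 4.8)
The plan is to control the condition number through the diagonal entries of the global Hessian $\mathbf{H}$, which are the only quantities on which Assumption \ref{convex_ratio} gives clean per-client control. Since $\mathbf{H} = \frac{1}{N}\sum_{i=1}^N \mathbf{H}_i$ and a client contributes a nonzero entry at position $(m,m)$ only when $m \in S(i)$, the $m$-th diagonal entry is $H_{mm} = \frac{1}{N}\sum_{i:\,m\in S(i)} (\mathbf{H}_i)_{mm}$, a sum over exactly $n_m$ clients. First I would split these $n_m$ clients into the ``convex'' ones satisfying $\rho_1 I_i \preceq \mathbf{H}_i \preceq \rho_2 I_i$ and the remaining ones satisfying $-\rho_2 I_i \preceq \mathbf{H}_i \preceq -\rho_1 I_i$, using the dichotomy of Assumption \ref{convex_ratio}. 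For a convex client $(\mathbf{H}_i)_{mm} \in [\rho_1,\rho_2]$, while for the others $(\mathbf{H}_i)_{mm} \in [-\rho_2,-\rho_1]$.

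Next I would derive a two-sided bound on $H_{mm}$ by counting. Writing $p$ and $q = n_m - p$ for the numbers of convex and non-convex clients, the locally-convex clause of Assumption \ref{convex_ratio} gives $p > (1-\alpha)n_m$. Because $\sum_{i:\,m\in S(i)}(\mathbf{H}_i)_{mm} \ge p\rho_1 - q\rho_2 = p(\rho_1+\rho_2) - n_m\rho_2$ is increasing in $p$, substituting $p > (1-\alpha)n_m$ yields the lower bound $\sum \ge n_m\big((1-\alpha)\rho_1 - \alpha\rho_2\big) = n_m\big(\rho_1 - \alpha(\rho_1+\rho_2)\big)$, which is strictly positive by the standing assumption $(1-\alpha)\rho_1 - \alpha\rho_2 > 0$. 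For the upper bound, I would discard the non-convex clients (each contributing $\le -\rho_1 \le 0$) and bound each convex contribution by $\rho_2$, giving $\sum \le p\rho_2 \le n_m\rho_2$. Dividing by $N$ produces $\frac{n_m(\rho_1 - \alpha(\rho_1+\rho_2))}{N} \le H_{mm} \le \frac{n_m\rho_2}{N}$.

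Finally I would pass from diagonal entries to extreme singular values via the Rayleigh quotient. In a locally convex area $f$ is convex, so $\mathbf{H} \succeq 0$, and combined with non-singularity from Assumption \ref{convex_ratio} we get $\mathbf{H} \succ 0$; its singular values then coincide with its positive eigenvalues, so $\sigma_{\max}(\mathbf{H}) = \max_{\|v\|=1} v^\top \mathbf{H} v \ge H_{m^*m^*}$ and $\sigma_{\min}(\mathbf{H}) = \min_{\|v\|=1} v^\top \mathbf{H} v \le H_{m_*m_*}$, where $m^*$ and $m_*$ index parameters achieving $n_{m^*} = n_{\max}$ and $n_{m_*} = n_{\min}$. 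Inserting the lower bound at $m^*$ into the numerator and the upper bound at $m_*$ into the denominator of $\kappa(\mathbf{H}) = \sigma_{\max}/\sigma_{\min}$ cancels the $1/N$ factors and gives exactly $\frac{n_{\max}(\rho_1 - \alpha(\rho_1+\rho_2))}{n_{\min}\rho_2}$. I expect the main obstacle to be the bookkeeping in the split-and-count step, namely keeping the $\alpha$-fraction bound and the sign conventions aligned so that the lower bound is applied at the hot parameter and the upper bound at the cold parameter; once positive-definiteness is in hand, the spectral step is routine.
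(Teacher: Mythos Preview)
Your proposal is correct and is essentially the paper's own argument. The paper phrases the same split-and-count step as a Loewner sandwich $\mathbf{M}_1 \preceq \mathbf{H} \preceq \mathbf{M}_2$ between the diagonal matrices with entries $n_m(\rho_1-\alpha(\rho_1+\rho_2))/N$ and $n_m\rho_2/N$, then reads off $\lambda_{\max}(\mathbf{H})\ge\lambda_{\max}(\mathbf{M}_1)$ and $\lambda_{\min}(\mathbf{H})\le\lambda_{\min}(\mathbf{M}_2)$; your diagonal-entry computation plus the Rayleigh quotient at $e_{m^*}$ and $e_{m_*}$ is exactly this, just written at the scalar level.
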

Theorem~\ref{cn_fedavg} reveals that when the parameter heat dispersion $n_{\max}/n_{\min}$ is high, then the global objective is ill-conditioned. Specifically, in the large-scale RS and NLP scenarios, since the fully connected layer involves all the $N$ clients, while the embedding vector of a cold item or word normally involves a few clients, we have $\kappa({\bf H})\ge \Theta\left(N\right)$. This further implies with a large $N$ in practice, the global objective is extremely ill-conditioned. Therefore, the conventional FedAvg and its variants, which are the approximations to GD, will converge at a quite slow speed, even when the global model is in a locally convex area. 

\subsection{FedSubAvg as a Preconditioner Better than FedAvg}
For brevity, we introduce $t$ to denote the global iteration index, thus replacing the round index $r$ and the local iteration index $j$, where $t=(r-1) \times I + j$.
We consider a single iteration of FedSubAvg:
\begin{equation}\label{submodel_iter}
\begin{aligned}
    {\bf X^{t+1}} &= {\bf X^{t}} - \gamma {\bf D} \left(\frac{1}{N} \sum_{i=1}^N\nabla f_i({\bf x}^{t}_i)\right)
    \approx {\bf X^{t}} - \gamma {\bf D} \nabla f({\bf X^{t}}),
\end{aligned}
\end{equation}
where ${\bf X}^t\overset{\triangle}{=}{\bf D}(\frac{1}{N}\sum_{i=1}^N {\bf x}_i^t)$ is defined as the global model at iteration $t$, and ${\bf D}=\text{diag}\{N/n_1, N/n_2, \cdots, N/n_M\}$. By treating ${\bf D}$ as a preconditioning matrix~\cite{preconditioner}, we can construct a new objective 
$
    \hat{f}(\hat{{\bf X}}) \overset{\triangle}{=} f({\bf D}^{\frac{1}{2}}\hat{{\bf X}}) = f({\bf X})
$ with variable $\hat{{\bf X}}{=}{\bf D}^{-\frac{1}{2}}{\bf X}$.
The gradient and the Hessian\footnote{The gradient and the Hessian of $\hat{f}$ are with respect to $\hat{{\bf X}}$.} of $\hat{f}$ are:
\begin{equation} \label{hat_relation}
\begin{aligned}
    \nabla \hat{f}(\hat{{\bf X}}) = {\bf D}^{\frac{1}{2}} \nabla f({\bf X}),\ \ 
    \nabla^2 \hat{f}(\hat{{\bf X}}) = {\bf D}^{\frac{1}{2}} {\bf H} {\bf D}^{\frac{1}{2}}.
\end{aligned}
\end{equation}

Dauphin et al.~\cite{preconditioner} showed that one SGD update for $\hat{f}$ corresponds to equation \ref{submodel_iter}. Therefore, we can compare FedSubAvg and FedAvg by comparing the characteristics of $\hat{f}$ and $f$. 

\begin{theorem}\label{cn_fedsubavg}
    Under Assumption \ref{convex_ratio}, for any ${{\bf X}}$, the upper bound of the condition number of the corresponding Hessian, $\hat{{\bf H}} \overset{\triangle}{=} \nabla^2 \hat{f}(\hat{{\bf X}})$, is always smaller than that of ${\bf H}$: 
    $$ \kappa\left( {\bf H} \right) \le \frac{\rho_2n_{\max}}{N \sigma_{\min}({\bf H})}\overset{\triangle}{=}k, \ \ 
    \kappa\left( \hat{{\bf H}} \right) \le \frac{\rho_2}{\sigma_{\min}(\hat{{\bf H}})}\overset{\triangle}{=}\hat{k}\ \ \text{with}\ \ \hat{k} \le k.$$
    Further, if ${{\bf X}}$ in a locally convex area, the condition number of $\nabla^2\hat{f}(\hat{{\bf X}})$, satisfies: 
    $$\kappa(\hat{{\bf H}}) \le \frac{\rho_2}{\left(\rho_1-\alpha(\rho_1+\rho_2)\right)},\ \ \text{with}\ \ \hat{{\bf H}} = \nabla^2 \hat{f}(\hat{{\bf X}}).$$

\begin{proof}
Please refer to Appendix \ref{app_cn_fedsubavg}.
\end{proof}
\end{theorem}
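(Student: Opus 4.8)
The plan is to control $\sigma_{\max}$ and $\sigma_{\min}$ of ${\bf H}$ and $\hat{{\bf H}}$ separately through quadratic forms, exploiting a single bookkeeping identity that converts a sum over clients into a sum over parameters weighted by their heat. Writing ${\bf v}_{S(i)}$ for the restriction of a vector ${\bf v}$ to the indices in $S(i)$, each parameter $m$ is counted exactly $n_m$ times across the clients, so $\sum_{i=1}^N \|{\bf v}_{S(i)}\|^2 = \sum_{m\in S} n_m v_m^2$. This identity, combined with Assumption~\ref{convex_ratio} (which forces $|{\bf v}_{S(i)}^\top {\bf H}_i {\bf v}_{S(i)}| \le \rho_2\|{\bf v}_{S(i)}\|^2$ for every client, in both the convex and concave branches), is the workhorse of the whole argument.

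First I would bound $\sigma_{\max}({\bf H})$. For any unit ${\bf v}$, expanding ${\bf v}^\top{\bf H}{\bf v}=\frac1N\sum_i {\bf v}_{S(i)}^\top {\bf H}_i {\bf v}_{S(i)}$ and applying the two facts above gives $|{\bf v}^\top{\bf H}{\bf v}|\le \frac{\rho_2}{N}\sum_m n_m v_m^2 \le \frac{\rho_2 n_{\max}}{N}$, hence $\sigma_{\max}({\bf H})\le \rho_2 n_{\max}/N$ and $\kappa({\bf H})\le k$. For $\hat{{\bf H}}={\bf D}^{1/2}{\bf H}{\bf D}^{1/2}$ I would repeat the computation after the substitution ${\bf w}={\bf D}^{1/2}\hat{{\bf v}}$, i.e. $w_m=\sqrt{N/n_m}\,\hat v_m$; the decisive cancellation is $n_m w_m^2 = N\hat v_m^2$, so the heat weights vanish and $\sum_m n_m w_m^2 = N$, yielding $\sigma_{\max}(\hat{{\bf H}})\le \rho_2$ and therefore $\kappa(\hat{{\bf H}})\le \hat k$.

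The comparison $\hat k\le k$ is where the two estimates must be linked, and I expect this to be the main obstacle, since it requires relating $\sigma_{\min}(\hat{{\bf H}})$ to $\sigma_{\min}({\bf H})$ rather than bounding each in isolation. I would pass to inverses: since $\hat{{\bf H}}^{-1}={\bf D}^{-1/2}{\bf H}^{-1}{\bf D}^{-1/2}$, submultiplicativity of the operator norm gives $\|\hat{{\bf H}}^{-1}\|\le \|{\bf D}^{-1/2}\|^2\,\|{\bf H}^{-1}\| = (n_{\max}/N)/\sigma_{\min}({\bf H})$, because the largest entry of ${\bf D}^{-1/2}$ is $\sqrt{n_{\max}/N}$. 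Inverting, $\sigma_{\min}(\hat{{\bf H}})\ge N\sigma_{\min}({\bf H})/n_{\max}$, which is exactly the inequality needed to turn $\hat k=\rho_2/\sigma_{\min}(\hat{{\bf H}})$ into $k=\rho_2 n_{\max}/(N\sigma_{\min}({\bf H}))$ and conclude $\hat k\le k$.

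Finally, for ${\bf X}$ in a locally convex area I would sharpen the lower bound on $\sigma_{\min}(\hat{{\bf H}})$ using the $\alpha$-part of Assumption~\ref{convex_ratio}. I would partition the clients into the convex set $\mathcal P$ (with ${\bf H}_i\succeq\rho_1 I_i$) and the concave set $\mathcal Q$ (with ${\bf H}_i\succeq -\rho_2 I_i$); for each parameter $m$ the number $|Q_m|$ of involved concave clients is below $\alpha n_m$, while $|P_m|+|Q_m|=n_m$. Bounding the quadratic form below term by term and reusing the counting identity gives $\hat{{\bf v}}^\top\hat{{\bf H}}\hat{{\bf v}} \ge \frac1N\sum_m\big(\rho_1|P_m|-\rho_2|Q_m|\big)w_m^2$; writing $\rho_1|P_m|-\rho_2|Q_m|=\rho_1 n_m-(\rho_1+\rho_2)|Q_m|>n_m\big(\rho_1-\alpha(\rho_1+\rho_2)\big)$ and again using $n_m w_m^2=N\hat v_m^2$ collapses the sum to $\rho_1-\alpha(\rho_1+\rho_2)$, which is positive by hypothesis (so $\hat{{\bf H}}$ is in fact positive definite there). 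Dividing the bound $\sigma_{\max}(\hat{{\bf H}})\le\rho_2$ by this lower bound yields the claimed $\kappa(\hat{{\bf H}})\le \rho_2/(\rho_1-\alpha(\rho_1+\rho_2))$.
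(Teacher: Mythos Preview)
Your proposal is correct. The bounds on $\sigma_{\max}({\bf H})$, $\sigma_{\max}(\hat{{\bf H}})$, and the locally convex case mirror the paper's argument exactly: your counting identity $\sum_{i}\|{\bf v}_{S(i)}\|^2=\sum_m n_m v_m^2$ is just the scalar form of the paper's matrix sandwich ${\bf M}_1\preceq{\bf H}\preceq{\bf M}_2$ with ${\bf M}_j=\frac{c_j}{N}\operatorname{diag}(n_1,\dots,n_M)$, and the cancellation $n_m w_m^2=N\hat v_m^2$ is precisely the observation that ${\bf D}^{1/2}{\bf M}_j{\bf D}^{1/2}=c_j I$.

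The one genuinely different step is how you obtain $\sigma_{\min}(\hat{{\bf H}})\ge (N/n_{\max})\,\sigma_{\min}({\bf H})$, which drives $\hat k\le k$. The paper takes a unit eigenvector $x_0$ of $\hat{{\bf H}}^2$ at its smallest eigenvalue, sets $x_1={\bf D}^{1/2}x_0$, and compares the Rayleigh quotient $x_1^\top{\bf H}^2x_1/x_1^\top x_1$ to $\lambda_{\min}(\hat{{\bf H}}^2)$ by inserting the bound ${\bf D}\succeq (N/n_{\max})I$ both in the middle of $\hat{{\bf H}}^2={\bf D}^{1/2}{\bf H}{\bf D}{\bf H}{\bf D}^{1/2}$ and in the denominator $x_0^\top{\bf D}x_0$. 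Your route via $\hat{{\bf H}}^{-1}={\bf D}^{-1/2}{\bf H}^{-1}{\bf D}^{-1/2}$ and operator-norm submultiplicativity reaches the same inequality in one line and uses only the nonsingularity of ${\bf H}$ (given by Assumption~\ref{convex_ratio}); it is cleaner and avoids the squared-Hessian bookkeeping. Conversely, the paper's Rayleigh-quotient argument never needs to form ${\bf H}^{-1}$, which could be viewed as slightly more robust if one later wants to relax the nonsingularity hypothesis. Either way, the two arguments yield identical constants.
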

Theorem \ref{cn_fedsubavg} indicates that $\hat{{\bf H}}$ always has a smaller condition number upper bound compared with ${{\bf H}}$. Further, in a locally convex area, the proposed FedSubAvg significantly reduces the condition number: $\hat{{\bf H}}$ is well-conditioned with $\kappa({\bf H})\le \Theta(1)$.
Therefore, the proposed FedSubAvg, which is an approximation to GD for objective $\hat{f}$, will maintain the efficiency of FedAvg in any case and converge much faster than FedAvg in a locally convex area of $f$.   

\subsection{Convergence Guarantee for FedSubAvg}
In this section, we analyze the convergence rate of FedSubAvg in the general non-convex case based on some standard assumptions. 
We use $\parallel \nabla \hat{f}(\hat{{\bf X}}) \parallel^2=\nabla f({\bf X})^\top {\bf D} \nabla f({\bf X})$ rather than $\parallel \nabla f({\bf X}) \parallel^2$ to characterize the convergence rate. This is because the curvature of the original global objective $f$ has been severely ``diluted''. In particular, most model parameters involve only a small number of clients, while the zero gradients contributed by many non-involved clients are inaccurately counted in when computing the global gradient\footnote{Please refer to Appendix \ref{app_th-main} for detailed explanation.}.

We next make the following assumptions about the objective functions, as well as the variance and the feasible space of the stochastic gradients.

\begin{assumption}[Smoothness]\label{smooth}
	$f_i(\cdot)$ is $L$-smooth if
	\begin{align*}
	\forall \mathbf{x},\mathbf{y}, f_i\left(\mathbf{y}\right)\le f_i\left(\mathbf{x}\right)+\left<\mathbf{y}-\mathbf{x},\nabla f_i\left(\mathbf{x}\right)\right>+\frac{L}{2}\parallel \mathbf{x}-\mathbf{y}\parallel^2.
	\end{align*}
\end{assumption}

\begin{assumption}[Bounded Variance]\label{var}
	During local training, the variance of stochastic gradients on each client is bounded by $\sigma^2$:
	$
	\forall i,t:\ \mathbb{E}_{\xi_i \sim D_i}\left[\parallel\nabla f_i\left(\mathbf{x}_i^{t}\right) - \nabla F\left(\mathbf{x}_i^{t},\xi_i \right)\parallel^2\right] \le \sigma ^2.
	$
\end{assumption}

\begin{assumption}[Bounded Gradient Norm]\label{g}
	During local training, the expected $l_2$-norm of the stochastic gradients is bounded by a constant $G^2$:
	$
	\forall i,t:\ \mathbb{E}_{\xi_i\sim D_i}\left[ \parallel\nabla F\left(\mathbf{x}_i^{t},\xi_i\right)\parallel^2\right] \le G^2.
	$
\end{assumption}

Assumption \ref{smooth} is standard. Assumptions \ref{var} and \ref{g} were widely made in the literature \cite{Stich_corr19, Stich_iclr19, yu_aaai19, yu_icml19_FL, Li_iclr20}. 
Under these assumptions, we can bound the gradient norm.

\begin{theorem}\label{th-main}
	Under Assumptions \ref{smooth}, \ref{var}, and \ref{g}, we can bound the expected average of the squared gradient norm
	\begin{equation*}
	\begin{aligned}
	\mathbb{E}\left[\frac{1}{T} \sum_{i=1}^T \nabla f({{\bf X}}^{t})^\top {\bf D} \nabla f({{\bf X}}^{t})\right] 
	\le \frac{2\left( f\left( {{\bf X}}^1 \right) - f\left( \bf{X}^* \right)\right)}{\gamma T} + \frac{2\gamma L \sigma^2}{n_{\min}} 
	+ \frac{4N\gamma^2I^2G^2L^2}{n_{\min}} 
	+ \frac{2\gamma NI^2G^2L}{n_{\min}K}.
	\end{aligned}
	\end{equation*}
	When $T > n_{\min}K^3/N$ and $\gamma=\Theta\left(\sqrt{\frac{{n_{\min}K}}{NT}}\right)$, we have
	\begin{equation*}
	\begin{aligned}
	\mathbb{E}\left[\frac{1}{T} \sum_{i=1}^T \nabla f({{\bf X}}^{t})^\top {\bf D} \nabla f({{\bf X}}^{t})\right]
	\le O\left(\sqrt{\frac{N}{n_{\min}KT}}\right)
	\end{aligned}
	\end{equation*}
\begin{proof}
Please refer to Appendix \ref{app_th-main}.
\end{proof}
\end{theorem}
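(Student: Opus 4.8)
The plan is to run the standard nonconvex descent argument, but carried out in the preconditioned geometry of $\hat{f}$, so that the target quantity $\nabla f({\bf X}^t)^\top {\bf D}\nabla f({\bf X}^t) = \|\nabla \hat{f}(\hat{\bf X}^t)\|^2$ emerges as the leading descent term. First I would introduce the virtual averaged iterate ${\bf X}^t = {\bf D}(\frac{1}{N}\sum_{i=1}^N {\bf x}_i^t)$ together with the one-step identity ${\bf X}^{t+1} = {\bf X}^t - \gamma {\bf D}\tilde{g}^t$, where $\tilde{g}^t = \frac{1}{N}\sum_i \nabla F({\bf x}_i^t,\xi_i^t)$ has conditional mean $g^t = \frac{1}{N}\sum_i \nabla f_i({\bf x}_i^t)$. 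Expanding $f$ along this step with the $L$-smoothness of Assumption \ref{smooth} and taking expectation over the local data produces a descent inequality whose linear part is $-\gamma\langle \nabla f({\bf X}^t),{\bf D}g^t\rangle$. Using the polarization identity $2\langle {\bf a},{\bf D}{\bf b}\rangle = {\bf a}^\top{\bf D}{\bf a} + {\bf b}^\top{\bf D}{\bf b} - ({\bf a}-{\bf b})^\top{\bf D}({\bf a}-{\bf b})$, I split this into the good term $-\frac{\gamma}{2}\nabla f^\top{\bf D}\nabla f$, a nonpositive term $-\frac{\gamma}{2}(g^t)^\top{\bf D}g^t$ that I retain to absorb part of the second-order term, and a client-drift term $+\frac{\gamma}{2}(\nabla f-g^t)^\top{\bf D}(\nabla f-g^t)$ that quantifies the gap in the approximation ${\bf D}\frac{1}{N}\sum_i\nabla f_i({\bf x}_i)\approx{\bf D}\nabla f({\bf X})$ of equation \ref{submodel_iter}.

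The three remaining error sources are then bounded one at a time. For the client drift I would bound $\|{\bf x}_i^t-{\bf X}^t_{S(i)}\|$ by the movement accumulated over at most $I$ local SGD steps of size at most $\gamma G$ (Assumption \ref{g}); passing through $L$-smoothness and $\lambda_{\max}({\bf D})=N/n_{\min}$ converts $(\nabla f-g^t)^\top{\bf D}(\nabla f-g^t)$ into the term of order $N\gamma^2 I^2 G^2 L^2/n_{\min}$. For the stochastic noise, the second-order term contributes a ${\bf D}$-weighted variance $\mathbb{E}[(\tilde{g}^t-g^t)^\top{\bf D}(\tilde{g}^t-g^t)]$, and here lies the one genuinely non-standard computation: since only the $n_m$ clients involving index $m$ contribute nonzero noise to coordinate $m$ of $\tilde{g}^t$, the independent per-client noises average with weight $1/N^2$ over $n_m$ terms, so the preconditioner weight $N/n_m$ is almost entirely cancelled and the whole form is controlled by $\sigma^2/n_{\min}$ rather than by the naive $\lambda_{\max}({\bf D})^2$; this gives the $\gamma L\sigma^2/n_{\min}$ term. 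Finally, the per-round random choice of $K$ clients makes the aggregated update an unbiased estimate of the full-participation update with sampling variance scaling as $1/K$; weighting a round-length displacement (of size order $\gamma I G$) by ${\bf D}$ and $L$ yields the $\gamma N I^2 G^2 L/(n_{\min}K)$ term.

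With all four contributions assembled, I would telescope the descent inequality over $t=1,\dots,T$, move the good term to the left, and multiply through by $2/(\gamma T)$ to obtain the stated four-term bound, the leading $2(f({\bf X}^1)-f({\bf X}^*))/(\gamma T)$ coming from the telescoped function gap. The asserted rate then follows by substituting $\gamma=\Theta(\sqrt{n_{\min}K/(NT)})$: the gap term and the $1/K$ term both reduce to $\Theta(\sqrt{N/(n_{\min}KT)})$, the $\sigma^2$ term is smaller by a factor $K/N\le 1$, and the $\gamma^2$ drift term is of order $\sqrt{n_{\min}K^3/(NT)}$ relative to the target, hence dominated precisely when $T>n_{\min}K^3/N$ — exactly the stated threshold.

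I expect the main obstacle to be the coordinate-wise bookkeeping of the preconditioner ${\bf D}$: extracting the genuine $n_{\min}$ scaling (rather than $\lambda_{\max}({\bf D})$ or its square) in the noise and sampling terms requires carefully matching each $N/n_m$ weight against the number $n_m$ of clients that actually contribute to coordinate $m$, using the sparsity pattern $\{S(i)\}$. Coupled with this is reconciling the per-local-iteration virtual descent with the per-round client sampling, so that the powers of $I$ and the single factor $1/K$ land correctly, and then verifying that the four error terms balance to one clean $O(\sqrt{N/(n_{\min}KT)})$ rate under the threshold on $T$.
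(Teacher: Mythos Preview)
Your overall strategy — virtual averaged iterate, descent inequality, polarization split of the linear term into the desired $\nabla f^\top{\bf D}\nabla f$, a retained negative piece, and a drift piece, then bounding drift/noise/sampling separately and telescoping — is exactly the paper's. The drift bound you sketch (via $\lambda_{\max}({\bf D})=N/n_{\min}$ and the $4\gamma^2I^2G^2$ client-drift lemma) lands on the correct $4N\gamma^2I^2G^2L^2/n_{\min}$ term, and your final balancing of the four terms under $\gamma=\Theta(\sqrt{n_{\min}K/(NT)})$ and $T>n_{\min}K^3/N$ is right.

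The gap is in the second-order (quadratic) term. You write that you expand $f$ at ${\bf X}^t$ via the $L$-smoothness of Assumption~\ref{smooth}, and then assert that the noise contribution is the ${\bf D}$-weighted variance $\mathbb{E}[(\tilde g-g)^\top{\bf D}(\tilde g-g)]$. But global $L$-smoothness of $f$ gives $\tfrac{L}{2}\|{\bf X}^{t+1}-{\bf X}^t\|^2=\tfrac{L\gamma^2}{2}\|{\bf D}\tilde g^t\|^2$, which is $(\tilde g)^\top{\bf D}^2\tilde g$, not $(\tilde g)^\top{\bf D}\tilde g$. With a ${\bf D}^2$ weighting your sparsity cancellation only removes one factor of $1/n_m$, and the noise and client-sampling terms come out a factor $N/n_{\min}$ too large, so you cannot reach $2\gamma L\sigma^2/n_{\min}$ and $2\gamma NI^2G^2L/(n_{\min}K)$. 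The retained negative term $-\tfrac{\gamma}{2}(g^t)^\top{\bf D}g^t$ is only ${\bf D}$-weighted, so it cannot absorb a ${\bf D}^2$-weighted quadratic without imposing $\gamma L N/n_{\min}\le 1$, which again spoils the constants.

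The paper avoids this by applying $L$-smoothness of each $f_i$ at the \emph{submodel} point ${\bf X}_{S(i)}^t$ and only then averaging over $i$. The per-client quadratic is $\|{\bf X}^{t+1}_{S(i)}-{\bf X}^t_{S(i)}\|^2=\gamma^2\|{\bf U}_i\sum_j{\bf g}_j^t\|^2$ with ${\bf U}={\bf D}/N$, and the key counting identity
\[
\frac{1}{N}\sum_{i=1}^N \bigl\|{\bf U}_i\,v\bigr\|^2
=\frac{1}{N}\sum_{m}\,n_m\cdot\frac{v_m^2}{n_m^2}
=\frac{1}{N}\,v^\top{\bf U}\,v
\]
collapses one power of ${\bf U}$, yielding exactly the ${\bf D}$-weighted quadratic you need. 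An equivalent fix, if you prefer to stay global, is to first show $\hat f$ is $L$-smooth (the same identity gives $v^\top{\bf D}^{1/2}{\bf H}{\bf D}^{1/2}v\le \frac{L}{N}\sum_i\|{\bf D}_i^{1/2}v_i\|^2=L\|v\|^2$) and then expand $\hat f$ instead of $f$; this produces $\tfrac{L\gamma^2}{2}(\tilde g)^\top{\bf D}\tilde g$ directly. Either way, the missing ingredient is this per-coordinate counting over $\{S(i)\}$, without which the stated constants are not attainable.
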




\section{Evaluation}

In this section, we extensively evaluate the performance of FedSubAvg over several datasets with different feature heat dispersion. 

\subsection{Experimental Setups}


We choose the following tasks and models for evaluation. The statistics about clients and samples, as well as the feature heat dispersion are shown in Table \ref{data-stats}.


{\bf LR for Rating Classification.} We perform a rating classification task over the MovieLens-1M dataset~\cite{movielens}, which contains {6,040 clients}, 3,883 movies, and 1,000,209 samples. We preprocess the dataset to be suitable for binary classification. In particular, the original user ratings of movies range from 0 to 5. We label the samples with the ratings of 4 and 5 to be positive and label the rest to be negative. We randomly select 20\% of the samples as the test dataset and leave the remaining 80\% as the training dataset for FL. The task is to predict whether users will rate a given movie to be positive based on the user's gender and age and on the movie ID. We first encode gender, age, movie, gender cross movie, and age cross movie, based on the one-hot encoder. Next, the features are input into a logistic regression (LR) model to predict the label.
    
\textbf{LSTM for Sentiment Analysis.} We perform a text sentiment classification task on the Sentiment140 dataset~\cite{sent140}, which comprises 1,600,000 tweets collected from 659,775 twitter users. In this task, we use a two-layer long short-term memory (LSTM) network with 100 hidden units and an embedding layer as the binary classifier, where the embedding dimension is set to 25. We naturally partition this dataset by letting each Twitter account correspond to a client. We keep only the clients who hold more than 40 samples and get {1,473 clients} in total. We randomly select 20\% of the samples as the test dataset and leave the remaining 80\% as the training set for FL.

\textbf{DIN for CTR Prediction.} We perform a click-through rate (CTR) prediction task on the Amazon electronics dataset and an Alibaba industrial dataset. The Amazon dataset contains 1,689,188 reviews contributed by 192,403 users for 63,001 items. The ratings range from 0 to 5. We label the samples with the rating of 5 to be positive and label the rest to be negative. We naturally partition this dataset by letting each Amazon user correspond to a client. For each client, we take the user ID, the historical sequence of positively rated product as the input to predict the label. We keep only the clients who hold more than 40 samples and get {1,870 clients} in total. We select samples with the timestamps more than 1,385,000,000 as the test dataset and leave the remaining samples as the training set. The Alibaba dataset is built from 30-day impression and click logs of {49,023 Taobao clients} from June 15, 2019 to July 15, 2019. For a certain Taobao user, we leverage its click behaviors in previous 14 days as historical data to predict its click and non-click behaviors in the following 1 day. We leave out the behaviors within the last 1 day as the target items of the test set while putting the other samples into the training set. For both datasets, we take the deployed deep interest network (DIN)~\cite{din} in Alibaba as the model, where the embedding dimension is set to 18.


\begin{table}[!t]
	\caption{Statistics of four datasets.}\label{data-stats}
	\begin{center}
		\resizebox{0.95\columnwidth}{!}{
			\begin{tabular}{ccccc}
				\toprule
				 & \#\ Clients & \#\ Samples & \tabincell{c}{\#\ Samples Per Client} & \tabincell{c}{Feature Heat Dispersion} \\
				\cmidrule{2-5}
                MovieLens & 6,040 & 1,000,209 & 165 & 4,331 \\
                Sent140 & 1,473 & 79,050 & 54 & 1,451 \\
                Amazon & 1,870 & 123,147 & 66 & 232 \\
                Alibaba & 49,023 & 16,864,641 & 344 & 3,142 \\
				\bottomrule
			\end{tabular}
		}		
	\end{center}
\end{table}

We use the following five baselines for comparison.
\begin{itemize}[leftmargin=*]
\item {\bf FedAvg} averages the local model updates from the participating clients to update the global model.

\item {\bf FedProx} is the first variant of FedAvg. 
The main difference from FedAvg is that FedProx adds a quadratic proximal term to explicitly limit the local model updates.
We set the coefficient of the proximal term to 0.01.

\item {\bf Scaffold} is another important variant of FedAvg. The key difference from FedAvg is that each client keeps a variate to control the local model updates in Scaffold. However, the size of the control variate is equal to size of the full model, which is prohibitively inefficient for the learning tasks with large-scale full models. 
Therefore, for the CTR prediction tasks, we make an approximation to Scaffold. In particular, the cloud server performs the controlled update step every round by weighted averaging the historical updates. 
Please refer to Appendix \ref{app:scaffold:approx} for details. 

\item {\bf FedAdam} is also an important FL algorithm, which adopts an adaptive optimizer~\cite{afo}. We make more comparisons and discussions in Appendix \ref{cmp_adam}. 

\item {\bf CentralSGD} runs the standard SGD algorithm to train the global model using the whole dataset, sets the number of iterations in each round as $I$, and sets the batch size to the sum of the selected clients' local batch sizes in each round. This ensures the same amount of data per round with the distributed algorithms.
\end{itemize}

Regarding the experimental settings, we choose mini-batch SGD as the optimization algorithm. For the tasks of rating classification and sentiment analysis, $K = 50$ clients are randomly chosen per round as default; and for the CTR prediction tasks, $K$ is set to 100 as default. The settings of the other hyperparameters are deferred to Appendix~\ref{app:hyper:para}. 

\subsection{Evaluation Results}

We first present the results of FedSubAvg and the baselines under the default $K$. We then vary $K$ to show its impact. 


\textbf{FedSubAvg vs. Baselines.} For the rating classification on the MovieLens dataset and the sentiment analysis on the Sent140 dataset, we plot the train loss in Figure \ref{lr-main} and Figure \ref{lstm-main}; and for the CTR prediction on Amazon and the Alibaba dataset, we plot the test area under the curve (AUC)\footnote{The positive and negative samples in CTR datasets (especially the Alibaba dataset) are extremely uneven. Even if all the samples are predicted to be negative (or positive), the train loss is very small. As a result, the train losses of different algorithms are hard to distinguish, and we choose to plot the test AUCs instead.} in Figure \ref{din-amazon} and \ref{din-main}. In addition, we measure the convergence rates of different algorithms by counting the communication rounds to reach a target train loss or test AUC. We set the target loss in the rating classification (resp., the sentiment analysis) to be the minimum loss of CentralSGD, which is 0.325 (resp., 0.380); and we set the target test AUC to be 0.6 in two CTR prediction tasks. The results are listed in Table \ref{alg-round}.

\begin{figure*}[!t]
\centering
\subfigure[MovieLens]{
\includegraphics[width=0.24\columnwidth]{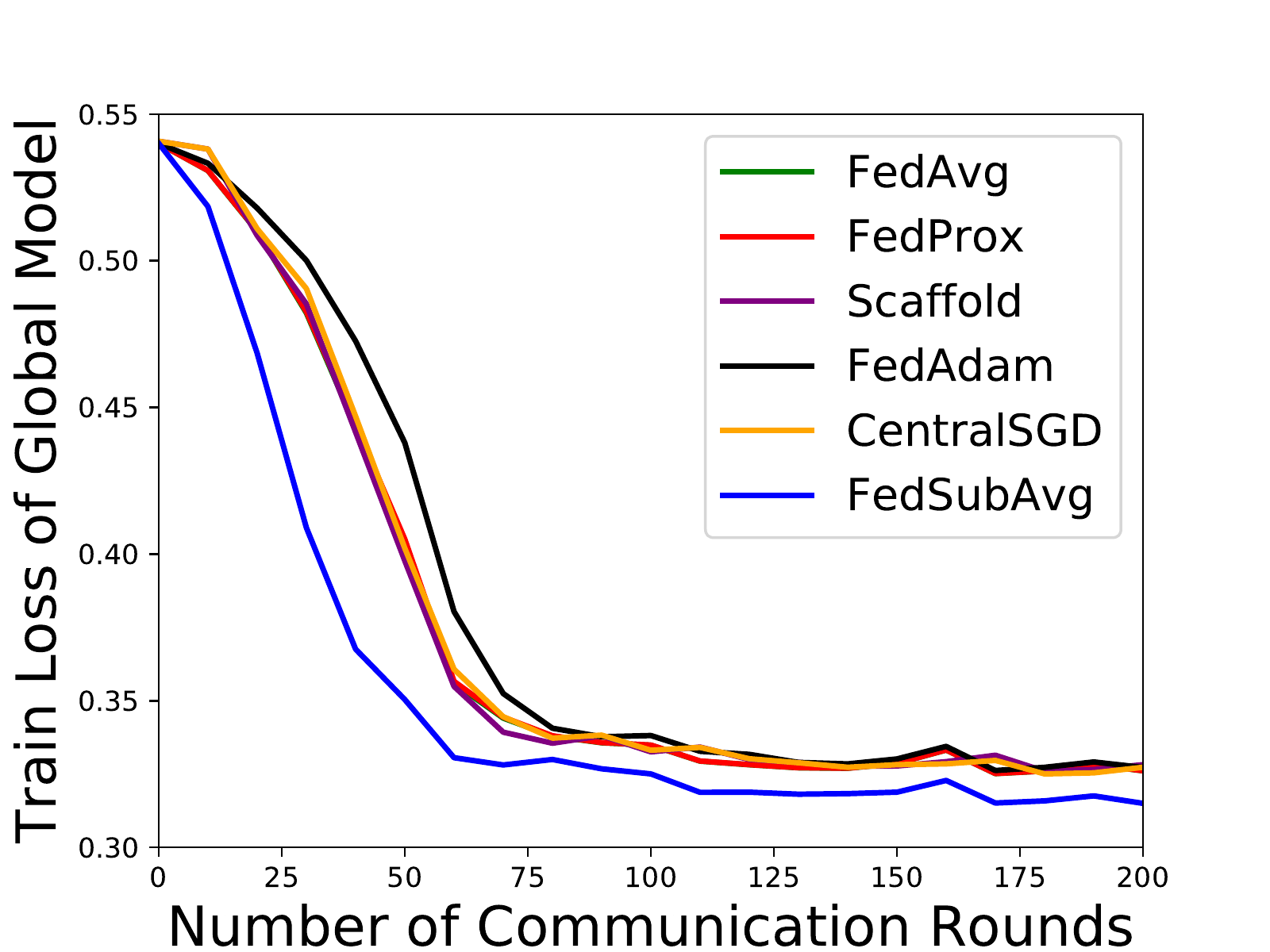}
\label{lr-main}
}
\hspace{-3mm}
\subfigure[Sentiment140]{
\includegraphics[width=0.24\columnwidth]{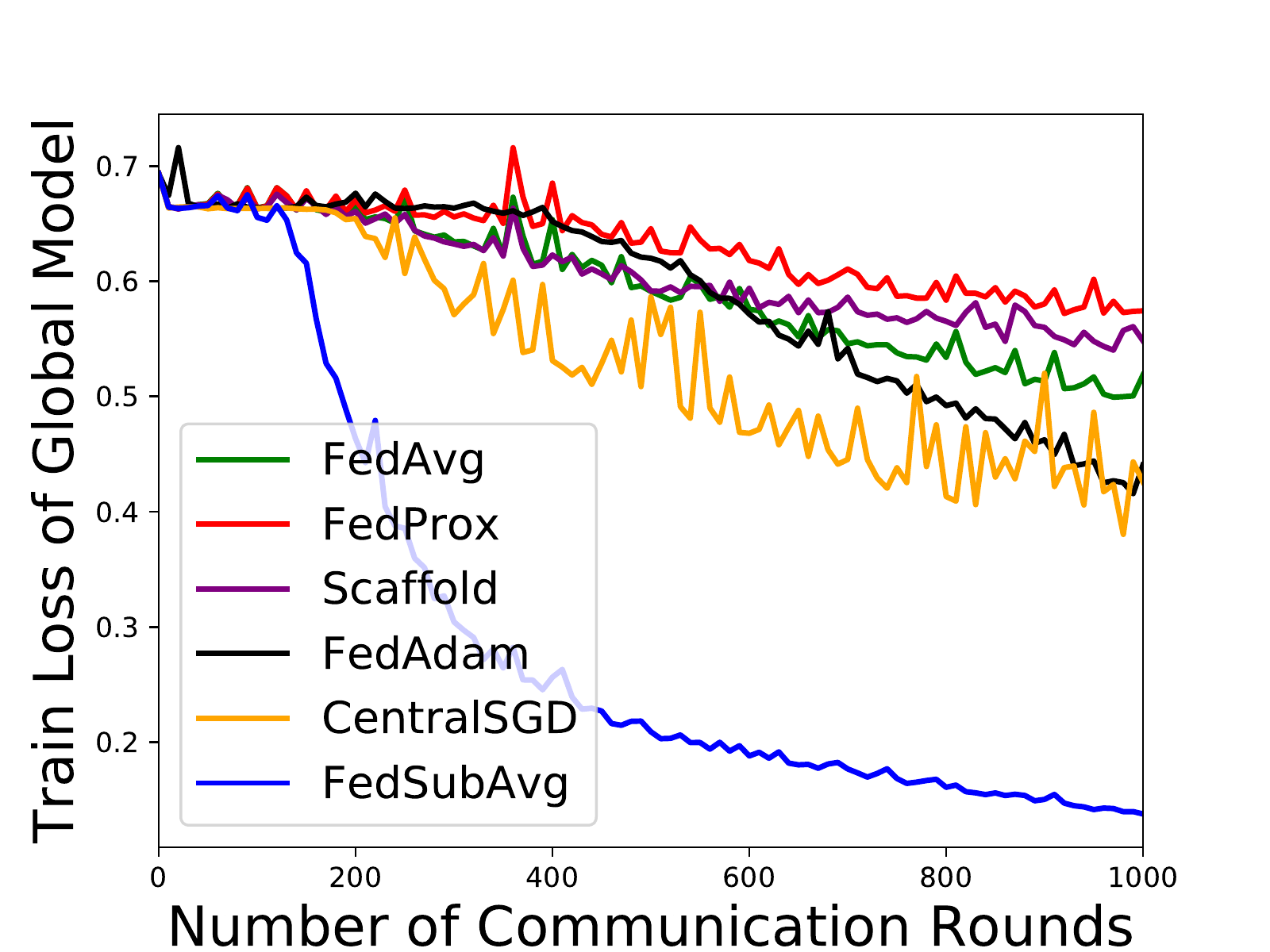}
\label{lstm-main}
}
\hspace{-3mm}
\subfigure[Amazon]{
\includegraphics[width=0.24\columnwidth]{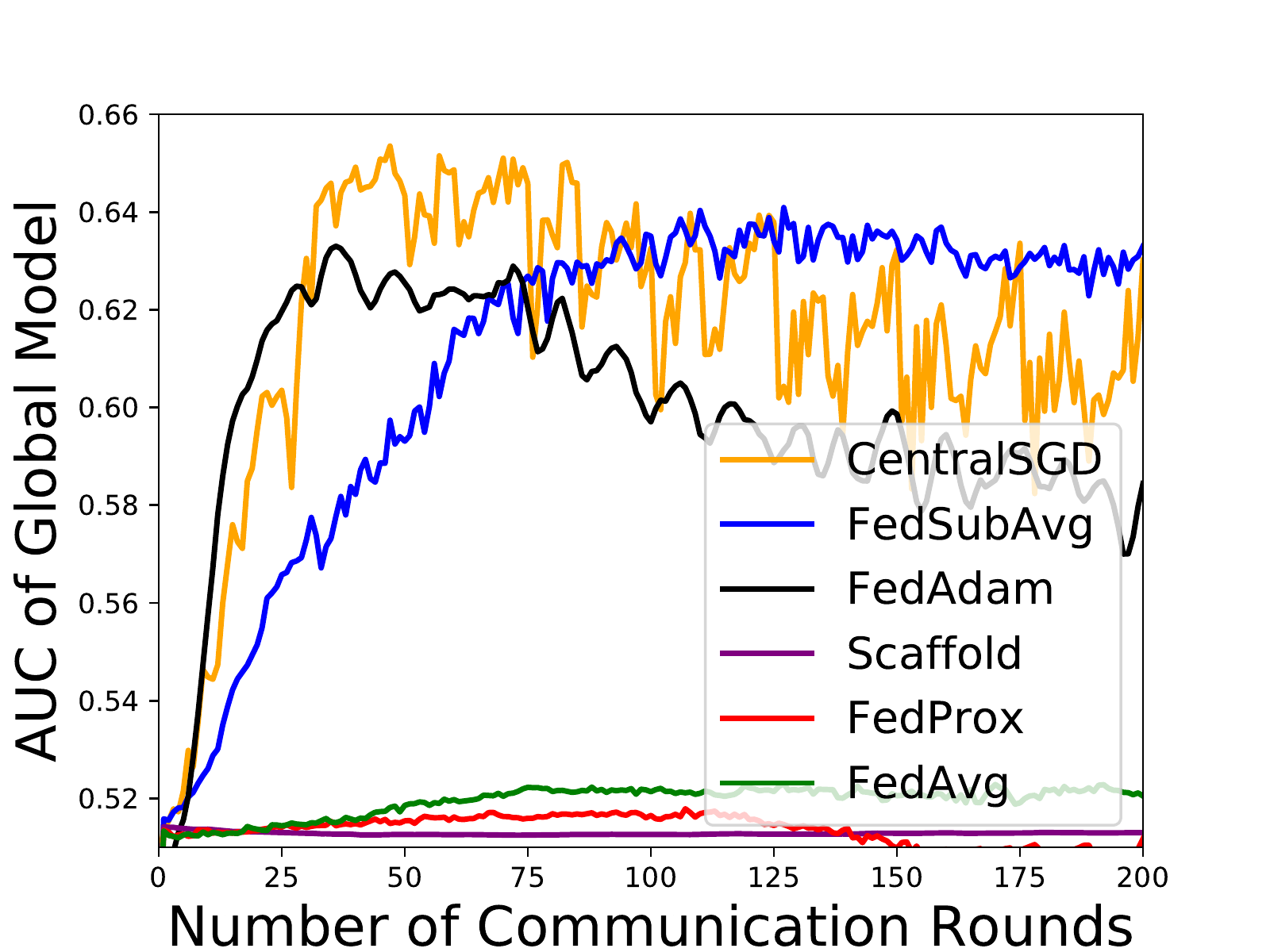}
\label{din-amazon}
}
\hspace{-3mm}
\subfigure[Alibaba]{
\includegraphics[width=0.24\columnwidth]{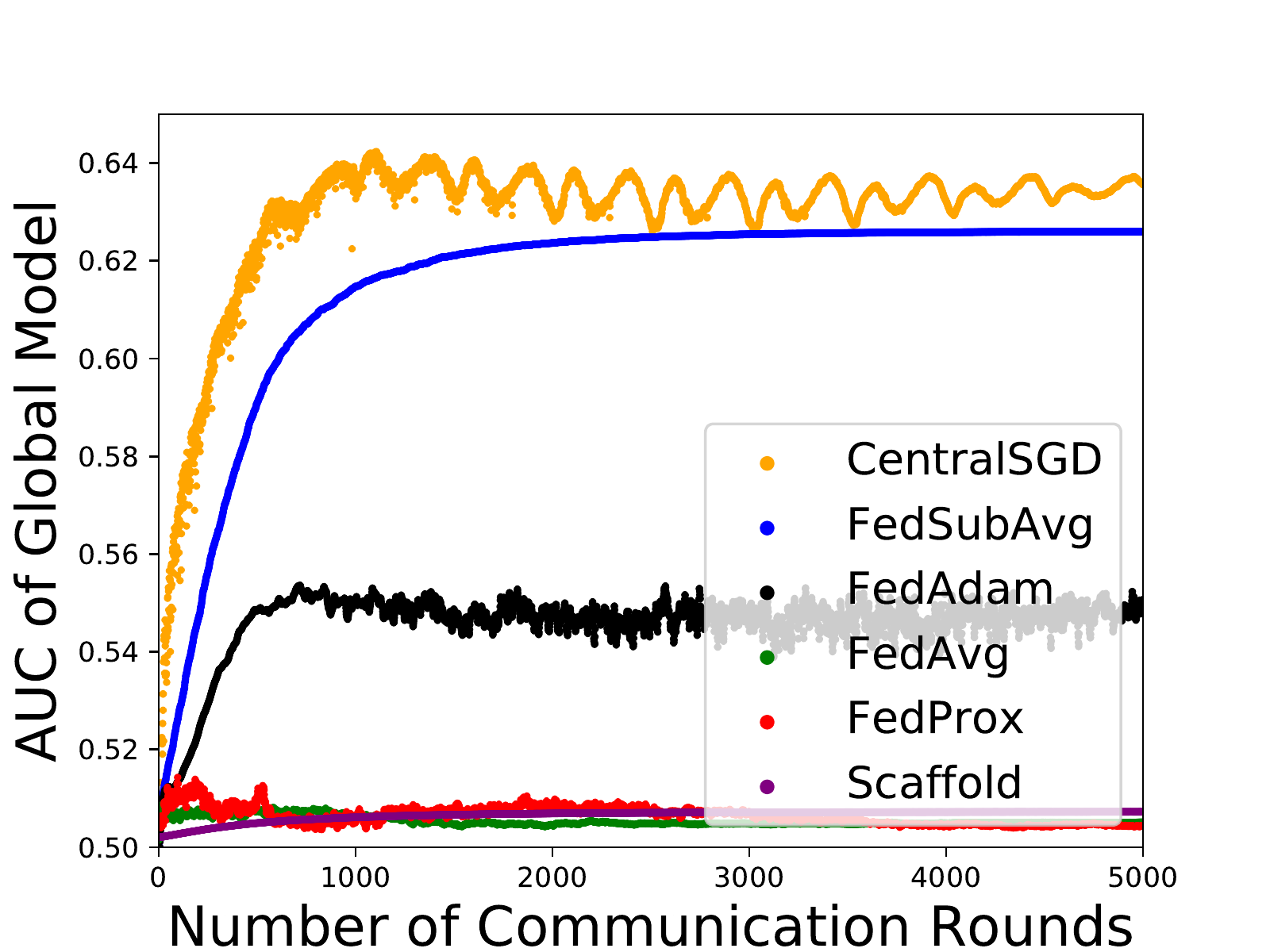}
\label{din-main}
}
\caption{Train losses or test AUCs of FedSubAvg and the baselines on different datasets.}\label{cmp-main}
\end{figure*}

\begin{table}[!t]
    \caption{Number of communication rounds to reach the target train loss or test AUC with different algorithms. \#+ indicates that the target was not reached even after \# rounds.}
    \label{alg-round}
    \centering
    \resizebox{0.95\columnwidth}{!}{
    \begin{tabular}[t]{lcccccc}
        \toprule
         & CentralSGD & FedAvg & FedProx & Scaffold & FedAdam & FedSubAvg \\
        \cmidrule{2-7}
        MovieLens & 180 & 170 & 170 & 180 & 170 & \textbf{100} \\
        Sent140   & 980 & 1,000+ & 1,000+ & 1,000+ & 1000+ & \textbf{260} \\
        Amazon   & {20} & 200+ & 200+ & 200+ & \textbf{16} & 53 \\
        Alibaba   & \textbf{265} & 5,000+ & 5,000+ & 5,000+ & 5000+ & \textbf{610} \\
        \bottomrule
    \end{tabular}
    }
\end{table}

From Figure \ref{cmp-main} and Table \ref{alg-round}, we observe that FedSubAvg consistently outperforms FedAvg and its variants. Specifically, (1) in the rating classification, FedSubAvg always has the smallest train loss during FL and reaches the target at the 100-th communication round, $1.7\times$ faster than FedAvg, FedProx, and FedAdam, and $1.8\times$ faster than Scaffold and CentralSGD; (2) in the sentiment analysis, FedSubAvg still has the smallest train loss during FL and reaches the target at the 260-th round, $3.77\times$ faster than CentralSGD, while FedAvg, FedProx, Scaffold, and FedAdam cannot reach the target even in 1,000 rounds; (3) in the CTR prediction on the Amazon dataset, FedSubAvg achieves the highest AUC while FedAdam achieves the target AUC first among all the FL algorithms. FedSubAvg achieves the highest AUC of 0.641 in 200 rounds, decreasing by 0.013 in terms of AUC compared with the ideal CentralSGD. In contrast, FedAvg achieves the highest AUC of 0.523, FedProx achieves the highest AUC of 0.519, Scaffold achieves the highest AUC of 0.514, and FedAdam achieves the highest AUC of 0.633. In addition, FedSubAvg reaches the target test AUC at the 53-th round, while FedAvg, FedProx, and Scaffold cannot reach the target even after 200 rounds; and (4) in the CTR prediction on the Alibaba dataset, FedSubAvg still outperforms all the other FL algorithms. FedSubAvg achieves the highest AUC of 0.626 in 5,000 rounds, decreasing by 0.016 compared with CentralSGD. In contrast, FedProx achieves the highest AUC of 0.514, FedAvg achieves the highest AUC of 0.509, Scaffold achieves the highest AUC of 0.507, and FedAdam achieves the highest AUC of 0.554. Moreover, FedSubAvg reaches the target test AUC at the 610-th round, whereas the other FL algorithms cannot reach the target even after 5,000 rounds.

\begin{figure*}[!t]
\centering
\subfigure[MovieLens]{
\includegraphics[width=0.24\columnwidth]{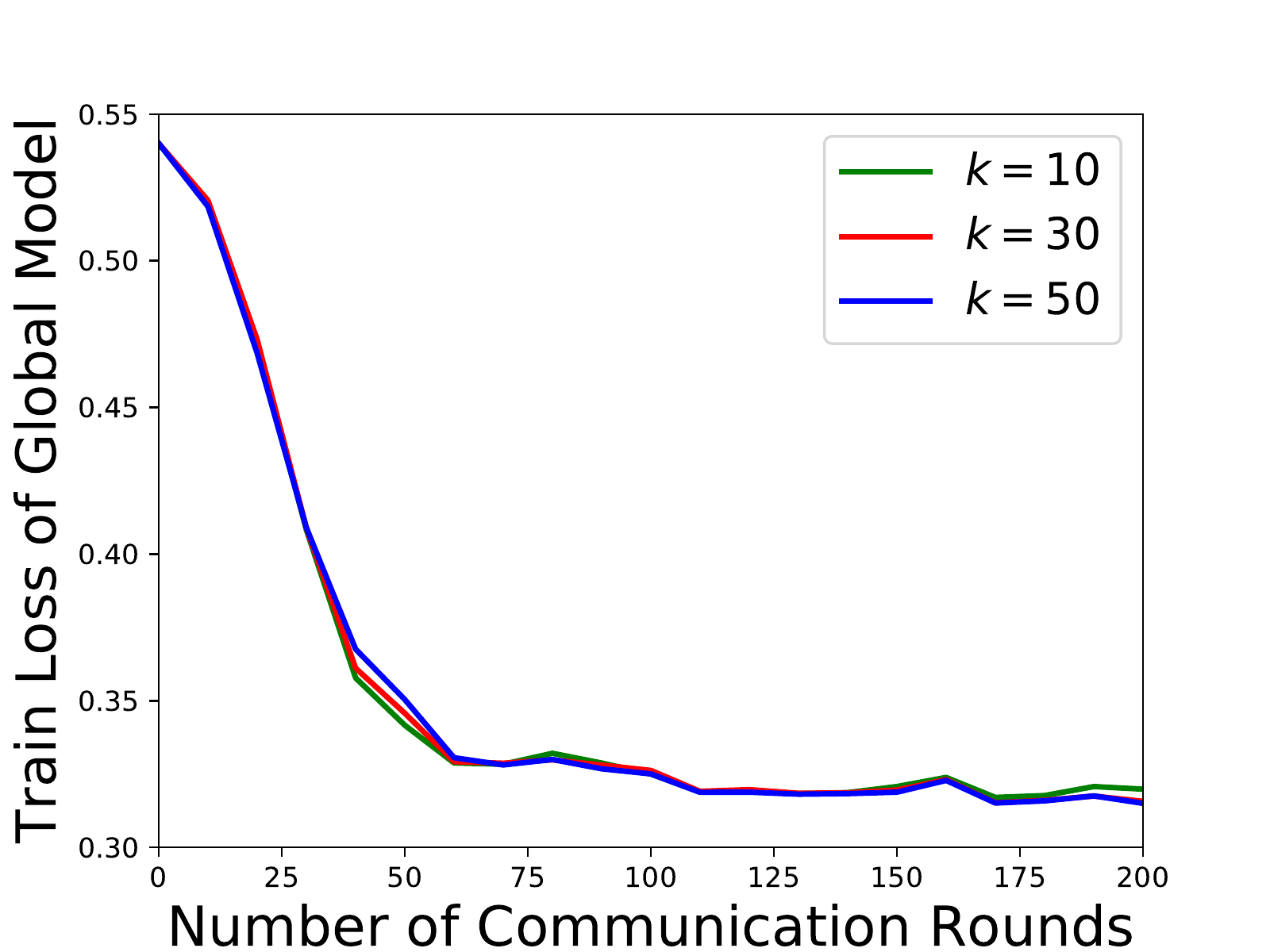}
\label{lr-K}
}
\hspace{-3mm}
\subfigure[Sentiment140]{
\includegraphics[width=0.24\columnwidth]{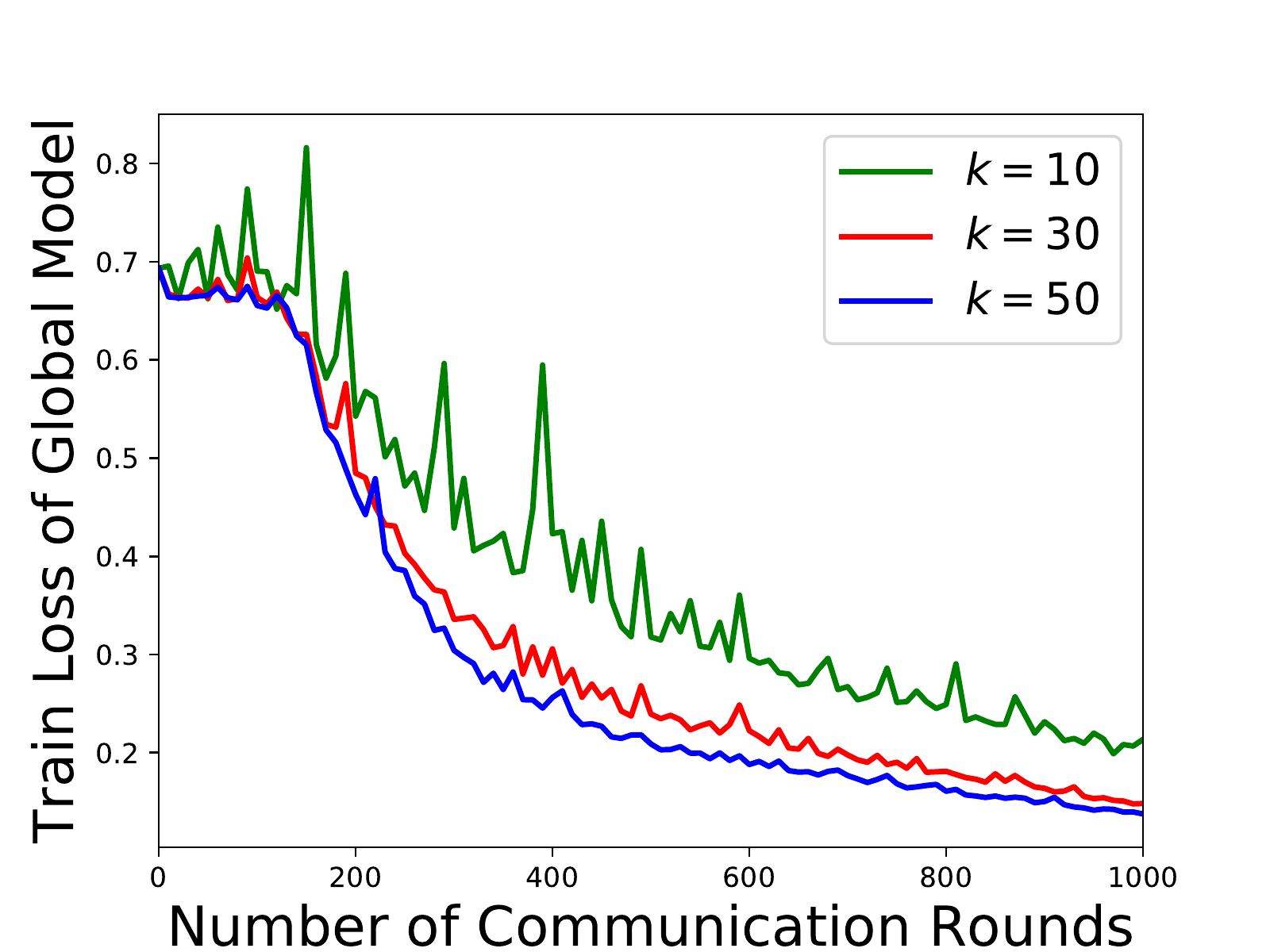}
\label{lstm-K}
}
\hspace{-3mm}
\subfigure[Amazon]{
\includegraphics[width=0.24\columnwidth]{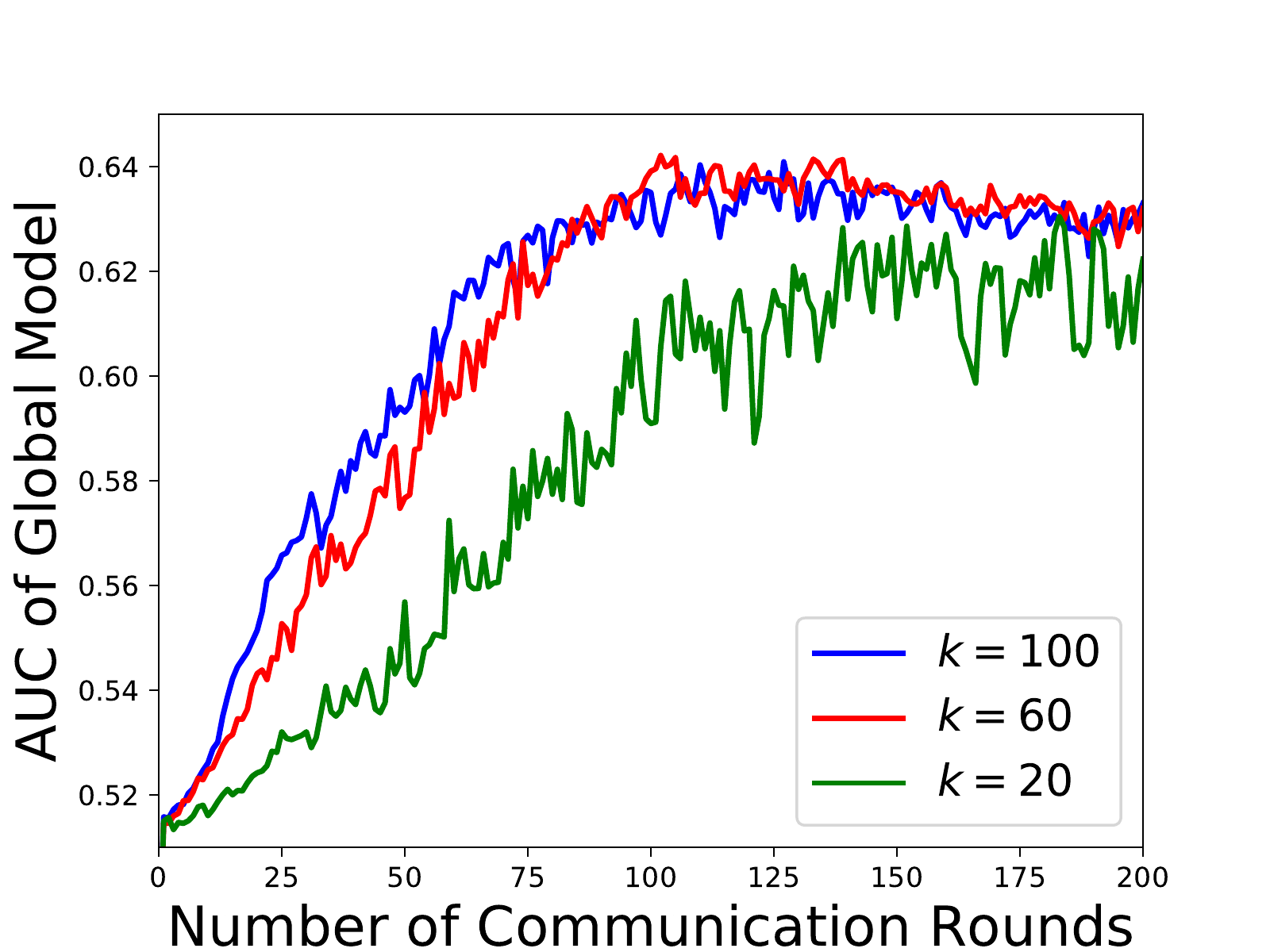}
\label{din-amazon-K}
}
\hspace{-3mm}
\subfigure[Alibaba]{
\includegraphics[width=0.24\columnwidth]{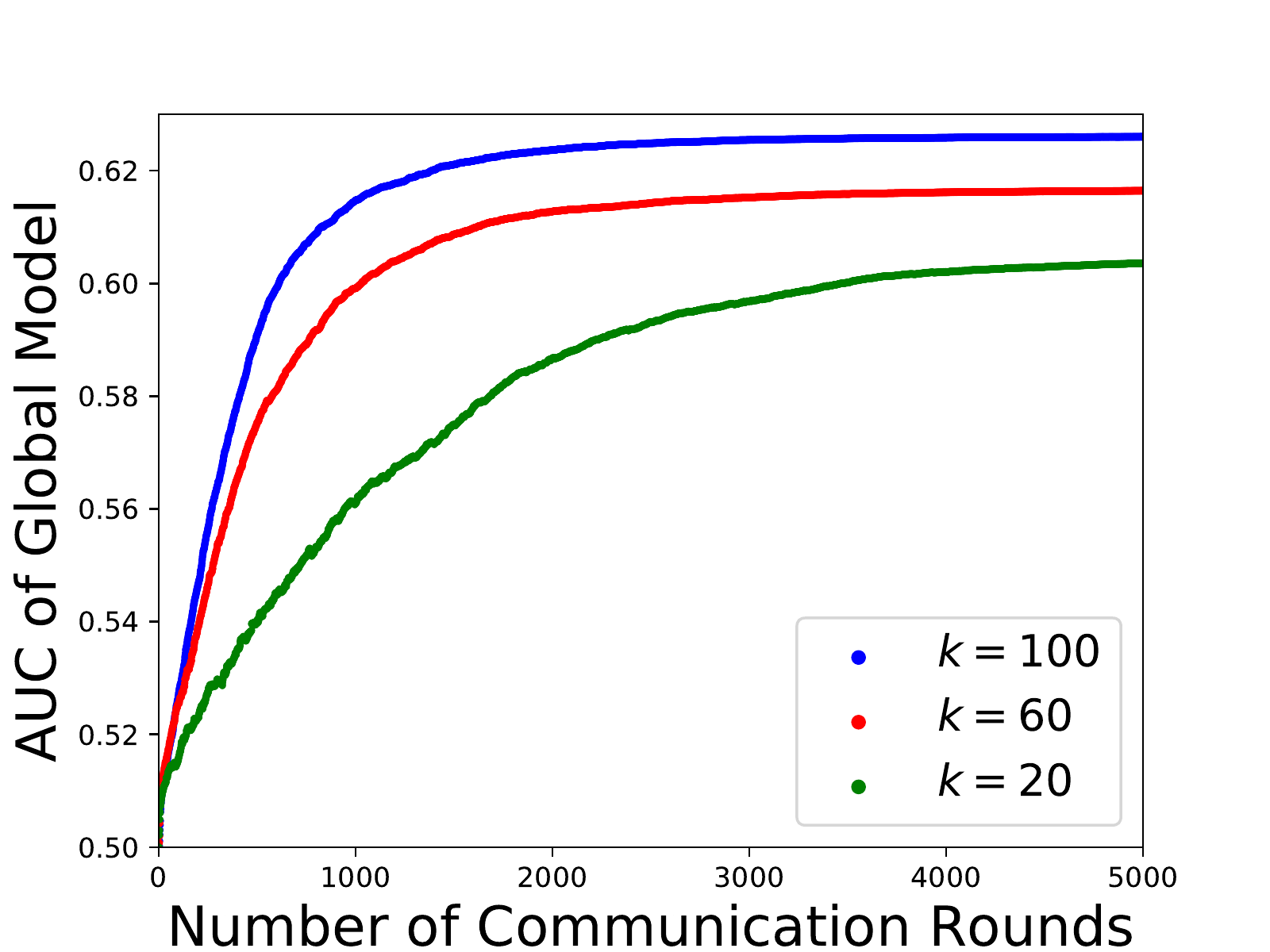}
\label{din-K}
}
\caption{Train losses or test AUCs of FedSubAvg on different datasets with the varying number of selected clients per round $K$.}\label{cmp-K}
\end{figure*}

\begin{table}[!t]
    \caption{Number of communication rounds for FedSubAvg to reach the target train loss or test AUC with the varying number of selected clients per round $K$.}
    \label{k-round}
    \centering
    \resizebox{0.95\columnwidth}{!}{
    \begin{tabular}[t]{cccc|ccc|ccc|ccc}
        \toprule
            & \multicolumn{3}{c}{MovieLens} & \multicolumn{3}{c}{Sent140} & \multicolumn{3}{c}{Amazon} & \multicolumn{3}{c}{Alibaba}\\
        \cmidrule{2-13}
        $K$ & 10 & 30 & 50 & 10 & 30 & 50 & 20 & 60 & 100 & 20 & 60 & 100\\
        Rounds & 100 & 110 & 100 & 420 & 270 & 260 & 95 & 57 & 53 & 3,469 & 1,030 & 610\\
        \bottomrule
    \end{tabular}
    }
\end{table}

\textbf{Impact of participating clients.} We next evaluate the impact of the number of selected clients $K$ per round on FedSubAvg. We set $K=$10, 30, and 50 for the rating classification and the sentiment analysis, and set $K =$ 20, 60, and 100 for the CTR prediction. We plot the results in Figure \ref{cmp-K} and record the minimum number of rounds to reach the target in Table \ref{k-round}. We observe that FedSubAvg with a larger $K$ generally converges much faster, which validates the speedup with respect to $K$. (1) In the rating classification, FedSubAvg with different $K$ behaves somewhat uniformly. This is because a larger $K$ improves the convergence by reducing the variance of the global model update, while for such a simple convex optimization scenario, the variance is already small enough with $K=10$; (2) in the sentiment analysis, FedSubAvg with $K=50$ reaches the target train loss $1.62\times$ faster than FedSubAvg with $K=10$; (3) in the CTR prediction on the Amazon dataset , FedSubAvg with $K=100$ reaches the target test AUC $1.79\times$ faster than FedSubAvg with $K=20$; and (4) in the CTR prediction on the Alibaba dataset, FedSubAvg with $K=100$ reaches the target test AUC $5.69\times$ faster than FedSubAvg with $K=20$. 

\section{Conclusion}
In this work, we studied federated submodel optimization over non-i.i.d. data with feature heat dispersion. We proposed FedSubAvg, which ensures the expectation of the global update of each model parameter to be equal to the average of the local updates of the clients who involve it. We also proved that FedSubAvg works as a preconditioner to improve collaborative optimization and thoroughly analyzed the convergence. Empirical studies demonstrated the remarkable superiority of FedSubAvg over FedAvg and its variants.

\section*{Acknowledgment}
\thanks{This work was supported in part by National Key R\&D Program of China No. 2019YFB2102200, in part by China NSF grant No. 62202296, 62025204, 62072303, 61972252, 61972254, and 61832005, in part by Alibaba Group through Alibaba Innovation Research (AIR) Program, and in part by Tencent Rhino Bird Key Research Project. The opinions, findings, conclusions, and recommendations expressed in this paper are those of the authors and do not necessarily reflect the views of the funding agencies or the government.}

\clearpage

\bibliography{fedsubavg}
\bibliographystyle{unsrt}



\newpage
\appendix

\section{Proof of Theorem \ref{cn_fedavg}}\label{app_cn_fedavg}
\subsection{Proof Sketch}
We first introduce the following lemma:
\begin{lemma}\label{matrix_sum}
For matrices ${\bf A_1}, {\bf A}_2, {\bf B}_1, {\bf B}_2 \in \mathbb{M}_n$, if ${\bf A}_1 \preceq {\bf B}_1$ and ${\bf A}_2 \preceq {\bf B}_2$, then we have
${\bf A}_1+{\bf A_2} \preceq {\bf B}_1+{\bf B}_2$.
\end{lemma}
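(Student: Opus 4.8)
The plan is to argue directly from the definition of the Loewner (semidefinite) partial order. Recall that for symmetric matrices $\mathbf{A} \preceq \mathbf{B}$ means precisely that $\mathbf{B} - \mathbf{A}$ is positive semidefinite, that is, $\mathbf{v}^\top (\mathbf{B} - \mathbf{A}) \mathbf{v} \ge 0$ for every $\mathbf{v} \in \mathbb{R}^n$. Thus the two hypotheses ${\bf A}_1 \preceq {\bf B}_1$ and ${\bf A}_2 \preceq {\bf B}_2$ translate into $\mathbf{B}_1 - \mathbf{A}_1 \succeq 0$ and $\mathbf{B}_2 - \mathbf{A}_2 \succeq 0$, and the goal reduces to showing that the sum of these two positive semidefinite matrices is again positive semidefinite.

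First I would fix an arbitrary vector $\mathbf{v} \in \mathbb{R}^n$ and evaluate the quadratic form associated with $(\mathbf{B}_1 + \mathbf{B}_2) - (\mathbf{A}_1 + \mathbf{A}_2)$. By linearity of the quadratic form in the underlying matrix, this splits as $\mathbf{v}^\top (\mathbf{B}_1 - \mathbf{A}_1) \mathbf{v} + \mathbf{v}^\top (\mathbf{B}_2 - \mathbf{A}_2) \mathbf{v}$, and each summand is nonnegative by the two hypotheses, so the total is nonnegative. Since $\mathbf{v}$ was arbitrary, $(\mathbf{B}_1 + \mathbf{B}_2) - (\mathbf{A}_1 + \mathbf{A}_2) \succeq 0$, which is exactly the desired conclusion ${\bf A}_1 + {\bf A}_2 \preceq {\bf B}_1 + {\bf B}_2$.

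There is essentially no obstacle here: the result is just the statement that the cone of positive semidefinite matrices is closed under addition, phrased in order-theoretic language, and the only thing one actually checks is the additivity of the quadratic form. The one point worth recording explicitly, so that the application in Theorem \ref{cn_fedavg} stays clean, is that the lemma extends by a trivial induction to any finite sum; this is what licenses summing the $N$ per-client bounds of the form $\rho_1 I_i \preceq \nabla^2 f_i \preceq \rho_2 I_i$ and aggregating them into two-sided bounds on the global Hessian ${\bf H} = \frac{1}{N}\sum_{i=1}^N {\bf H}_i$.
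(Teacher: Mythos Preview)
Your argument is correct and is essentially the same as the paper's: both proofs fix an arbitrary vector, use the quadratic-form characterization of $\preceq$, and add the two resulting scalar inequalities. The only cosmetic difference is that the paper writes the hypotheses as $x^\top {\bf A}_k x \le x^\top {\bf B}_k x$ rather than $({\bf B}_k - {\bf A}_k) \succeq 0$, which is of course equivalent.
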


By Lemma \ref{matrix_sum}, we have
\begin{equation}\label{h_min}
    \begin{aligned}
        {\bf H} &= \frac{1}{N}\sum_{i=1}^N {\bf H}_i \succeq (\rho_1 - \alpha(\rho_1 + \rho_2)) \cdot \frac{1}{N}\sum_{i=1}^N I_i \\ &\overset{(a)}{=} \frac{(\rho_1 - \alpha(\rho_1 + \rho_2))}{N} \begin{bmatrix} n_1 & 0 & \cdots & 0 \\ 0 & n_2 & \cdots & 0 \\ \cdots & \cdots & \cdots & \cdots \\ 0 & 0 & \cdots & n_M \end{bmatrix} \overset{\triangle}{=} {\bf M}_1,   
    \end{aligned}
\end{equation}

where (a) follows from the align operation when doing summation over local Hessians.

Similarly, we have
\begin{align}\label{h_max}
    {\bf H} = \frac{1}{N} \sum_{i=1}^N {\bf H}_i \preceq \frac{1}{N} \sum_{i=1}^N \rho_2 I_i = \frac{\rho_2}{N} \begin{bmatrix} n_1 & 0 & \cdots & 0 \\ 0 & n_2 & \cdots & 0 \\ \cdots & \cdots & \cdots & \cdots \\ 0 & 0 & \cdots & n_M \end{bmatrix} \overset{\triangle}{=} {\bf M}_2.
\end{align}
Thus, ${\bf H} \preceq {\bf M}_2$.
We next introduce Lemma \ref{eigen_value} about eigenvalue.
\begin{lemma}\label{eigen_value}
For matrices ${\bf A}, {\bf B} \in \mathbb{M}_n$, if ${\bf A} \preceq {\bf B}$, then we have $\lambda_{\min}({\bf A}) \le \lambda_{\min}({\bf B})$ and $\lambda_{\max}({\bf A}) \le \lambda_{\max}({\bf B})$, where $\lambda_{\max}(\cdot)$ (resp., $\lambda_{\min}(\cdot)$) denotes taking the maximum (resp., minimum) eigenvalue..
\end{lemma}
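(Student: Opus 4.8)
The plan is to prove Lemma \ref{eigen_value} through the variational (Rayleigh-quotient) characterization of the extreme eigenvalues, which is precisely the device that converts the Loewner-order hypothesis ${\bf A} \preceq {\bf B}$ into an ordering of eigenvalues. Recall that ${\bf A} \preceq {\bf B}$ means ${\bf B}-{\bf A}$ is positive semidefinite, equivalently $x^\top {\bf A} x \le x^\top {\bf B} x$ for every $x \in \mathbb{R}^n$; this pointwise inequality between the two quadratic forms is the only consequence of the hypothesis I will need.

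First I would invoke the Courant--Fischer theorem for symmetric matrices, which writes the smallest and largest eigenvalues as the extrema of the Rayleigh quotient over the unit sphere:
$$\lambda_{\min}({\bf M}) = \min_{\|x\|=1} x^\top {\bf M} x, \qquad \lambda_{\max}({\bf M}) = \max_{\|x\|=1} x^\top {\bf M} x.$$
For the minimum-eigenvalue claim, I would take a unit vector $x^*$ attaining $\lambda_{\min}({\bf B})$ and chain $\lambda_{\min}({\bf A}) \le (x^*)^\top {\bf A} x^* \le (x^*)^\top {\bf B} x^* = \lambda_{\min}({\bf B})$, where the first step holds because $\lambda_{\min}({\bf A})$ is a minimum over \emph{all} unit vectors and the second is the hypothesis ${\bf A} \preceq {\bf B}$. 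Symmetrically, taking a unit vector $y^*$ attaining $\lambda_{\max}({\bf A})$ gives $\lambda_{\max}({\bf A}) = (y^*)^\top {\bf A} y^* \le (y^*)^\top {\bf B} y^* \le \lambda_{\max}({\bf B})$, with the last step because $\lambda_{\max}({\bf B})$ is a maximum. Both chains are instances of the elementary fact that $\phi \le \psi$ pointwise implies $\min \phi \le \min \psi$ and $\max \phi \le \max \psi$, applied to the Rayleigh quotients of ${\bf A}$ and ${\bf B}$.

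I expect no genuine obstacle, since the statement is the two-extreme-eigenvalue special case of Weyl's monotonicity theorem; restricting to $\lambda_{\min}$ and $\lambda_{\max}$ lets the one-line Rayleigh-quotient argument above replace the full min--max-over-subspaces machinery. The only point needing a little care is the implicit assumption that ${\bf A}$ and ${\bf B}$ are symmetric — which is exactly the setting of the lemma's intended use, where it is applied to the symmetric Hessians ${\bf H}_i$ and their nonnegative combinations in \eqref{h_min} and \eqref{h_max} — so that both the Loewner comparison and the eigenvalue ordering are well defined.
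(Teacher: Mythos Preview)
Your proposal is correct and takes essentially the same approach as the paper: both use the Rayleigh-quotient characterization $\lambda_{\max}({\bf P})=\max_{x\neq 0}\frac{x^\top{\bf P}x}{x^\top x}$, $\lambda_{\min}({\bf P})=\min_{x\neq 0}\frac{x^\top{\bf P}x}{x^\top x}$ and then pass the pointwise inequality $x^\top{\bf A}x\le x^\top{\bf B}x$ through the $\max$ and $\min$. Your explicit choice of optimizing vectors $x^*,y^*$ is just a spelled-out version of the paper's one-line ``$\max\le\max$, $\min\le\min$'' step, and your remark about the implicit symmetry assumption is apt.
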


By Lemma \ref{eigen_value}, we have $\lambda_{\max}({\bf H}) \ge \lambda_{\max}({\bf M}_1) = {n_{\max}(\rho_1 - \alpha(\rho_1 + \rho_2))}/{N}$, and $\lambda_{\min}({\bf H}) \le \lambda_{\min}({\bf M}_2) =  {n_{\min}\rho_2}/{N}$. Further, for the positive-definite Hessian ${\bf H}$, we have $\sigma_{\max}({\bf H})=\lambda_{\max}({\bf H})$ and $\sigma_{\min}({\bf H})=\lambda_{\min}({\bf H})$. Therefore, we have the lower bound of the condition number of ${\bf H}$:
\begin{align}
    \kappa\left({\bf H}\right) \ge \frac{{n_{\max}(\rho_1 - \alpha(\rho_1 + \rho_2))}/{N}}{{n_{\min}\rho_2}/{N}} = \frac{n_{\max}(\rho_1 - \alpha(\rho_1 + \rho_2))}{n_{\min}\rho_2} = \Theta\left(\frac{n_{\max}}{n_{\min}}\right).
\end{align}

\subsection{Proof of Lemmas}
\begin{proof}[Proof of Lemma \ref{matrix_sum}]
If ${\bf A_1} \preceq {\bf B}_1 $ and ${\bf A_2} \preceq {\bf B}_2 $, for any $x\in\mathbb{R}^n$, we have
\begin{align}
    x^\top {\bf A}_1 x \le x^\top {\bf B}_1 x, \ \ \ x^\top {\bf A}_2 x \le x^\top {\bf B}_2 x.
\end{align}
Thus, $\forall x\in\mathbb{R}^n$, we have 
\begin{align}
    x^\top \left({\bf A}_1+{\bf A}_2\right) x \le x^\top \left({\bf B}_1+{\bf B}_2\right) x,
\end{align}
and we further have $({\bf A}_1+{\bf A}_2) \preceq ({\bf B}_1+{\bf B}_2)$.
\end{proof}

\begin{proof}[Proof of Lemma \ref{eigen_value}]
For any matrix ${\bf P}\in\mathbb{M}_n$ with ${\bf P}^\top = {\bf P}$, we have 
\begin{align}
    \lambda_{\max}({\bf P}) = \max_{x\in\mathbb{R}^n, x \not= {\bf 0}}\{\frac{x^\top{\bf P}x}{x^\top x}\}, \ \ \ \lambda_{\min}({\bf P}) = \min_{x\in\mathbb{R}^n, x \not= {\bf 0}}\{\frac{x^\top{\bf P}x}{x^\top x}\}.
\end{align}
For two matrices ${\bf A}, {\bf B}$ with ${\bf A} \preceq {\bf B}$, we have
\begin{align}
    \frac{x^\top{\bf A}x}{x^\top x} \le \frac{x^\top{\bf B}x}{x^\top x},
\end{align}
for any vector $x\in\mathbb{R}^n(x \not= {\bf 0})$. Therefore, we have
\begin{align}
    \max_{x\in\mathbb{R}^n, x \not= {\bf 0}}\{\frac{x^\top{\bf A}x}{x^\top x}\} \le \max_{x\in\mathbb{R}^n, x \not= {\bf 0}}\{\frac{x^\top{\bf B}x}{x^\top x}\}, \ \ \ \min_{x\in\mathbb{R}^n, x \not= {\bf 0}}\{\frac{x^\top{\bf A}x}{x^\top x}\} \le \min_{x\in\mathbb{R}^n, x \not= {\bf 0}}\{\frac{x^\top{\bf B}x}{x^\top x}\}.
\end{align}
So we have $\lambda_{\max}({\bf A}) \le \lambda_{\max}({\bf B})$ and $\lambda_{\min}({\bf A}) \le \lambda_{\min}({\bf B})$.
\end{proof}

\section{Proof of Theorem \ref{cn_fedsubavg}}\label{app_cn_fedsubavg}
The Hessian of $\hat{f}$ is $\hat{{\bf H}} \overset{\triangle}{=} {\bf D}^{\frac{1}{2}}{\bf H}{\bf D}^{\frac{1}{2}}$. 

We first consider the condition number of $\hat{{\bf H}}$ when ${\bf X}$ is in a locally convex area. By equations \ref{h_min} and \ref{h_max}, we have ${\bf M}_1 \preceq {\bf H} \preceq {\bf M}_2$. Rearranging the terms yields ${\bf H}-{\bf M}_1\succeq 0$ and ${\bf M}_2-{\bf H}\succeq 0$. Therefore, for any vector $x\in\mathbb{R}^M$, we have
\begin{equation}\label{submodel_hessian}
  \begin{aligned}
    x^\top\left(\hat{{\bf H}}-{\bf D}^{\frac{1}{2}}{\bf M}_1{\bf D}^{\frac{1}{2}}\right)x = x^\top {\bf D}^{\frac{1}{2}}\left({\bf H} - {\bf M}_1\right){\bf D}^{\frac{1}{2}} x = \left({\bf D}^{\frac{1}{2}}x\right)^\top\left({\bf H} - {\bf M}_1\right)\left({\bf D}^{\frac{1}{2}} x\right)\ge 0, \\ 
    x^\top\left({\bf D}^{\frac{1}{2}}{\bf M}_2{\bf D}^{\frac{1}{2}}-\hat{{\bf H}}\right)x =x^\top {\bf D}^{\frac{1}{2}}\left({\bf M}_2 - {\bf H}\right){\bf D}^{\frac{1}{2}} x = \left({\bf D}^{\frac{1}{2}}x\right)^\top\left({\bf M}_2 - {\bf H}\right)\left({\bf D}^{\frac{1}{2}} x\right)\ge 0.
\end{aligned}  
\end{equation}
So we have
\begin{align}
    {\bf D}^{\frac{1}{2}}{\bf M}_1{\bf D}^{\frac{1}{2}} \preceq \hat{{\bf H}} \preceq {\bf D}^{\frac{1}{2}}{\bf M}_2{\bf D}^{\frac{1}{2}}.
\end{align}
By Lemma \ref{eigen_value}, we have
\begin{equation}
\begin{aligned}
    &\lambda_{\min}\left(\hat{{\bf H}}\right) \ge \lambda_{\min}\left( {\bf D}^{\frac{1}{2}}{\bf M}_1{\bf D}^{\frac{1}{2}} \right) = (\rho_1 - \alpha(\rho_1 + \rho_2)), \\
    &\lambda_{\max}\left(\hat{{\bf H}}\right) \le \lambda_{\max}\left({\bf D}^{\frac{1}{2}}{\bf M}_2{\bf D}^{\frac{1}{2}}\right) = \rho_2.
\end{aligned}
\end{equation}
Thus, the condition number of $\hat{{\bf H}}$ satisfies $\kappa(\hat{{\bf H}}) \le \rho_2 / (\rho_1 - \alpha(\rho_1 + \rho_2)) = \Theta(1)$.

We next consider the minimum singular value of ${\bf H}$ and $\hat{{\bf H}}$ with $\sigma_{\min}({\bf H})=\sqrt{\lambda_{\min} ({{\bf H}}^2)}$ and $\sigma_{\min}(\hat{{\bf H}})=\sqrt{\lambda_{\min} (\hat{{\bf H}}^2)}$ in any case. Let $x_0\in \mathbb{R}^M(x_0 \not= {\bf 0})$ such that $\lambda_{\min}(\hat{{\bf H}}^2)={x_0^\top \hat{{\bf H}}^2 x_0}$. Let $x_1 = {\bf D}^{\frac{1}{2}}x_0$. Then, we have
\begin{align}
    \lambda_{\min}\left({{\bf H}^2}\right) 
    \le {\frac{x_1^\top {{\bf H}}^2 x_1} {x_1^\top x_1}}
    = \frac{x_0^\top {\bf D}^{\frac{1}{2}}{\bf H}^2{\bf D}^{\frac{1}{2}} x_0}{x_0^\top {\bf D} x_0}
    \overset{(a)}{\le} \frac{(n_{\max} /N)\lambda_{\min}(\hat{{\bf H}}^2)}{(N/n_{\max})x_0^\top x_0} = \left(\frac{n_{\max}}{N} \right)^2 \lambda_{\min} \left(\hat{{\bf H}}^2\right),
\end{align}
where (a) follows from $\lambda_{\min}(\hat{{\bf H}}^2)={x_0^\top \hat{{\bf H}}^2 x_0}=x_0^\top {\bf D}^{\frac{1}{2}} {\bf H}{\bf D}{\bf H}{\bf D}^{\frac{1}{2}}x_0\ge \frac{N}{n_{\max}} x_0^\top {\bf D}^{\frac{1}{2}} {\bf H}^2{\bf D}^{\frac{1}{2}}x_0$. Therefore, we have $\sigma_{\min}({\bf H})\le \frac{n_{\max}}{N}\sigma_{\min}(\hat{{\bf H}})$.

Under Assumption \ref{convex_ratio} and equation \ref{h_max}, we have ${\bf H}\preceq {\bf M}_2$. Similarly, we can obtain ${\bf H} \succeq -{\bf M}_2$. By Lemma \ref{eigen_value}, we further have 
\begin{align}
\lambda_{\max}({\bf  H}) \le\lambda_{\max}({\bf M}_2)=\frac{n_{\max}\rho_2}{N}, \ \ \ \lambda_{\min}({\bf  H}) \ge \lambda_{\min}(-{\bf M}_2)=-\frac{n_{\max}\rho_2}{N}.
\end{align}
Therefore, we have $\sigma_{\max}({\bf H}) \le \frac{n_{\max}\rho_2}{N}$. Similar to equation \ref{submodel_hessian}, we have
\begin{equation}
\begin{aligned}
&\lambda_{\max}\left(\hat{{\bf H}}\right) = \lambda_{\max}\left({\bf D}^{\frac{1}{2}}{{\bf H}}{\bf D}^{\frac{1}{2}}\right) \le \lambda_{\max}\left( {\bf D}^{\frac{1}{2}}{{\bf M}}_2{\bf D}^{\frac{1}{2}} \right) = \rho_2,\\
&\lambda_{\min}\left(\hat{{\bf H}}\right) = \lambda_{\min}\left({\bf D}^{\frac{1}{2}}{{\bf H}}{\bf D}^{\frac{1}{2}}\right) \ge \lambda_{\min}\left( -{\bf D}^{\frac{1}{2}}{{\bf M}}_2{\bf D}^{\frac{1}{2}} \right) = -\rho_2.
\end{aligned}
\end{equation}

Thus, we have $\sigma_{\max}(\hat{{\bf H}})\le \rho_2$, and the upper bound of the eigenvalues of ${\bf H}$ and $\hat{{\bf H}}$ are:

\begin{equation}
\kappa\left( {\bf H} \right) \le \frac{\rho_2n_{\max}}{N \sigma_{\min}({\bf H})}\overset{\triangle}{=}{k}, \ \ 
\kappa\left( \hat{{\bf H}} \right) \le \frac{\rho_2}{\sigma_{\min}(\hat{{\bf H}})}\overset{\triangle}{=}\hat{k}.
\end{equation}
With $\sigma_{\min}({\bf H})\le \frac{n_{\max}}{N}\sigma_{\min}(\hat{{\bf H}})$, we have $\hat{k} \le k$.

\clearpage
\section{Proof of Theorem \ref{th-main}}\label{app_th-main}
\subsection{$\parallel \nabla \hat{f}(\hat{{\bf X}})\parallel^2$ vs. $\parallel \nabla {f}({{\bf X}})\parallel^2$}
In this section, we explain why we use $\parallel \nabla \hat{f}(\hat{{\bf X}})\parallel^2$ rather than $\parallel \nabla {f}({{\bf X}})\parallel^2$ to characterize the convergence rate. In general, it is hard to develop a convergence rate for objective values. However, when the global model is in a locally convex area of $f$, we can obtain the relationship between the gradient and the local optimum. 
We first show the relationship between $\parallel \nabla {f}({{\bf X}})\parallel^2$ and the local optimum in the scenarios \textbf{without parameter heat dispersion} (i.e.,  each client’s local data involve the full global model and $n_m=N,\ \ \forall m\in S$). 
\begin{theorem}\label{f_convergence}
    When there is no parameter heat dispersion, and ${\bf X}$ is in a $\mu$-strongly convex area of $f_i$ for each $i$, if $\parallel \nabla {f}({{\bf X}})\parallel^2\le \epsilon$, we have $f({\bf X}) \le f({\bf X}_{local}^*) + \frac{\epsilon}{2\mu}$, where ${\bf X}_{local}^*$ is the local optimum.
\begin{proof}
    For any model ${\bf Y}$ and ${\bf X}$ the area, by the $\mu$ local convexity of $f_i$, we have
\begin{equation}\label{f_convex}
\begin{aligned}
f_i\left( {\bf Y}\right) \ge f_i\left( {\bf X} \right) + \left<{\bf Y}-{\bf X}, \nabla f_i\left({\bf X}\right)\right> + \frac{\mu}{2}\parallel {\bf Y}-{\bf X}\parallel^2.
\end{aligned}
\end{equation}
Summing the right-hand of the inequality over $i=\{1,2,\cdots,N\}$ and dividing it by $N$ yields
\begin{equation}
\begin{aligned}
T\left({\bf Y}\right) \overset{\triangle}{=} f\left( {\bf X} \right) + \left<{\bf Y}-{\bf X}, \nabla f\left({\bf X}\right)\right> + \frac{\mu}{2}\parallel {\bf Y}-{\bf X}\parallel^2,
\end{aligned}
\end{equation}
where $T({\bf Y})$ is a quadratic function of ${\bf Y}$, we have
\begin{equation}
    \begin{aligned}
        T({\bf Y}) \overset{(a)}{\ge} f({\bf X}) - \frac{1}{2\mu} \parallel \nabla f({\bf X}) \parallel^2,
    \end{aligned}
\end{equation}
where (a) equals when ${\bf Y}={\bf X} - \frac{1}{\mu}\nabla f({\bf X})$, so we have $f({\bf X}) \le f({\bf Y}) + \frac{1}{2\mu}\parallel \nabla f({\bf X}) \parallel^2$ for any ${\bf Y}$. Therefore, if $\parallel \nabla {f}({{\bf X}})\parallel^2\le \epsilon$, we have $f({\bf X}) \le f({\bf X}_{local}^*) + \frac{1}{2\mu}\parallel \nabla f({\bf X}) \parallel^2\le f({\bf X}_{local}^*) + \frac{\epsilon}{2\mu}$.
\end{proof}
\end{theorem}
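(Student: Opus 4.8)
The plan is to derive the standard Polyak--\L{}ojasiewicz inequality from $\mu$-strong convexity and then evaluate it at the local optimum. Because there is no parameter heat dispersion, every client involves all $M$ parameters, so each $f_i$ is a function of the full model ${\bf X}$ and its $\mu$-strong convexity inequality applies directly (no submodel alignment is required). First I would write, for each $i$ and any ${\bf Y}$ in the convex area,
\begin{equation*}
f_i({\bf Y}) \ge f_i({\bf X}) + \left<{\bf Y}-{\bf X}, \nabla f_i({\bf X})\right> + \frac{\mu}{2}\parallel {\bf Y}-{\bf X}\parallel^2,
\end{equation*}
then average these $N$ inequalities and use $f=\frac{1}{N}\sum_i f_i$ to transfer the quadratic lower bound to the global objective:
\begin{equation*}
f({\bf Y}) \ge T({\bf Y}) \overset{\triangle}{=} f({\bf X}) + \left<{\bf Y}-{\bf X}, \nabla f({\bf X})\right> + \frac{\mu}{2}\parallel {\bf Y}-{\bf X}\parallel^2.
\end{equation*}

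The key step is to minimize the quadratic surrogate $T({\bf Y})$. Completing the square (equivalently, setting $\nabla_{\bf Y} T = {\bf 0}$) shows the unconstrained minimizer is ${\bf Y}={\bf X}-\frac{1}{\mu}\nabla f({\bf X})$, giving the global minimum value $\min_{\bf Y} T({\bf Y}) = f({\bf X})-\frac{1}{2\mu}\parallel \nabla f({\bf X})\parallel^2$. I would deliberately avoid needing this minimizer to lie in the convex area: I only evaluate the established inequality $f({\bf Y})\ge T({\bf Y})$ at the admissible point ${\bf Y}={\bf X}_{local}^*$, and then lower-bound $T({\bf X}_{local}^*)$ by the global minimum of the quadratic, which chains into
\begin{equation*}
f({\bf X}_{local}^*) \ge T({\bf X}_{local}^*) \ge f({\bf X})-\frac{1}{2\mu}\parallel \nabla f({\bf X})\parallel^2.
\end{equation*}

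Rearranging yields $f({\bf X}) \le f({\bf X}_{local}^*)+\frac{1}{2\mu}\parallel \nabla f({\bf X})\parallel^2$, and substituting the hypothesis $\parallel \nabla f({\bf X})\parallel^2 \le \epsilon$ gives the claim. The argument is essentially routine convex analysis, so I do not expect a genuine obstacle; the one point that demands care is the aggregation step, which is precisely where the \emph{no parameter heat dispersion} hypothesis is used. Without it, each $f_i$ would be strongly convex only along its own submodel coordinates, and after averaging, the global objective could be flat (the averaged Hessian rank-deficient) along cold directions, so the uniform $\mu$-strong convexity needed to produce the quadratic lower bound on $f$ would fail. This is also the conceptual reason the analysis later switches to the preconditioned norm $\nabla f({\bf X})^\top {\bf D}\nabla f({\bf X})$ once dispersion is present.
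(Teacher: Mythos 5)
Your proposal is correct and follows essentially the same route as the paper's own proof: average the per-client $\mu$-strong convexity inequalities to get the quadratic lower bound $T({\bf Y})$ on $f({\bf Y})$, lower-bound $T$ by its unconstrained minimum $f({\bf X})-\frac{1}{2\mu}\parallel\nabla f({\bf X})\parallel^2$, and evaluate at ${\bf Y}={\bf X}_{local}^*$. Your explicit remark that only ${\bf X}_{local}^*$ (not the quadratic's minimizer ${\bf X}-\frac{1}{\mu}\nabla f({\bf X})$) needs to be admissible is a small point of care the paper leaves implicit, but the argument is otherwise identical.
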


We next show the relationship of $\parallel \nabla {f}({{\bf X}})\parallel^2$, $\parallel \nabla \hat{f}(\hat{{\bf X}})\parallel^2$, and the local optimum when \textbf{considering parameter heat dispersion}. 

\begin{theorem}\label{hat_f_convergence}
    Under Assumption \ref{convex_ratio}, when ${\bf X}$ is in a locally convex area of $f$, if $\parallel \nabla \hat{f}(\hat{{\bf X}})\parallel^2 \le \epsilon$, we have $f({\bf X}) \le f({\bf X}_{local}^*) + \frac{\epsilon}{2\mu_0}$, where $\mu_0=\rho_1-\alpha(\rho_1+\rho_2)>0$. 
    However, if $\parallel \nabla {f}({{\bf X}})\parallel^2\le \epsilon$, we can only guarantee that $f({\bf X}) \le f({\bf X}_{local}^*) + \frac{N\epsilon}{2n_{\min}\mu_0}$.
    
\begin{proof}
We use ${\bf X}_i$ to denote ${\bf X}_{S(i)}$ for short. For any static model ${\bf Y}$ and ${\bf X}$,  by the $\mu_i$ local convexity of $f_i$, we have
\begin{equation}\label{hat_f_convex}
\begin{aligned}
f_i\left( {\bf Y}_{i}\right) \ge f_i\left( {\bf X}_{i} \right) + \left<{\bf Y}_{i}-{\bf X}_{i}, \nabla f_i\left({\bf X}_{i}\right)\right> + \frac{\mu_i}{2}\parallel {\bf Y}_{i}-{\bf X}_{i}\parallel^2, 
\end{aligned}
\end{equation}
where $\mu_i$ denotes the minimum eigenvalue of ${\bf H}_i$. By Assumption \ref{convex_ratio}, we have $\sum_{m\in S(i)}\mu_i \ge n_m(\rho_1-\alpha(\rho_1+\rho_2))=n_m\mu_0$ for any parameter $m$.
Summing the right-hand of the inequality over $i=\{1,2,\cdots,N\}$ yields
\begin{equation}
\begin{aligned}
T\left({\bf Y}\right) \overset{\triangle}{=} \sum_{i=1}^N \left[f_i\left( {\bf X}_{i} \right) + \left<{\bf Y}_{i}-{\bf X}_{i}, \nabla f_i\left({\bf X}_{i}\right)\right> + \frac{\mu_i}{2}\parallel {\bf Y}_{i}-{\bf X}_{i}\parallel^2\right],
\end{aligned}
\end{equation}
where $T({\bf Y})$ is a quadratic function of ${\bf Y}$:
\begin{equation}
\begin{aligned}
T\left({\bf Y}\right) 
=& \sum_{i=1}^N f_i\left( {\bf X}_{i} \right) + \sum_{m=1}^M\sum_{m\in {S}(i)}\left[\frac{\mu_i}{2} \left(y_m - x_m \right)^2 + \frac{\partial f_i}{\partial x_m}(y_m-x_m)\right]\\
\ge&  \sum_{i=1}^N f_i\left( {\bf X}_{i} \right) + \sum_{m=1}^M \left[\frac{\mu_0n_m}{2}  \left(y_m - x_m \right)^2 + \sum_{m\in {S}(i)}\frac{\partial f_i}{\partial x_m}(y_m-x_m)\right]\\
\overset{(a)}{\ge}& \sum_{i=1}^N f_i\left( {\bf X}_{i} \right) - \frac{1}{2\mu_0} \sum_{m=1}^M \frac{1}{n_m}\left(\sum_{m\in{S}(i)}\frac{\partial f_i}{\partial x_m}\right)^2\\
=& N f\left( {\bf X} \right) - \frac{N}{2\mu_0} \nabla f({{\bf X}})^\top {\bf D} \nabla f({{\bf X}}),
\end{aligned}
\end{equation}  
where $x_m$ and $y_m$ denote the parameter $m$ of model ${\bf X}$ and ${\bf Y}$, respectively, and (a) equals when $y_m = x_m-\frac{1}{\mu_0 n_m}\sum_{m\in{S}(i)}\partial f_i/\partial x_m$. Since
$
f\left({\bf X}\right) = \frac{1}{N}\sum_{i=1}^N f_i\left({\bf X}_{i}\right)
$, we have
\begin{align}
f\left({\bf Y}\right) \ge f\left({\bf X}\right) - \frac{1}{2\mu_0} \parallel \nabla \hat{f}(\hat{{\bf X}})\parallel^2.
\end{align}
If $\parallel \nabla \hat{f}(\hat{{\bf X}})\parallel^2<\epsilon$, by letting ${\bf Y}={\bf X}_{local}^*$, we have
$f\left({\bf X}\right)\le f\left({\bf X}^{*}\right) + \frac{\epsilon}{2\mu_0}$.

On the contrary, if $\parallel \nabla {f}({{\bf X}})\parallel^2<\epsilon$, since
$$f({\bf X}_{local}^*) \ge f({\bf X}) - \frac{1}{2\mu_0} \parallel \nabla \hat{f}(\hat{{\bf X}})\parallel^2 \ge f({\bf X}) - \frac{N}{2n_{\min}\mu_0} \parallel \nabla {f}({{\bf X}})\parallel^2,$$
we can only guarantee that $f({\bf X}) \le f({\bf X}_{local}^*) + \frac{N\epsilon}{2n_{\min}\mu_0}$.
\end{proof}
\end{theorem}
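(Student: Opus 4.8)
The plan is to exploit the local strong convexity of each individual $f_i$ and to carry out the summation over clients at the level of coordinates, so that the submodel index structure converts the per-coordinate curvature constants into exactly the weights appearing in the preconditioner ${\bf D}$. First I would write, for each client $i$ and any comparison model ${\bf Y}$, the local strong-convexity inequality
\begin{align*}
f_i({\bf Y}_{S(i)}) \ge f_i({\bf X}_{S(i)}) + \left<{\bf Y}_{S(i)} - {\bf X}_{S(i)}, \nabla f_i({\bf X}_{S(i)})\right> + \frac{\mu_i}{2}\parallel {\bf Y}_{S(i)} - {\bf X}_{S(i)}\parallel^2,
\end{align*}
where $\mu_i = \lambda_{\min}({\bf H}_i)$. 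Summing over $i$ and calling the right-hand side $T({\bf Y})$ produces a quadratic surrogate that lower-bounds $\sum_i f_i({\bf Y}_{S(i)}) = N f({\bf Y})$, and I would eventually specialize ${\bf Y}$ to the local optimum ${\bf X}_{local}^*$.

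The heart of the argument is the coordinate-wise reorganization of $T({\bf Y})$. Since $f_i$ depends only on the coordinates in $S(i)$, I would regroup both the linear and the quadratic terms by parameter index $m$. For the quadratic part, the coefficient of $(y_m - x_m)^2$ is $\frac{1}{2}\sum_{\{i : m \in S(i)\}} \mu_i$, and here Assumption \ref{convex_ratio} does the essential work: at most an $\alpha$-fraction of the $n_m$ involved clients can be locally concave with $\mu_i \ge -\rho_2$, while the remaining fraction satisfies $\mu_i \ge \rho_1$, so this coefficient is at least $\frac{\mu_0 n_m}{2}$ with $\mu_0 = \rho_1 - \alpha(\rho_1 + \rho_2) > 0$. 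For the linear part, the coefficient of $(y_m - x_m)$ is $\sum_{\{i : m \in S(i)\}} \partial f_i / \partial x_m = N\, \partial f/\partial x_m$, because the omitted clients contribute zero derivative. Minimizing the resulting one-dimensional quadratic $\frac{\mu_0 n_m}{2}(y_m - x_m)^2 + N\frac{\partial f}{\partial x_m}(y_m - x_m)$ independently in each coordinate yields
\begin{align*}
T({\bf Y}) \ge N f({\bf X}) - \frac{N^2}{2\mu_0}\sum_{m=1}^M \frac{1}{n_m}\left(\frac{\partial f}{\partial x_m}\right)^2.
\end{align*}

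The final step recognizes the weighted sum as the preconditioned gradient norm: since ${\bf D} = \mathrm{diag}\{N/n_m\}$, we have $\sum_m \frac{1}{n_m}(\partial f/\partial x_m)^2 = \frac{1}{N}\nabla f({\bf X})^\top {\bf D}\,\nabla f({\bf X}) = \frac{1}{N}\parallel\nabla\hat f(\hat{{\bf X}})\parallel^2$. Combining with $N f({\bf Y}) \ge T({\bf Y})$ and dividing by $N$ gives $f({\bf Y}) \ge f({\bf X}) - \frac{1}{2\mu_0}\parallel\nabla\hat f(\hat{{\bf X}})\parallel^2$ for every ${\bf Y}$; taking ${\bf Y} = {\bf X}_{local}^*$ and using the hypothesis $\parallel\nabla\hat f\parallel^2 \le \epsilon$ proves the first claim. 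For the second claim I would only need the crude coordinate bound $\parallel\nabla\hat f\parallel^2 = \sum_m \frac{N}{n_m}(\partial f/\partial x_m)^2 \le \frac{N}{n_{\min}}\parallel\nabla f\parallel^2$, which turns a hypothesis $\parallel\nabla f\parallel^2 \le \epsilon$ into the weaker guarantee $f({\bf X}) \le f({\bf X}_{local}^*) + \frac{N\epsilon}{2 n_{\min}\mu_0}$.

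The main obstacle is the bookkeeping in the coordinate-wise regrouping, and in particular making Assumption \ref{convex_ratio} do precisely the right work: the per-coordinate effective curvature $\mu_0 n_m$ must line up with the preconditioner weight $N/n_m$ so that the two $N$-factors and the two $n_m$-factors cancel cleanly into $\frac{1}{N}\parallel\nabla\hat f\parallel^2$. This cancellation, which delivers the sharp $\frac{\epsilon}{2\mu_0}$ bound, is exactly what fails when one instead starts from $\parallel\nabla f\parallel^2$: the mismatch between the uniform weighting of $\parallel\nabla f\parallel^2$ and the heat-weighted curvature $\mu_0 n_m$ forces the $N/n_{\min}$ blow-up. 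Hence the comparison of the two bounds is really the conceptual payoff of the argument rather than a separate technical step.
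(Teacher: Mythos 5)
Your proposal is correct and follows essentially the same route as the paper's proof: the same per-client strong-convexity inequality summed into the quadratic surrogate $T({\bf Y})$, the same coordinate-wise regrouping in which Assumption \ref{convex_ratio} yields the effective per-parameter curvature $\mu_0 n_m$, the same per-coordinate minimization producing the deficit $\frac{1}{2\mu_0}\parallel\nabla\hat f(\hat{{\bf X}})\parallel^2$, and the same crude bound $\parallel\nabla\hat f(\hat{{\bf X}})\parallel^2 \le \frac{N}{n_{\min}}\parallel\nabla f({\bf X})\parallel^2$ for the second claim. If anything, you are more explicit than the paper in two spots --- the $(1-\alpha)/\alpha$ split behind $\sum_{\{i\,:\,m\in S(i)\}}\mu_i \ge n_m\mu_0$ and the identification of the linear coefficient as $N\,\partial f/\partial x_m$ --- but the argument is the same.
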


We note that there is a difference between equation \ref{f_convex} and \ref{hat_f_convex}: for each client $i$, equation \ref{f_convex} involves all the parameters of the full model while equation \ref{hat_f_convex} involves only partial parameters of the submodel, which causes a change in the lower bound of $T({\bf Y})$ and further leads to a change of conclusion. 

By Theorem \ref{hat_f_convergence}, we can also show the superiority of FedSubAvg over FedAvg. The existing work proved an $O(\sqrt{1/KT})$ convergence of $\parallel \nabla f({\bf X}^T) \parallel^2$ in FedAvg. When $\parallel\nabla f({\bf X}^T)\parallel^2\le O(\sqrt{1/KT})$ and ${\bf X}^T$ is in a locally convex area of $f$, we have $f({\bf X}^T)\le f({\bf X}^*_{local}) + O(\frac{N}{n_{\min}\sqrt{KT}})$. In contrast, we proved an $O(\sqrt{\frac{N}{n_{min}KT}})$ convergence of $\parallel \nabla \hat{f}(\hat{{\bf X}})\parallel^2$ in FedSubAvg. When $\parallel\nabla \hat{f}(\hat{{\bf X}}^T)\parallel^2\le O(\sqrt{1/KT})$ and ${\bf X}^T$ is in a locally convex area of $f$, we have $f({\bf X}^T)\le f({\bf X}^*_{local}) + O(\sqrt{\frac{N}{n_{\min}KT}})$, which indicates that compared to FedAvg, FedSubAvg converges $\Theta(\sqrt{N/n_{\min}})$ times faster for the objective value.

\subsection{Additional Notations}
Let ${\bf X}_i$ denote ${\bf X}_{S(i)}$ and ${\bf U}$ denote $\frac{1}{N}\cdot{\bf D}=\text{diag}\{1/n_1, 1/n_2, \cdots, 1/n_M\}$.
We have
\begin{align}
{\bf X}^t = {\bf U}\cdot\sum_{i=1}^N {\bf x}_i^t
\end{align}
We then assume that FedSubAvg always activates all the clients at the beginning of each communication round and then uses the parameters maintained by a few selected clients to generate the next-round parameter. It is clear that this update scheme is equivalent to the original. Then, the update of FedSubAvg can be summarized as: for all $i\in[N]$,
\begin{equation}
{\bf y}_i^{t+1} = {\bf x}_i^t - \gamma {\bf g}_i^t,
\end{equation}
\begin{equation}
{\bf x}_i^{t+1}=\left\{
\begin{aligned}
&{\bf y}_i^{t+1}  &\text{if } t \text{ is not a multiple of } I,\\
&{\bf X}^{t+1-I}_{i}+{\bf U}_i \cdot \frac{N}{K} \sum_{j\in C_{t+1}} \left({\bf y}_j^{t+1}-{\bf x}_j^{t+1-I}\right) &\text{if } t \text{ is a multiple of } I,
\end{aligned}
\right.
\end{equation}  
where ${\bf g}_i^t \overset{\triangle}{=} \nabla F({\bf x}_i^t, \xi_i^t)$ is the local gradient of client $i$ at iteration $t$, and ${\bf U}_i$ denotes the local part of ${\bf U}$ for client $i$. Clearly, in this update scheme, when $t$ is a communication iteration, we have
\begin{align}
\mathbb{E}_{C_{t+1}}\left[{\bf X}^{t+1}\right] = \mathbb{E}_{C_{t+1}}{\left[{\bf X}^{t+1-I}+{\bf U} \cdot \frac{N}{K} \sum_{i\in C_{t+1}} \left({\bf y}_i^{t+1}-{\bf x}_i^{t+1-I}\right)\right]} = {\bf U}\cdot\sum_{i=1}^N {\bf y}_i^{t+1}\overset{\triangle}{=}{\bf Y}^{t+1}.
\end{align}
Additionally, ${\bf X}^{t+1}={\bf Y}^{t+1}$ also holds when $t$ is not a communication iteration. Therefore, ${\bf Y}^{t+1}=\mathbb{E} [{\bf X}^{t+1} ]$.

\subsection{Key Lemmas}
\begin{lemma}
\begin{align*}
\mathbb{E}\left[\frac{1}{N}\sum_{i=1}^N \parallel {\bf x}_i^t-{\bf X}_i^t\parallel^2\right]\le 4\gamma^2I^2G^2.
\end{align*}
\end{lemma}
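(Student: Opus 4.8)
The plan is to exploit the round structure of FedSubAvg: within any communication round the local models evolve by pure local SGD, and they are re-synchronized to the global submodel at the start of each round, so the consensus error only accumulates through the local steps taken since the last synchronization. First I would fix the global iteration $t$ and let $t_0$ denote the most recent communication iteration, i.e., the first iteration of the round containing $t$. The crucial observation, which I would verify directly from the update scheme in the ``Additional Notations'' subsection, is that synchronization is exact: for every parameter index $m$, all clients involving $m$ receive the identical corrected value $({\bf X}^{t_0})_m$, because the right-hand side of the communication update depends on $i$ only through the restriction to $S(i)$. Hence ${\bf x}_i^{t_0}={\bf X}_i^{t_0}$ for all $i$ (not merely in expectation), and the consensus error vanishes at $t_0$.

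Next I would quantify the per-client local drift. For $t_0\le t$, unrolling the recursion ${\bf y}_i^{s+1}={\bf x}_i^s-\gamma{\bf g}_i^s$ gives ${\bf x}_i^t-{\bf x}_i^{t_0}=-\gamma\sum_{s=t_0}^{t-1}{\bf g}_i^s$, a sum of at most $I$ terms. Applying Cauchy--Schwarz (equivalently Jensen) to move the square inside the sum and then invoking Assumption \ref{g} ($\mathbb{E}\|{\bf g}_i^s\|^2\le G^2$) yields
\begin{align*}
\mathbb{E}\|{\bf x}_i^t-{\bf x}_i^{t_0}\|^2 \le \gamma^2(t-t_0)\sum_{s=t_0}^{t-1}\mathbb{E}\|{\bf g}_i^s\|^2 \le \gamma^2I^2G^2,
\end{align*}
uniformly in $i$.

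I would then split the target quantity using ${\bf x}_i^{t_0}={\bf X}_i^{t_0}$ to write ${\bf x}_i^t-{\bf X}_i^t=({\bf x}_i^t-{\bf x}_i^{t_0})-({\bf X}_i^t-{\bf X}_i^{t_0})$ and apply $\|a-b\|^2\le 2\|a\|^2+2\|b\|^2$. The first piece averages to at most $2\gamma^2I^2G^2$ by the previous display. For the second piece I would use that each global coordinate is the simple average over its involved clients, $({\bf X}^t-{\bf X}^{t_0})_m=\frac{1}{n_m}\sum_{i:m\in S(i)}({\bf x}_i^t-{\bf x}_i^{t_0})_m$; Jensen's inequality on this average, followed by re-indexing the double sum as $\sum_m\sum_{i:m\in S(i)}$ so that the weight $n_m$ (the number of submodels containing $m$) cancels the $1/n_m$, gives $\sum_i\|{\bf X}_i^t-{\bf X}_i^{t_0}\|^2\le\sum_i\|{\bf x}_i^t-{\bf x}_i^{t_0}\|^2$, so this term also contributes at most $2\gamma^2I^2G^2$. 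Adding the two bounds produces the claimed $4\gamma^2I^2G^2$.

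The main obstacle is the bookkeeping in this last step: correctly handling the submodel alignment and the averaging weights $1/n_m$ so that the global consensus drift is controlled by the local drifts with no surplus factor depending on the heat dispersion $n_{\max}/n_{\min}$. The key is that coordinate $m$ appears in exactly $n_m$ submodels, so the $n_m$ arising from Jensen cancels cleanly against the count of involved clients. (I note that replacing the triangle-inequality split by the variance-minimization identity---each coordinate's mean minimizes its sum of squared deviations---would even sharpen the constant from $4$ to $1$, but the factor of $4$ is more than enough for the subsequent convergence analysis in Theorem \ref{th-main}.)
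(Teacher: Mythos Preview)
Your proposal is correct and follows essentially the same approach as the paper: identify the last synchronization point $t_0$ with ${\bf x}_i^{t_0}={\bf X}_i^{t_0}$, split via $\|a-b\|^2\le 2\|a\|^2+2\|b\|^2$, bound the local drift by Cauchy--Schwarz plus Assumption~\ref{g}, and control the global drift by the coordinate-wise Jensen/re-indexing argument that makes the $n_m$'s cancel. The paper carries out the last step per time step on the gradients (its quantity $A_3$), whereas you apply it once to the accumulated drifts ${\bf x}_i^t-{\bf x}_i^{t_0}$; this is only a cosmetic reorganization, and your variance-minimization remark about the constant is a legitimate sharpening.
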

\begin{proof}
FedSubAvg requires communication every $I$ iterations. Therefore, for any $t\ge0$, there exists a $t_0\le t$, such that $t-t_0\le I-1$ and ${\bf x}_i^{t_0}={\bf X}_{i}^{t_0}$ for all $i\in{N}$. Then, we have
\begin{equation}\label{diff}
\begin{aligned}
&\mathbb{E}\left[\frac{1}{N} \sum_{i=1}^N \parallel {\bf x}_i^t - {\bf X}_i^t \parallel^2\right] \\
=& \mathbb{E}\left[\frac{1}{N}\sum_{i=1}^N \parallel\left({\bf x}_i^t - {\bf X}_i^{t_0}\right) - \left( {\bf X}_i^t - {\bf X}_i^{t_0} \right)\parallel^2 \right]\\
\le& \mathbb{E}\left[\frac{1}{N}\sum_{i=1}^N \parallel\sum_{\tau=t_0}^{t-1}\gamma{\bf g}_i^\tau - \sum_{\tau=t_0}^{t-1}\gamma{\bf U}_i\sum_{i=1}^N{\bf g}_i^\tau\parallel^2\right]\\
\le& 2\mathbb{E}\underbrace{\left[\frac{1}{N}\sum_{i=1}^N\parallel\sum_{\tau=t_0}^{t-1}\gamma{\bf g}_i^\tau\parallel^2\right]}_{A_1}
+2\mathbb{E}\underbrace{\left[\frac{1}{N}\sum_{i=1}^N \parallel \sum_{\tau=t_0}^{t-1}\gamma {\bf U_i}\sum_{i=1}^N{\bf g}_i^\tau \parallel^2\right]}_{A_2}.
\end{aligned}
\end{equation}

We first focus on bounding $A_1$:
\begin{align}\label{a1}
A_1\le \frac{1}{N}\sum_{i=1}^N \parallel\sum_{\tau=t_0}^{t-1}\gamma^2{\bf g}_i^\tau \parallel^2
\le \frac{\gamma^2\left(I-1\right)^2}{N}\sum_{i=1}^N \parallel {\bf g}_i^\tau\parallel^2\le \gamma^2G^2(I-1)^2.
\end{align}
We next bound $A_2$:
\begin{equation}\label{a2}
\begin{aligned}
A_2 \le \frac{\gamma^2(I-1)}{N}\sum_{\tau=t_0}^{t-1}\underbrace{\sum_{i=1}^N\parallel {\bf U}_i\sum_{i=1}^N{\bf g}_i^\tau\parallel^2}_{A_3},
\end{aligned}
\end{equation}
where $A_3$ can be bounded as follows: 
\begin{equation}\label{a3}
\begin{aligned}
A_3 &\le \sum_{m=1}^M \sum_{m\in{S}(i)} \left(\frac{\sum_{m\in{S}(i)}{\bf g}^t_{i,\{m\}}}{n_m}\right)^2
=\sum_{m=1}^M\frac{1}{n_m}\left(\sum_{m\in{S}(i)}{\bf g}_{i,\{m\}}^t\right)^2\\
&\le \sum_{m=1}^M\sum_{m\in{S}(i)}\left({\bf g}_{i,\{m\}}^t\right)^2\le NG^2.
\end{aligned}
\end{equation}

Substituting equations \ref{a1}, \ref{a2}, and \ref{a3} into \ref{diff} yields
\begin{align}
\mathbb{E}\left[\frac{1}{N} \sum_{i=1}^N \parallel {\bf x}_i^t - {\bf X}^t \parallel^2\right]
\le 4\gamma^2 G^2(I-1)^2.
\end{align}
\end{proof}

\subsection{Completing the Proof of Theorem \ref{th-main}}
\begin{proof}
For each client $i$, by the $L$-smoothness of $f_i(\cdot)$, we have
\begin{align}\label{lsmooth}
\mathbb{E}\left[f_i\left({\bf X}_i^{t+1}\right)\right] \le \mathbb{E}\left[f_i\left({\bf X}_i^t\right)\right]
+\underbrace{\mathbb{E}\left[\left<{\bf X}_i^{t+1}-{\bf X}_i^t, \nabla f_i\left({\bf X}_i^t\right)\right>\right]}_{C_1^i}+\frac{L}{2}\underbrace{\mathbb{E}\left[\parallel {\bf X}_i^{t+1}-{\bf X}_i^t \parallel^2\right]}_{C_2^i}.
\end{align}
We first focus on bounding $C_1^i$. 
\begin{equation}\label{c1}
\begin{aligned}
C_1^i =& \mathbb{E}\left[\left<{\bf Y}_i^{t+1}-{\bf X}_i^t, \nabla f_i\left({\bf X}_i^t\right)\right>\right] + \mathbb{E}\left[\left<{\bf X}_i^{t+1}-{\bf Y}_i^{t+1}, \nabla f_i\left({\bf X}_i^{t}\right)\right>\right]\\
=& \mathbb{E}\left[\left<-\gamma{\bf U}_i\sum_{i=1}^N{\bf g}_i^t, \nabla f_i\left({\bf X}_i^t\right)\right>\right]
= \mathbb{E}\left[\left<-\gamma{\bf U}\sum_{i=1}^N{\bf g}_i^t, \nabla f_i\left({\bf X}_i^t\right)\right>\right]
\end{aligned}
\end{equation}
Substituting $C_1^i$ over $i\in[N]$ yields
\begin{equation}
\begin{aligned}
\mathbb{E}\left[\frac{1}{N}\sum_{i=1}^N C_1^i\right]=&-\gamma \mathbb{E}\left[\left<{\bf U}\sum_{i=1}^N{\bf g}_i^t, \frac{1}{N}\sum_{i=1}^N\nabla f_i\left({\bf X}_i^t\right)\right>\right]\\
=& -\frac{\gamma}{N} \mathbb{E}\left[\left<{\bf U}\sum_{i=1}^N \nabla f_i\left({\bf x}_i^t\right), \sum_{i=1}^N\nabla f_i\left({\bf X}_i^t\right)\right>\right]\\
\overset{(a)}{=}& -\frac{\gamma}{N} \mathbb{E}\left[\left<{\bf V}\sum_{i=1}^N \nabla f_i\left({\bf x}_i^t\right), {\bf V}\sum_{i=1}^N\nabla f_i\left({\bf X}_i^t\right)\right>\right]\\
=& -\frac{\gamma}{2N}\mathbb{E}\left[\left(\sum_{i=1}^N \nabla f_i\left({\bf X}_i^t\right)\right)^\top{\bf U}\left(\sum_{i=1}^N \nabla f_i\left({\bf X}_i^t\right)\right)\right]\\&-\frac{\gamma}{2N}\mathbb{E}\left[\left(\sum_{i=1}^N \nabla f_i\left({\bf x}_i^t\right)\right)^\top{\bf U}\left(\sum_{i=1}^N \nabla f_i\left({\bf x}_i^t\right)\right)\right]\\
&+\frac{\gamma}{2N}\mathbb{E}\underbrace{\left[\left( \sum_{i=1}^N\left( \nabla f_i\left({\bf x}_i^t\right)-\nabla f_i\left({\bf X}_i^t\right)\right)\right)^\top{\bf U}\left( \sum_{i=1}^N\left( \nabla f_i\left({\bf x}_i^t\right)-\nabla f_i\left({\bf X}_i^t\right)\right)\right)\right]}_{D},
\end{aligned}
\end{equation}
where ${\bf V}=\text{diag}(1/\sqrt{n_1}, 1/\sqrt{n_2}, \cdots, 1/\sqrt{n_M})$ and ${\bf U}={\bf V}^2$ in (a), while $D$ can be bounded as follows:
\begin{equation}\label{d}
\begin{aligned}
D\le \frac{1}{n_{\min}}\parallel \sum_{i=1}^N \left( \nabla f_i\left({\bf x}_i^t\right)-\nabla f_i\left({\bf X}_i^t\right) \right) \parallel^2
\le \frac{NL^2}{n_{\min}}\sum_{i=1}^N \parallel {\bf x}_i^t-{\bf X}_i^t\parallel^2\le \frac{4N^2 \gamma^2G^2L^2(I-1)^2}{n_{\min}},
\end{aligned}
\end{equation}
We next consider bounding $C_i^2$:
\begin{align}\label{c2}
C_i^2 \le  2 \mathbb{E}\underbrace{\left[\parallel{\bf Y}_i^{t+1}-{\bf X}_i^{t}\parallel^2\right]}_{E_1^i} + 2 \mathbb{E}\underbrace{\left[\parallel{\bf X}_i^{t+1}-{\bf Y}_i^{t+1}\parallel^2\right]}_{E_2^i}.
\end{align}
Since ${\bf Y}_i^{t+1}={\bf X}_i^{t}-\gamma{\bf U}_i\sum_{i=1}^N {\bf g}_i^t$, we have
\begin{equation}\label{e1}
\begin{aligned}
\mathbb{E}\left[\frac{1}{N}\sum_{i=1}^N E_1^i\right] =& \mathbb{E}\left[\frac{1}{N}\sum_{i=1}^N\gamma^2 \parallel {\bf U}_i \sum_{i=1}^N{\bf g}_i^t\parallel^2\right]\\
=& \frac{\gamma^2}{N}\mathbb{E}\left[\left(\sum_{i=1}^N{\bf g}_i^t\right)^\top{\bf U}\left(\sum_{i=1}^N{\bf g}_i^t\right)\right]\\
=& \frac{\gamma^2}{N}\mathbb{E}\left[\left(\sum_{i=1}^N\left({\bf g}_i^t-\nabla f_i\left({\bf x}_i^t\right)\right)\right)^\top{\bf U}\left(\sum_{i=1}^N\left({\bf g}_i^t-\nabla f_i\left({\bf x}_i^t\right)\right)\right)\right]\\
&+ \frac{\gamma^2}{N}\mathbb{E}\left[\left(\sum_{i=1}^N \nabla f_i\left({\bf x}_i^t\right)\right)^\top{\bf U}\left(\sum_{i=1}^N \nabla f_i\left({\bf x}_i^t\right)\right)\right]\\
&+ \frac{2\gamma}{N}\mathbb{E}\left[ \left(\sum_{i=1}^N \left({\bf g}_i^t - \nabla f_i\left({\bf x}_i^t\right)\right)\right)^\top {\bf U} \left(\sum_{i=1}^N \left({\bf g}_i^t - \nabla f_i\left({\bf x}_i^t\right)\right)\right) \right]\\
=&\frac{\gamma^2}{N}\mathbb{E}\left[\left(\sum_{i=1}^N\left({\bf g}_i^t-\nabla f_i\left({\bf x}_i^t\right)\right)\right)^\top{\bf U}\left(\sum_{i=1}^N\left({\bf g}_i^t-\nabla f_i\left({\bf x}_i^t\right)\right)\right)\right]\\
\le & \frac{\gamma^2}{n_{\min}N}\mathbb{E}\left[\parallel\sum_{i=1}^N\left({\bf g}_i^t-\nabla f_i\left({\bf x}_i^t\right)\right)\parallel^2\right]\\
&+ \frac{\gamma^2}{N}\mathbb{E}\left[\left(\sum_{i=1}^N \nabla f_i\left({\bf x}_i^t\right)\right)^\top{\bf U}\left(\sum_{i=1}^N \nabla f_i\left({\bf x}_i^t\right)\right)\right]\\
\le & \frac{\gamma^2\sigma^2}{n_{\min}}+\frac{\gamma^2}{N}\mathbb{E}\left[\left(\sum_{i=1}^N \nabla f_i\left({\bf x}_i^t\right)\right)^\top{\bf U}\left(\sum_{i=1}^N \nabla f_i\left({\bf x}_i^t\right)\right)\right].
\end{aligned}
\end{equation}
If $t$ is not a communication iteration, we have $E_2^i=0$; otherwise, we have
\begin{equation}
\begin{aligned}
\mathbb{E}_{C_{t+1}}[E_2^i] =& \mathbb{E}_{C_{t+1}}\left[\parallel{\bf X}_i^{t_0}+\frac{N}{K}{\bf U}_i \sum_{j\in C_{t+1}}\left({\bf y}_j^{t+1}-{\bf x}_j^{t_0}\right) -{\bf U}_i\sum_{j=1}^N{\bf y}_j^{t+1}\parallel^2\right]\\
=&\mathbb{E}_{C_{t+1}}\left[\parallel\frac{N}{K}{\bf U}_i\sum_{j\in C_{t+1}}\left({\bf y}_i^{t+1}-{\bf x}_j^{t_0}\right)-\left({\bf U}_i\sum_{j=1}^N{\bf y}_j^{t+1}-{\bf X}_i^{t_0}\right)\parallel^2\right]\\
\overset{(a)}{\le}& \frac{N^2}{K^2}{\bf u}_i\mathbb{E}_{C_{t+1}}\left[\sum_{j\in C_{t+1}}\left( {\bf y}_i^{t+1}-{\bf x}_j^{t_0}\right)\circ \left( {\bf y}_i^{t+1}-{\bf x}_j^{t_0}\right)\right]\\
=& \frac{N}{K} {\bf u}_i \sum_{j=1}^{N} \left( {\bf y}_i^{t+1}-{\bf x}_j^{t_0}\right)\circ \left( {\bf y}_i^{t+1}-{\bf x}_j^{t_0}\right)\\
\le& \frac{NI}{K} {\bf u}_i \sum_{j=1}^N\sum_{\tau=t_0}^{t+1}\gamma^2{\bf g}_j^\tau\circ{\bf g}_j^\tau\\
\le& \frac{NI\gamma^2}{K} {\bf u}_i \sum_{j=1}^N\sum_{\tau=t-I}^t{\bf g}_j^\tau\circ{\bf g}_j^\tau,
\end{aligned}
\end{equation}
where ${\bf u}_i$ denotes the local part of $(1/n_1^2,1/n_2^2,\cdots,1/n_M^2)$ for client $i$, $\circ$ denotes the element-wise multiplication, and (a) follows from that $\mathbb{E}[\parallel{\bf z}-\mathbb{E}[{\bf z}]\parallel^2]\le \mathbb{E}[\parallel{\bf z}\parallel^2]$ holds for any random vector ${\bf z}$.

Summing over $i=[N]$, we have
\begin{equation}\label{e2}
\begin{aligned}
\mathbb{E}\left[\frac{1}{N}\sum_{i=1}^N E_2^i\right]
\le& \frac{1}{N}\sum_{i=1}^N \frac{NI\gamma^2}{K} {\bf u}_i  \sum_{i=j}^N\sum_{\tau=t-I}^t {\bf g}_j^\tau\circ {\bf g}_j^\tau\\
=& \frac{I\gamma^2}{K}\sum_{i=1}^N  {\bf u}_i \sum_{\tau=t-I}^t\sum_{j=1}^N {\bf g}_j^\tau \circ {\bf g}_j^\tau\\
=& \frac{I\gamma^2}{K}\sum_{\tau=t-I}^t \sum_{i=1}^N  {\bf u}_i \sum_{j=1}^N {\bf g}_j^\tau \circ {\bf g}_j^\tau
\le \frac{N\gamma^2I^2G^2}{n_{\min}K}.
\end{aligned}
\end{equation}
Taking an average of equation \ref{lsmooth} over $i\in[N]$, substituting equations \ref{c1}, \ref{d}, \ref{c2}, \ref{e1}, \ref{e2} into \ref{lsmooth}, and rearranging the terms yields
\begin{equation}\label{iter}
\begin{aligned}
&\mathbb{E}\left[\frac{1}{N}\left(\sum_{i=1}^N \nabla f_i\left({\bf X}_i^t\right)\right)^\top{\bf U}\left(\sum_{i=1}^N \nabla f_i\left({\bf X}_i^t\right)\right)\right]\\
\le& \frac{2\left(\mathbb{E}\left[f\left({\bf X}^t\right)\right]-\mathbb{E}\left[f\left({\bf X}^{t+1}\right)\right]\right)}{\gamma}
+\frac{2\gamma L \sigma^2}{n_{\min}}
+\frac{4N\gamma^2G^2L^2(I-1)^2}{n_{\min}}+\frac{2 \gamma NI^2G^2L}{n_{\min}K}.
\end{aligned}
\end{equation}

Summing over $t\in\{1,2,\cdots,T\}$ and dividing both sides by $T$ yields
\begin{equation}
\begin{aligned}
&\mathbb{E}\left[\frac{1}{T} \sum_{i=1}^T \frac{1}{N} \left(\sum_{i=1}^N \nabla f_i({{\bf X}}^{t})\right)^\top {\bf U} \left(\sum_{i=1}^N \nabla f_i({{\bf X}}^{t})\right)\right] \\
\le& \frac{2\left( f\left( {{\bf X}}^1 \right) - f\left( \bf{X}^* \right)\right)}{\gamma T} + \frac{2\gamma L \sigma^2}{n_{\min}} + \frac{4N\gamma^2I^2G^2L^2}{n_{\min}} + \frac{2\gamma NI^2G^2L}{n_{\min}K}.
\end{aligned}
\end{equation}

Therefore, we have
\begin{equation}
\begin{aligned}
&\mathbb{E}\left[\frac{1}{T} \sum_{i=1}^T \nabla f({{\bf X}}^{t})^\top {\bf D} \nabla f({{\bf X}}^{t})\right] \\
=&\mathbb{E}\left[\frac{1}{T} \sum_{i=1}^T \frac{1}{N} \left(\sum_{i=1}^N \nabla f_i({{\bf X}}^{t})\right)^\top {\bf U} \left(\sum_{i=1}^N \nabla f_i({{\bf X}}^{t})\right)\right] \\
\le& \frac{2\left( f\left( {{\bf X}}^1 \right) - f\left( \bf{X}^* \right)\right)}{\gamma T} + \frac{2\gamma L \sigma^2}{n_{\min}} + \frac{4N\gamma^2I^2G^2L^2}{n_{\min}} + \frac{2\gamma NI^2G^2L}{n_{\min}K}.
\end{aligned}
\end{equation}

\end{proof}

\clearpage
\section{Experimental Details}\label{exp_details}


\subsection{Feature Heat Distributions on Datasets}

Figure \ref{heat_distribution} shows the feature heat distributions of the head features on four datasets in NLP or RS. We can observe that the feature heat (i.e., the number of feature-involved clients) varies widely among different features. 

\begin{figure*}[!h]
\centering
\subfigure[Feature heat on MovieLens]{
\includegraphics[width=0.42\columnwidth]{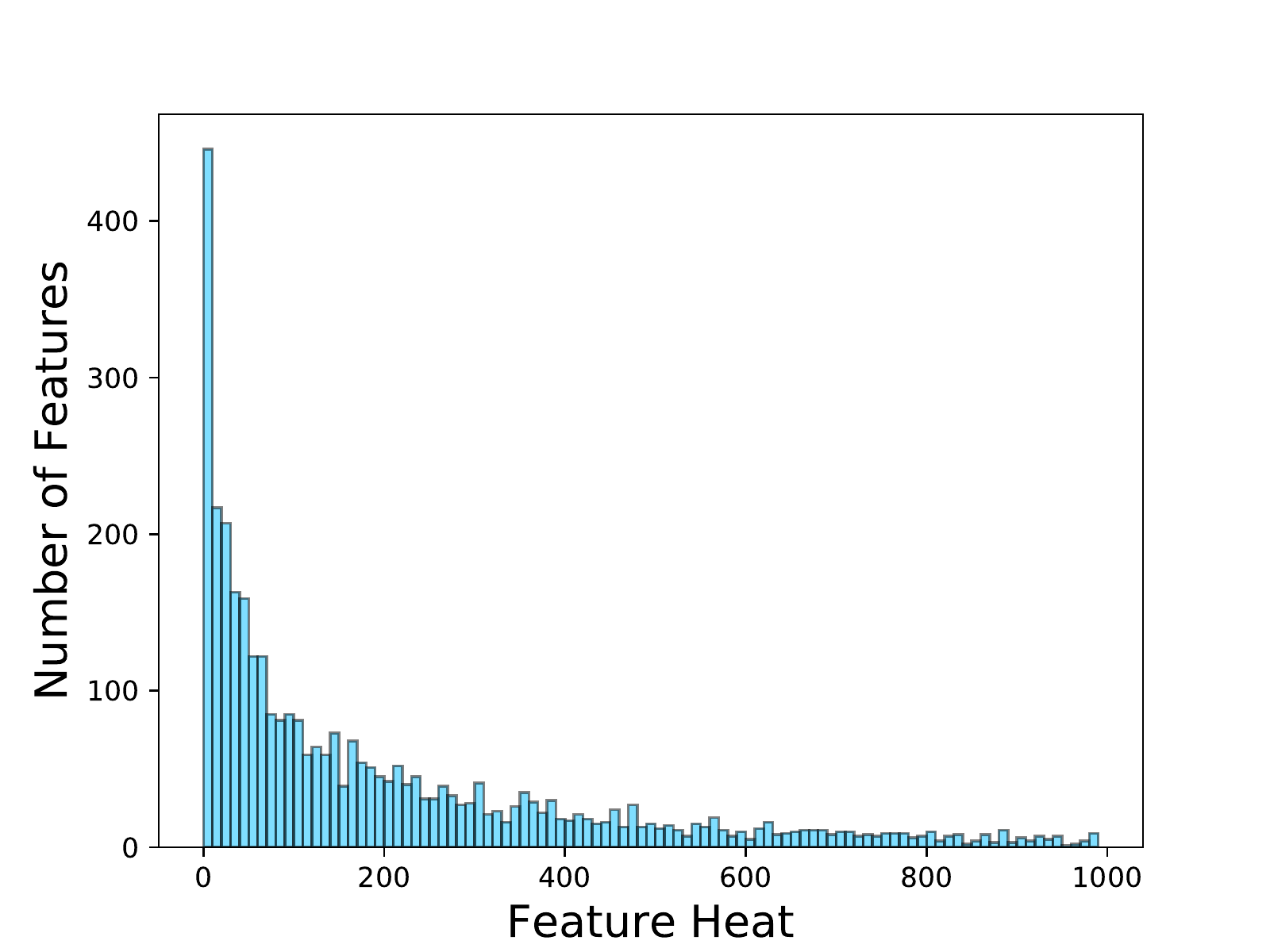}
}
\subfigure[Feature heat on Sent140]{
\includegraphics[width=0.42\columnwidth]{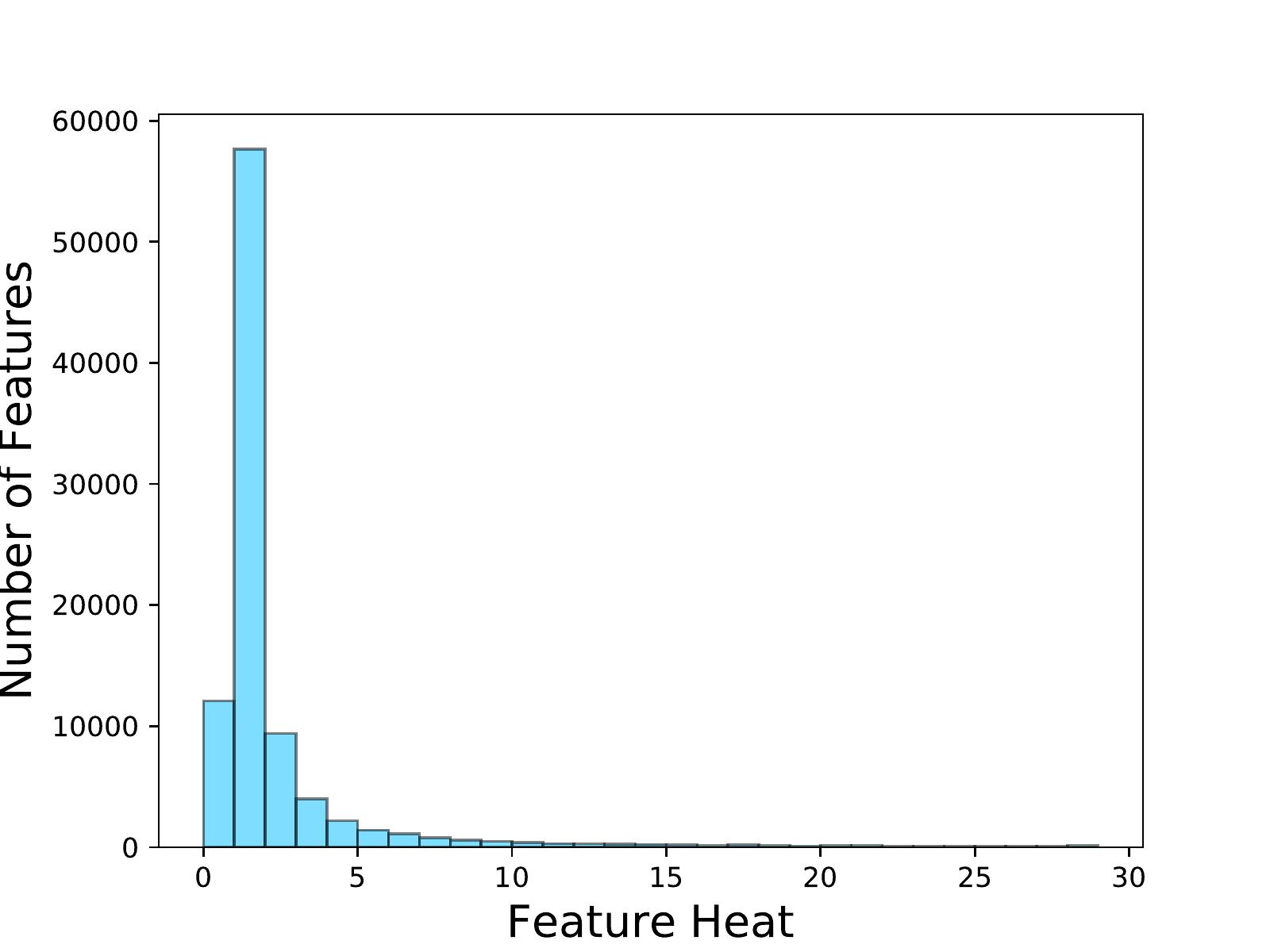}
}
\subfigure[Feature heat on Amazon]{
\includegraphics[width=0.42\columnwidth]{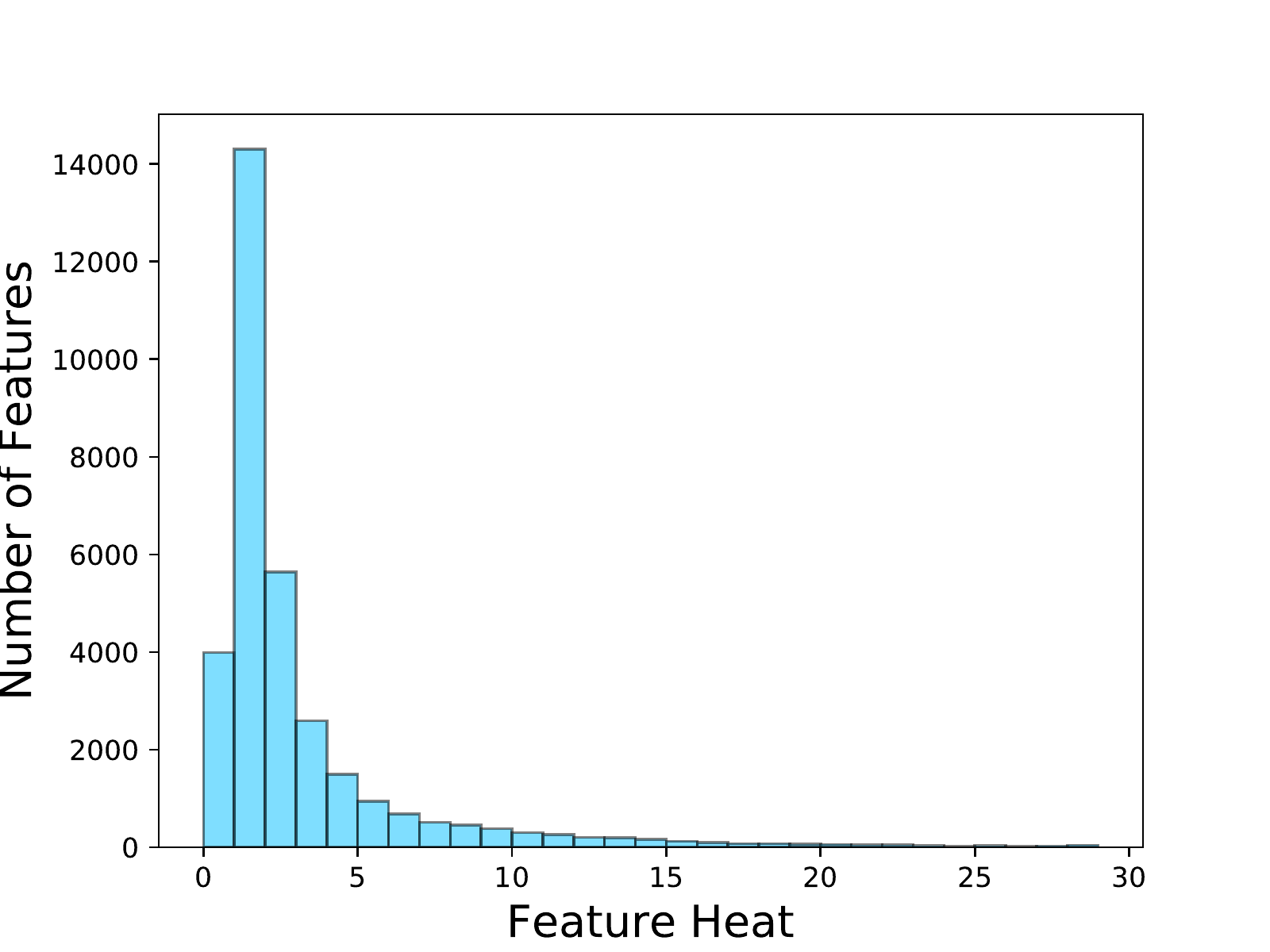}
}
\subfigure[Feature heat on Alibaba]{
\includegraphics[width=0.42\columnwidth]{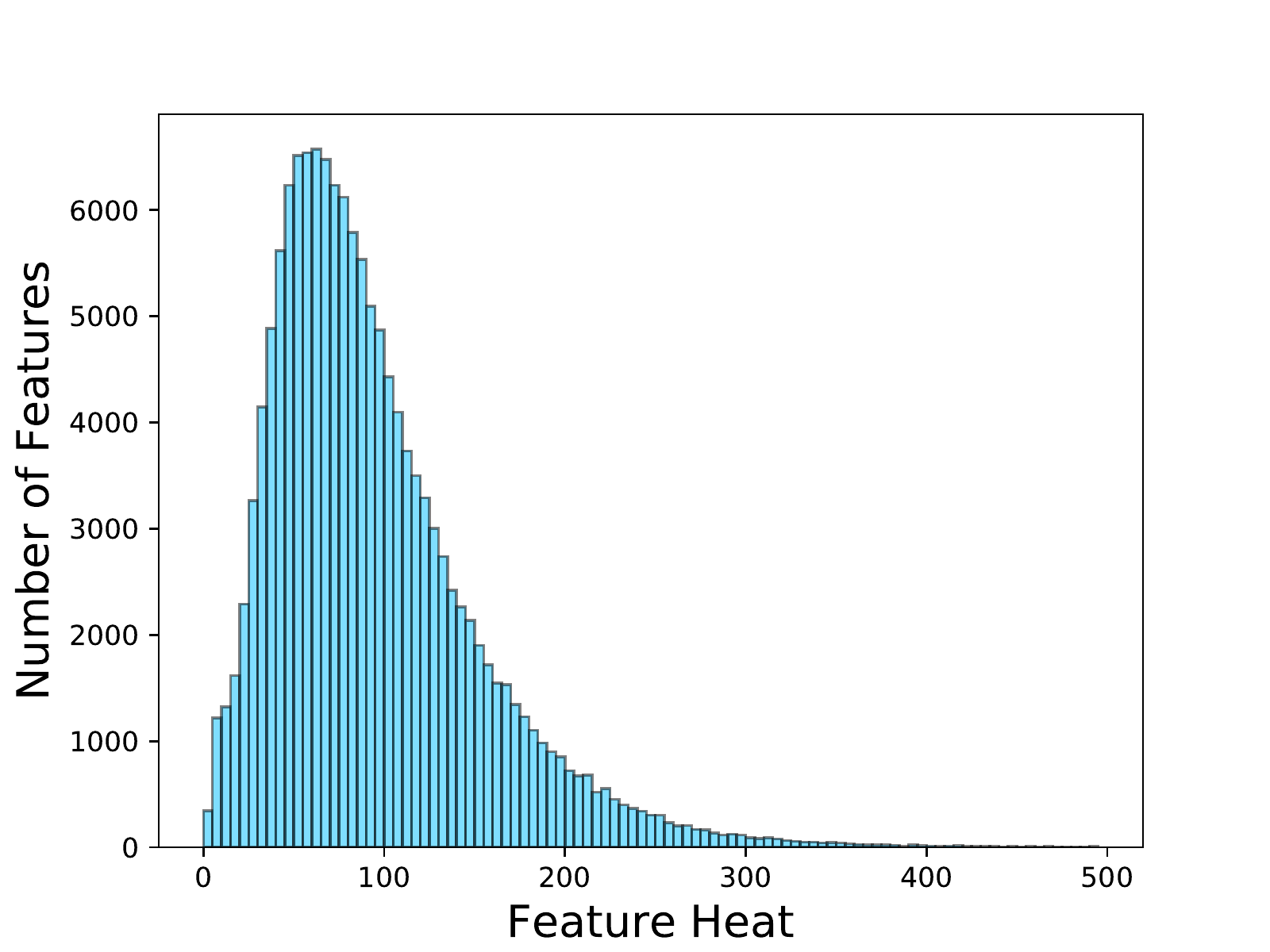}
}
\centering
\caption{Feature heat distributions on four datasets (only the head features are shown). The x-axis represents feature heat, i.e., the number of clients involving a movie/word/item, and the y-axis represents the number of movies/words/items under a certain heat.
}\label{heat_distribution}
\end{figure*}

\subsection{The Approximation of Scaffold in CTR Prediction}\label{app:scaffold:approx}
In CTR prediction, we make an approximation to Scaffold since a resource-constrained client cannot keep the control variate. Therefore, we perform the controlled update on the cloud server in each round. Let $c$ denote the globally controlled variate, and let $c_i$ denote client $i$'s local control variate. At the end of each round in Scaffold, we have:
\begin{align}
    c^{old} \approx \frac{1}{N} \sum_{i=1}^N c_i^{old}, \ \ \ 
    c^{new} \leftarrow c^{old} + \frac{1}{N} \sum_{i\in {S_c}} (c_i^{new}-c_i^{old}),
\end{align}
where ${S_c}$ is the selected clients at the round. Taking expectation with respect to the selected clients, we have
\begin{equation}\label{control_var}
\begin{aligned}
    \mathbb{E}[c^{new}] = c^{old} + \frac{|S_c|}{N^2} \sum_{i=1}^N (c_i^{new}-c_i^{old}) &\approx \frac{N-|S_c|}{N} c^{old} + \frac{|S_c|}{N}\cdot \frac{1}{N}\sum_{i=1}^N c_i^{new} \\
    &\approx \frac{N-|S_c|}{N} c^{old} + \frac{|S_c|}{N} \cdot \mathbb{E}\left[\frac{1}{|S_c|}\sum_{i\in S_c}c_i^{new}\right].
\end{aligned}
\end{equation}
In addition, the global update $\Delta {\bf X}\approx  -\eta I c$ and client $i$'s local update $\Delta {\bf x}_i\approx -\eta I c_i$. By equation \ref{control_var}, we can approximate the global update by 
\begin{align}
\Delta {\bf X}^{new}\approx \frac{N-|S_c|}{N}\Delta {\bf X}^{old}+\frac{|S_c|}{N} \left(\frac{1}{|S_c|} \sum_{i\in S_c}\Delta {\bf x}_i\right).
\end{align}
Therefore, we run Scaffold approximately by weighted averaging the original global update and the aggregated local updates to get the new global update every communication round.

\subsection{Hyperparameters}\label{app:hyper:para}

For the tasks of rating classification and sentiment analysis, we set the local batch size to 5 and set the local iteration number to 10 in all FL algorithms. We set the batch size to 250 and set the iteration number in each round to 10 in CentralSGD. For the CTR prediction on Amazon, we set the batch size to 4 and set the local iteration number to 10 in all FL algorithms. We set the batch size to 400 and set the iteration number in each round to 10 in CentralSGD. For the CTR prediction on the Alibaba dataset, we set the batch size to 32 and set the local iteration number to 10 in all FL algorithms. We set the batch size to 3,200 and set the iteration number in each round to 10 in CentralSGD. We search the learning rate for each algorithm independently, and the learning rates are recorded in Table \ref{lr_record}. In addition, we tune the hyperparameters for FedAdam and list the hyperparameters in Table \ref{fedadam_hyp}.

\begin{table}[!h]
    \caption{Learning rate for each experiment.}
    \label{lr_record}
    \centering
    \resizebox{0.8\columnwidth}{!}{
    \begin{tabular}[t]{lccccc}
        \toprule
                 & CentralSGD & FedAvg & FedProx & Scaffold & FedSubAvg \\
        \cmidrule{2-6}
        MovieLens & 0.1 & 0.1 & 0.1 & 0.1 & 0.1 \\
        Sent140   & 0.1 & 0.1 & 0.1 & 0.1 & 0.1 \\
        Amazon   & 0.05 & 0.1 & 0.1 & 0.1 & 0.05 \\
        Alibaba   & 1 & 1 & 1 & 1 & 0.3 \\
        \bottomrule
    \end{tabular}
    }
\end{table}

\begin{table}[!h]
    \caption{Hyperparameters for FedAdam.}
    \label{fedadam_hyp}
    \centering
    {
    \begin{tabular}[t]{lcccc}
        \toprule
                 & $\eta_l$ & $\eta$ & $\beta_1$ & $\beta_2$ \\
        \cmidrule{2-5}
        MovieLens & 0.1 & 1 & 0.9 & 99 \\
        Sent140   & 1 & 1 & 0.9 & 0.99 \\
        Amazon    & 0.1 & 0.001 & 0.9 & 0.999 \\
        Alibaba   & 1 & 0.001 & 0.9 & 0.99 \\
        \bottomrule
    \end{tabular}
    }
\end{table}

\subsection{Supplementary Notes for the Experiments}
In our experiments, all FL algorithms are extended to the weighted case. In particular, the correction coefficient $N/n_m$ for parameter $m$ in FedSubAvg is extended to $\sum_{i=1}^N w_i / \sum_{\{j|m\in S(j)\}}w_j$, where $w_i$ is the size of client $i$'s local training data.
For the rating classification, the MovieLens dataset is available from \href{https://grouplens.org/datasets/movielens/1m/}{https://grouplens.org/datasets/movielens/1m/}. We randomly select 20\% of the samples as the test dataset and leave the remaining 80\% as the training set, and further randomly choose 10,000 samples from the training set to evaluate train losses. 
For the sentiment analysis, the Sentiment140 dataset is available from \href{http://help.sentiment140.com/for-students}{http://help.sentiment140.com/for-students}, and we randomly select 20\% of the samples as the test dataset and leave the remaining 80\% as the training set. 
For the CTR prediction, the Amazon dataset is available from \href{http://jmcauley.ucsd.edu/data/amazon/}{http://jmcauley.ucsd.edu/data/amazon/}, and we partition the dataset based on the timestamp.
In addition, experiments are conducted on machines with operating system Ubuntu 18.04.3 and one NVIDIA GeForce RTX 2080Ti GPU.

\subsection{Additional Results}
We show the test accuracies (ACCs) or AUCs for each experiment. Figure \ref{cmp-acc} compares the test ACCs or AUCs of FedSubAvg and baselines under default settings. Figure \ref{cmp-acc-K} compares the test ACCs or AUCs of FedSubAvg with different numbers of participating client per round $K$. All the results from test ACC or AUC are consistent with the results from the train loss.

\vspace{-1em}
\begin{figure}[!h]
\centering
\subfigure[Test ACCs on MovieLens]{
\centering
\includegraphics[width=0.4\columnwidth]{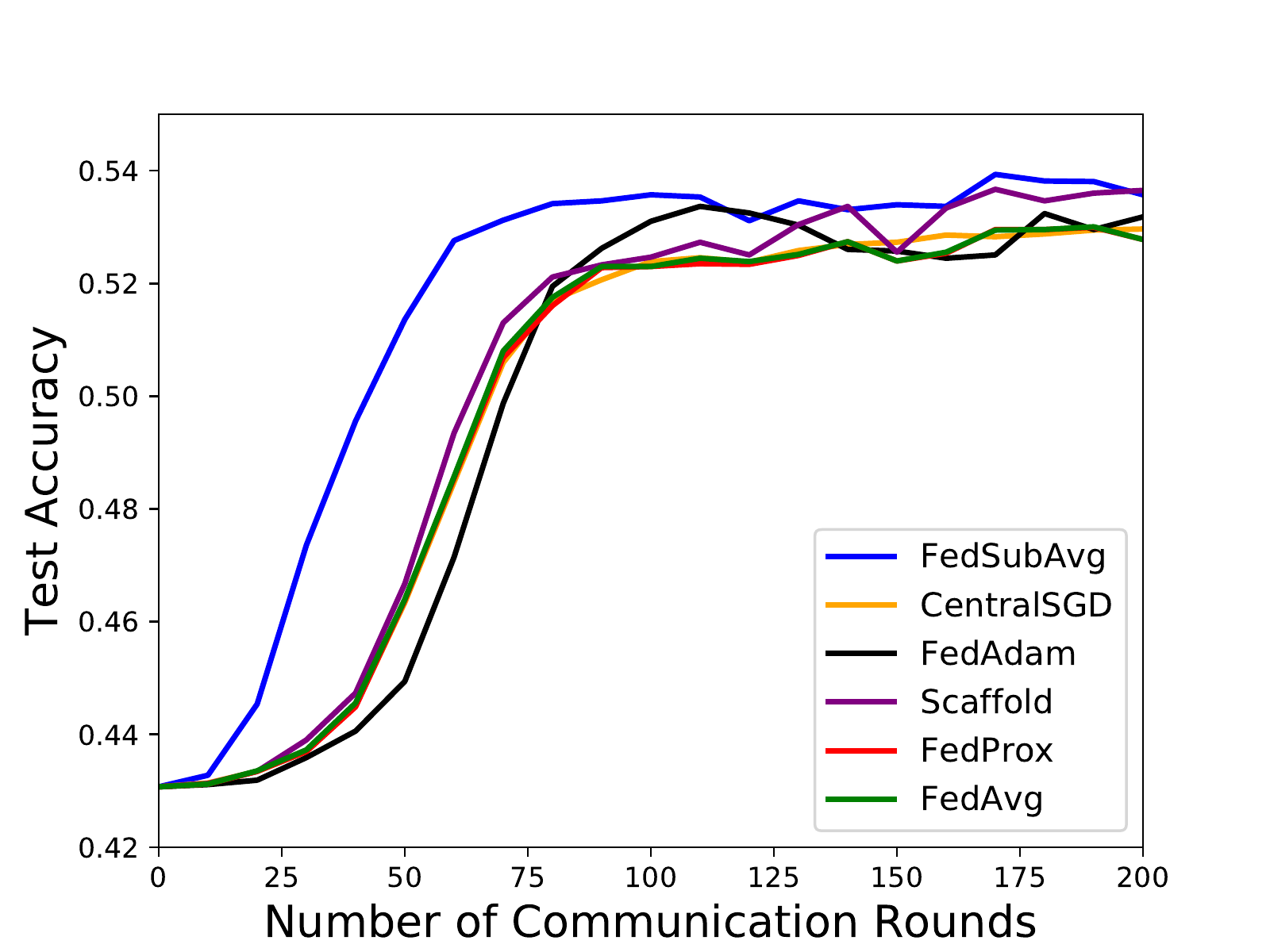}
}
\subfigure[Test ACCs on Sent140]{
\centering
\includegraphics[width=0.4\columnwidth]{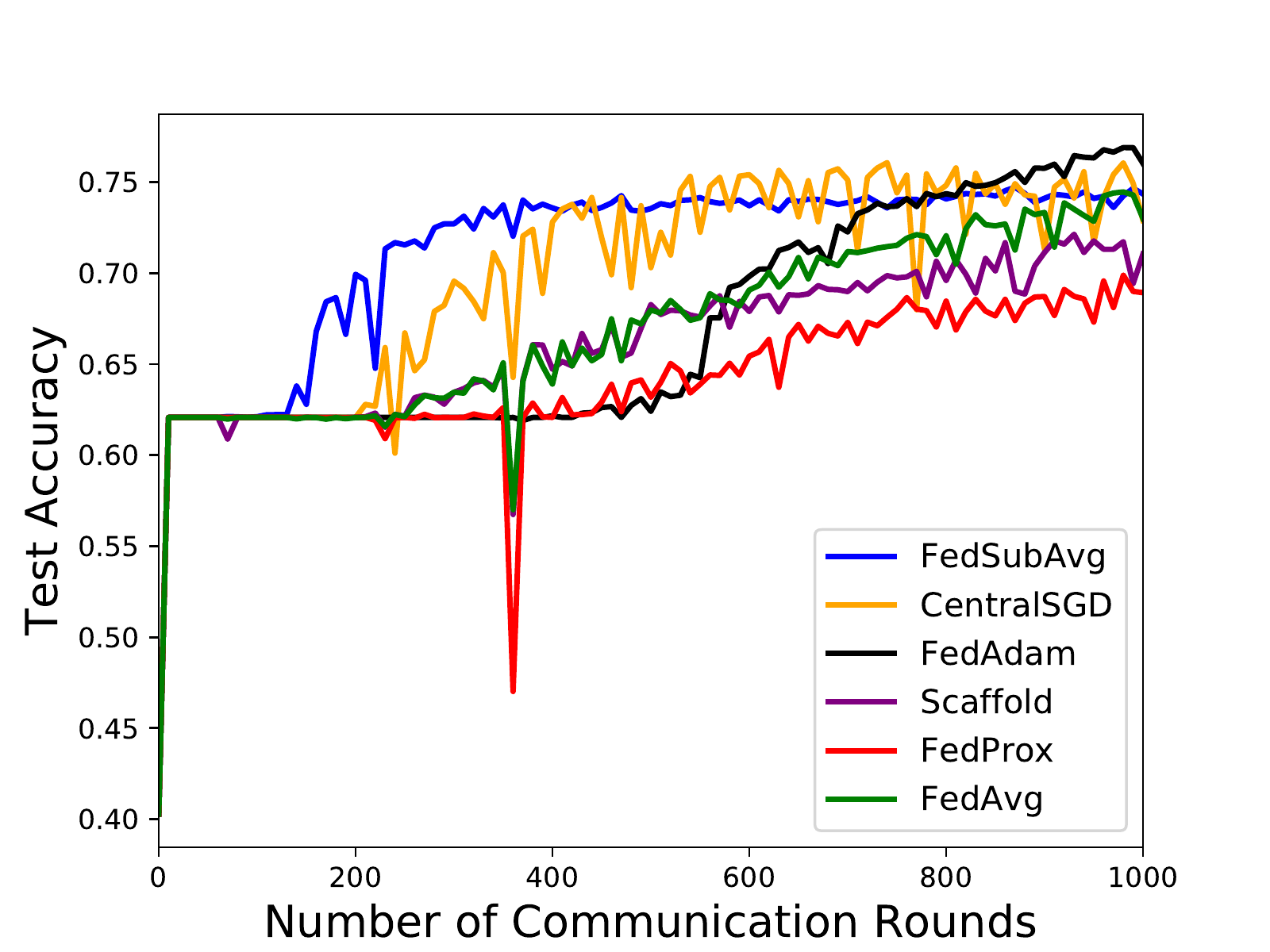}
}
\subfigure[Test AUCs on Amazon]{
\centering
\includegraphics[width=0.4\columnwidth]{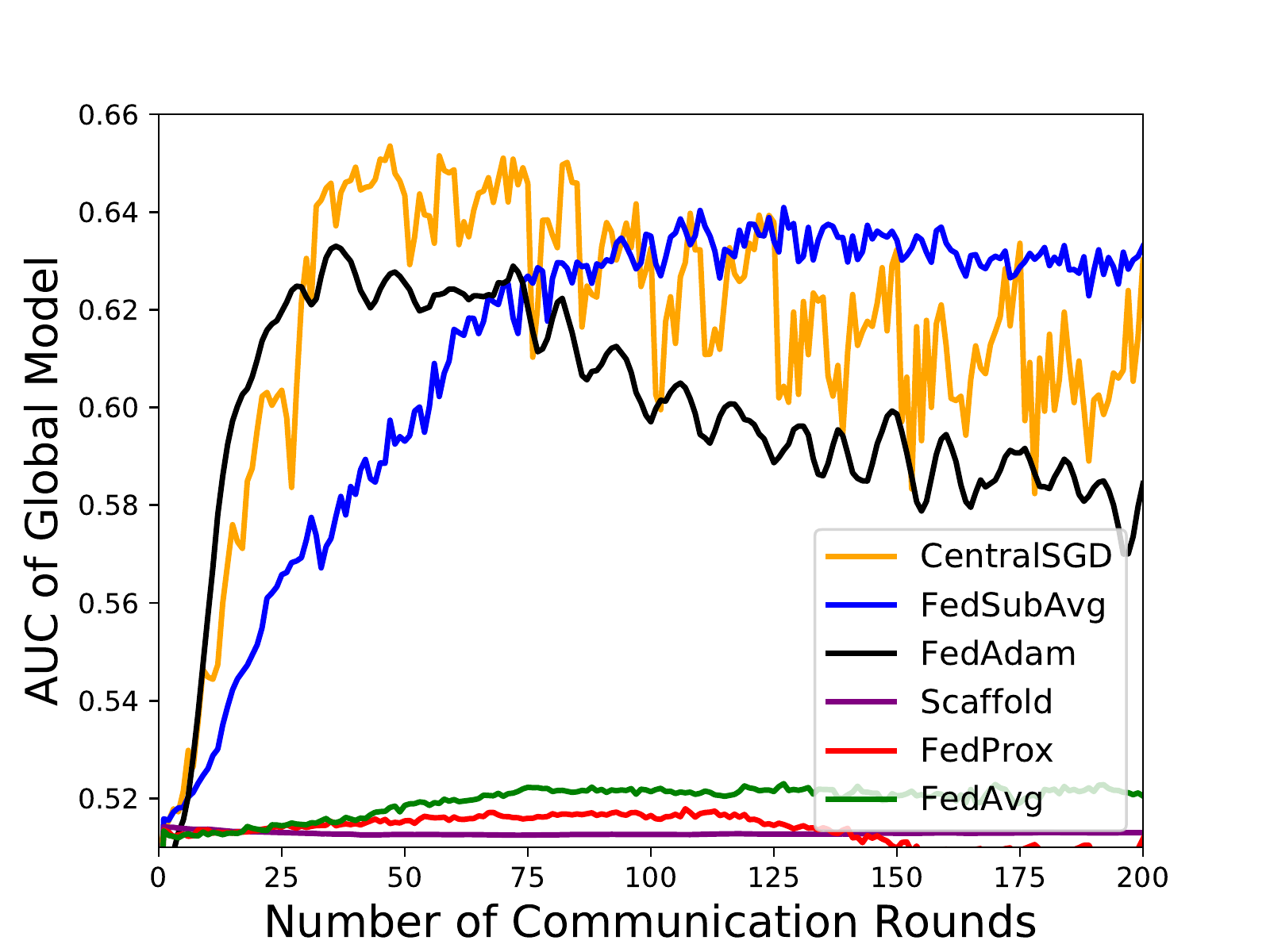}
}
\subfigure[Test AUCs on the Alibaba dataset]{
\centering
\includegraphics[width=0.4\columnwidth]{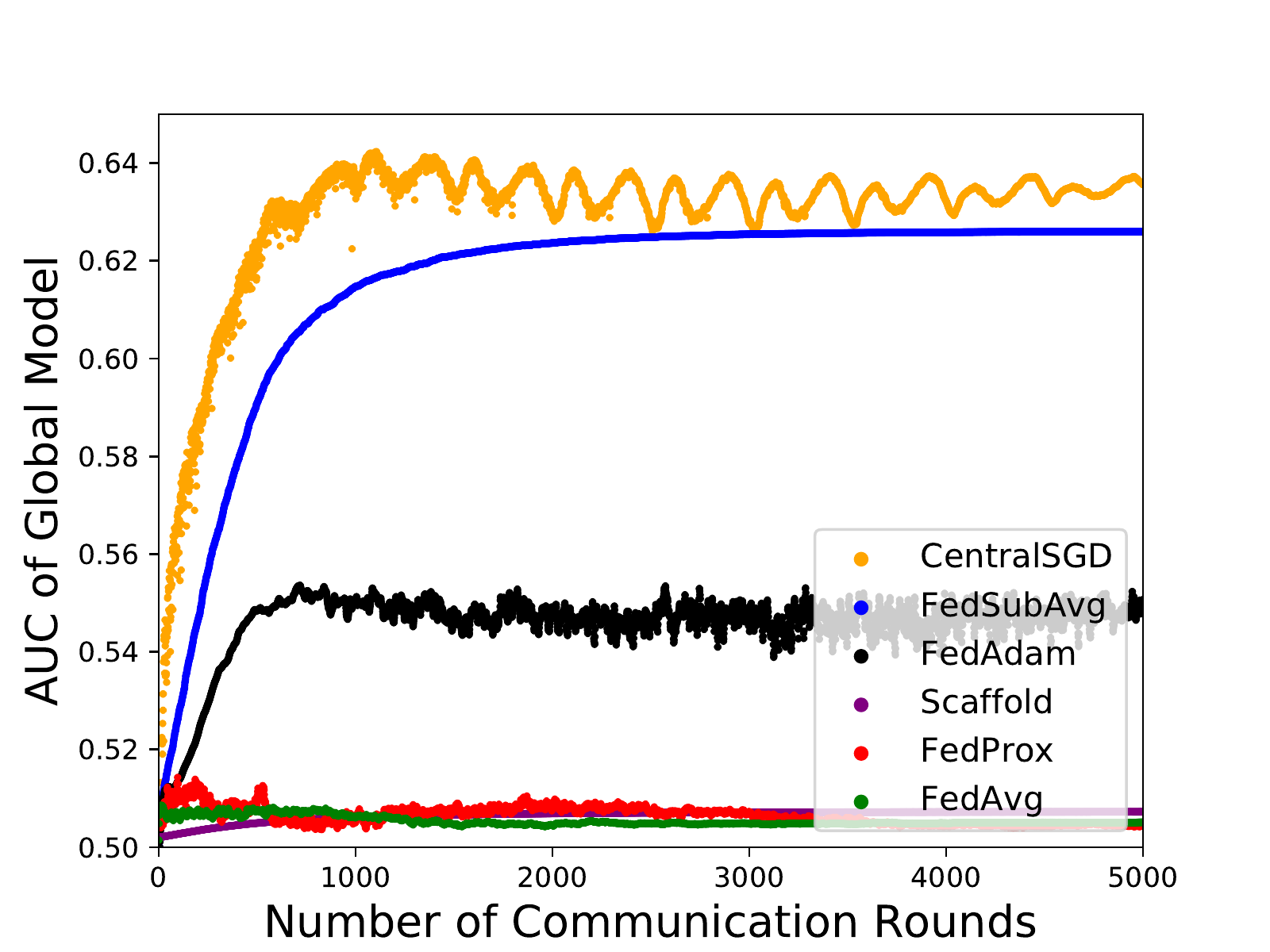}
}
\centering
\caption{Test ACCs or test AUCs of FedSubAvg and the baselines on different datasets.}\label{cmp-acc}
\end{figure}

\vspace{-1em}
\begin{figure}[!h]
\centering
\subfigure[Test ACCs on MovieLens]{
\centering
\includegraphics[width=0.4\columnwidth]{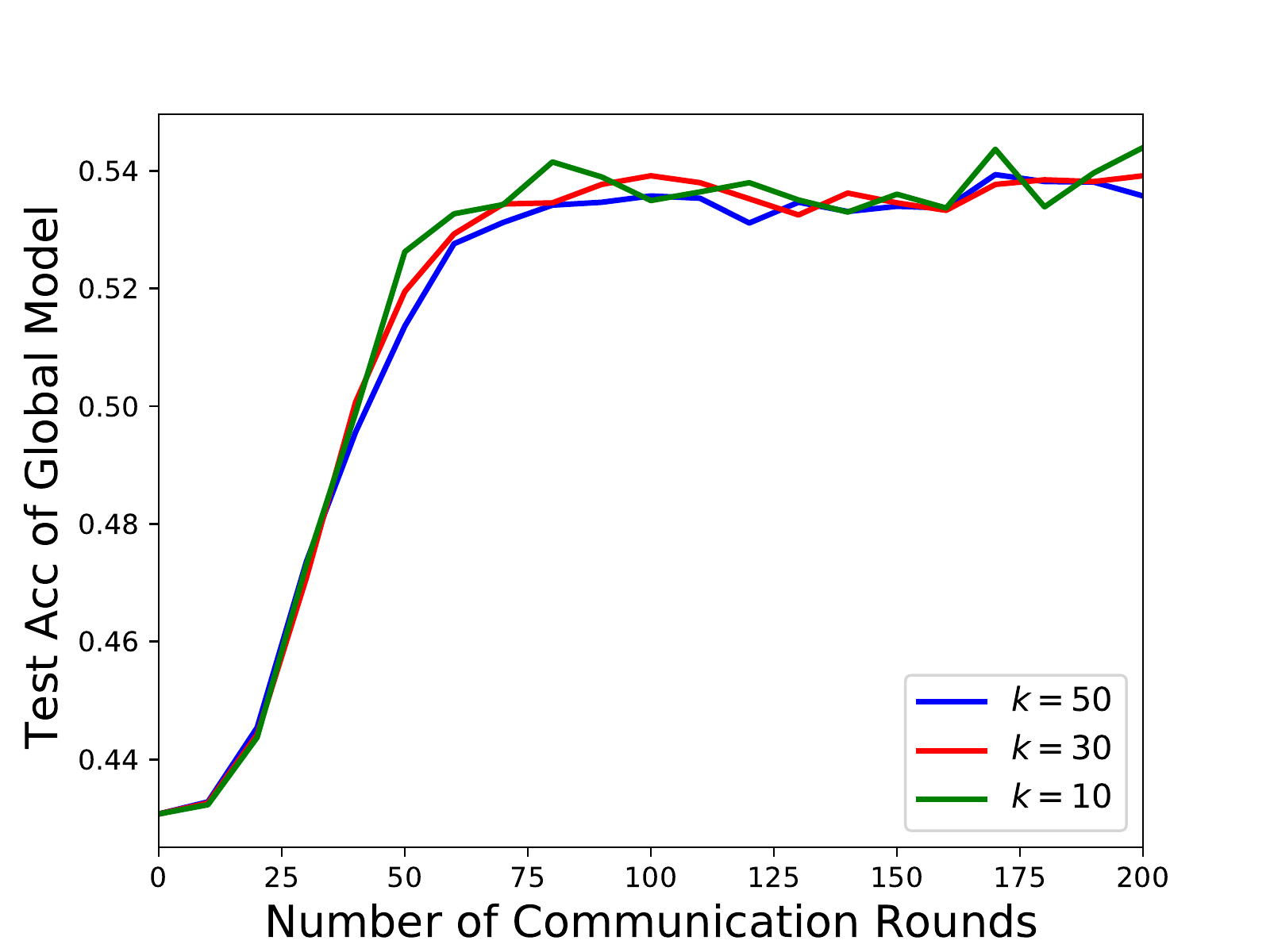}
}
\subfigure[Test ACCs on Sent140]{
\centering
\includegraphics[width=0.4\columnwidth]{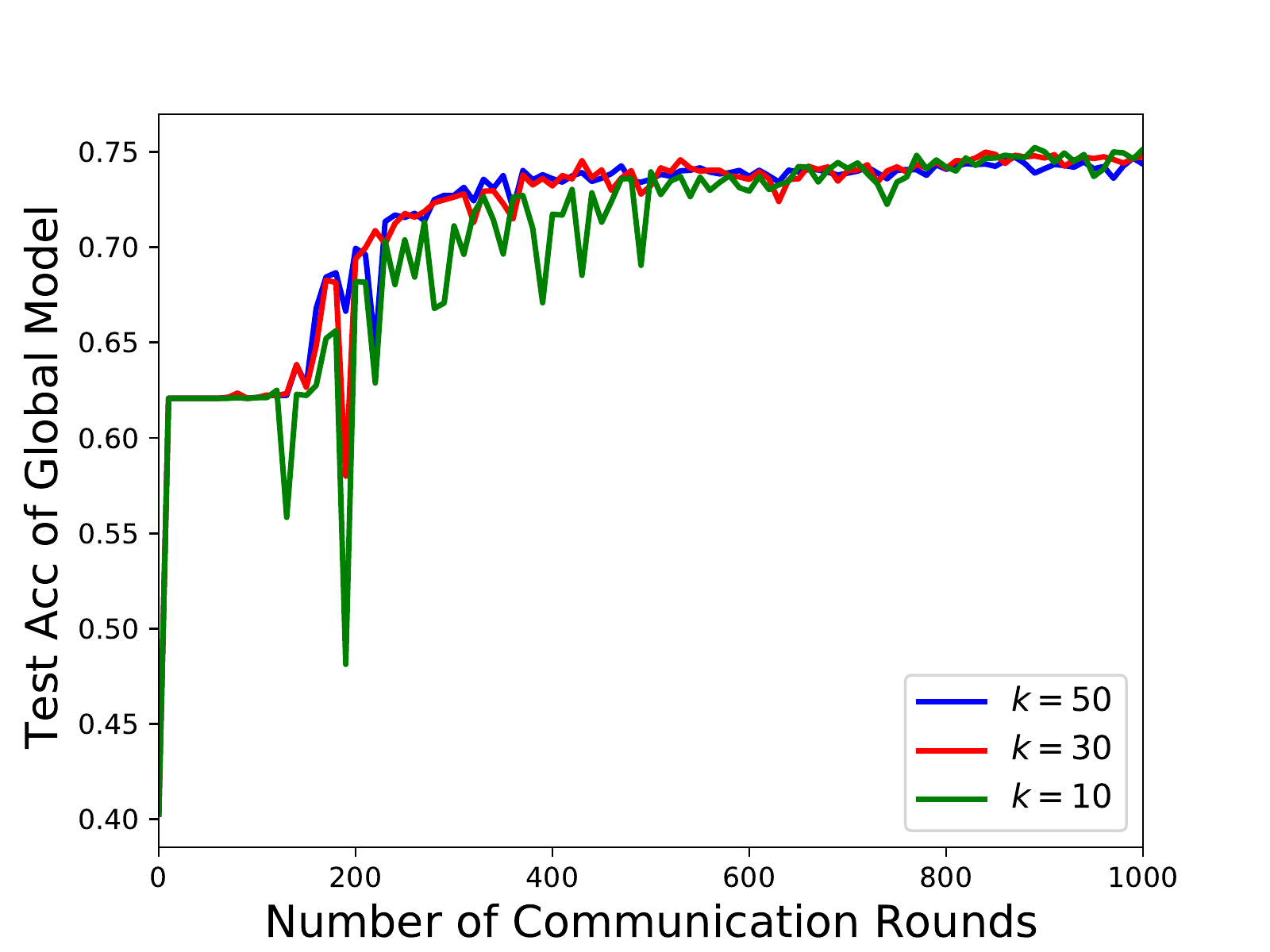}
}
\subfigure[Test AUCs on Amazon]{
\centering
\includegraphics[width=0.4\columnwidth]{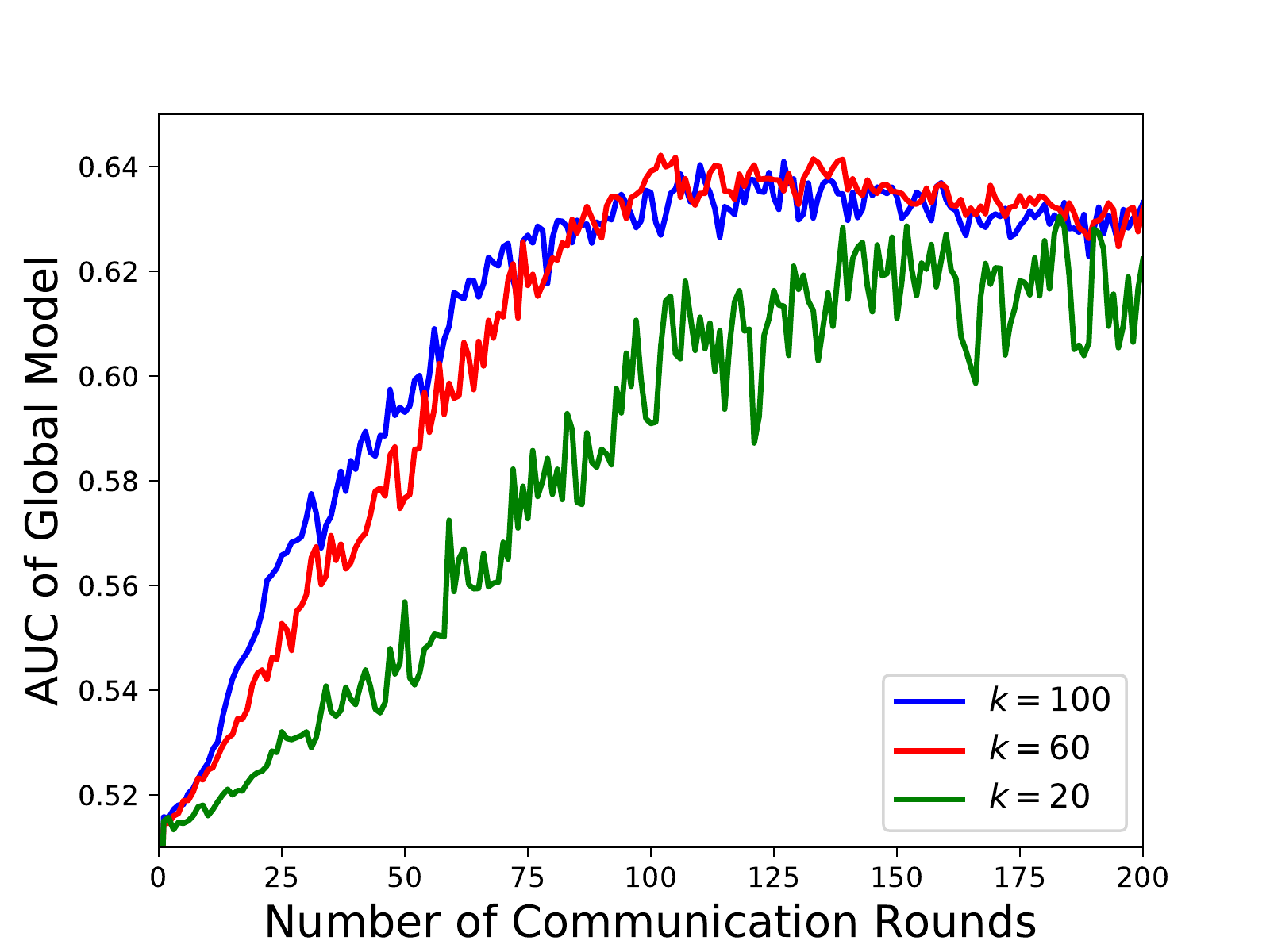}
}
\subfigure[Test AUCs on the Alibaba dataset]{
\centering
\includegraphics[width=0.4\columnwidth]{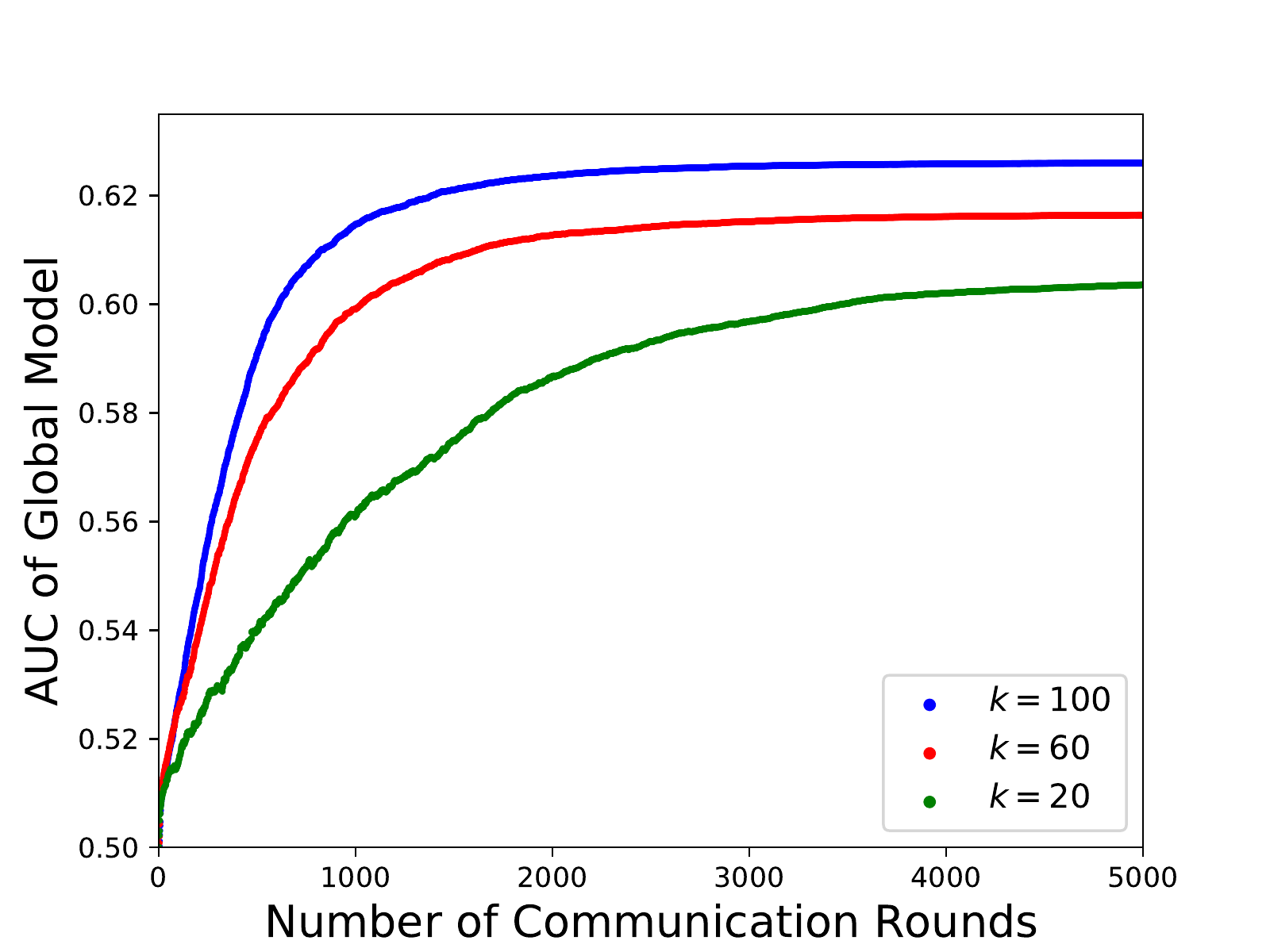}
}
\centering
\caption{Test ACCs or test AUCs of FedSubAvg on different datasets with the varying number of selected clients per round $K$.}\label{cmp-acc-K}
\end{figure}

\clearpage
\section{Comparison with Adaptive Federated Optimization}\label{cmp_adam}

In this section, we compare FedSubAvg with FedAdam~\cite{afo} and show that in the federated settings with feature heat dispersion, FedSubAvg generally has stronger theoretical guarantees and lower computation overhead.

We first clarify the relationship of FedSubAvg and the adaptive federated algorithms from the perspective of algorithm design. FedSubAvg is a prior preconditioning method to handle the issue of feature heat dispersion, which is newly identified and proven to cause ill-condition problem. The diagonal preconditioner in FedSubAvg is to do re-weighting based only on the statistics over the clients' local data and keeps unchanged during federated learning. In contrast and in parallel, FedAdam uses posterior information (i.e., the estimates of the first and second moments of the global updates), which is dynamic in the federated optimization process, to apply Adam in FL settings, and may alleviate the issue of feature heat dispersion. 

We next compare FedSubAvg with FedAdam from the perspective of theoretical analysis. In this paper, we theoretically prove that FedSubAvg works as a suitable preconditioner for the ill-conditioned global objective caused by feature heat dispersion. In contrast, FedAdam has no strict theoretical guarantees in terms of reducing the condition number. In addition, FedAdam has a $O(1/\sqrt{NT})$ convergence rate with respect to $\parallel \nabla f({\bf X}) \parallel^2$ {\bf when assuming full participation} (i.e., $K=N$). By Theorem \ref{hat_f_convergence}, FedAdam only has a convergence guarantee of $O(\sqrt{\frac{N}{n_{\min}^2T}})$ with respect to $f({\bf X}_{local}^*)$, which indicates that compared to FedAdam, FedSubAvg converges $\Theta(\sqrt{N/n_{\min}})$ times faster.

We further compare FedSubAvg with FedAdam from the perspective of computation overhead. Since the prior precondition in FedSubAvg keeps unchanged during training, the additional computation complexity is $O(M)$, where $M$ is the number of model parameters. In contrast, FedAdam calculates the adaptive learning rates for each model parameter every communication round, which leads to additional $O(RM)$ computation complexity, where $R$ is the number of rounds. Therefore, the additional computational overhead of FedSubAvg is significantly smaller than that of FedAdam, especially when $M$ and $R$ are large.

\section{Privacy Preserving Methods}\label{privacy_method}
Regarding the privacy issues in federated submodel learning,  Niu et al.~\cite{niu_mobicom20} designed a protocol based on private set union, randomized response~\cite{warner_1965}, and secure aggregation~\cite{secure_aggregation}, which can protect each individual client’s local features (i.e., the position of its submodel in the full model) with strict local differential privacy guarantee in both download and upload phases, against the cloud server and any other client. Compared with~\cite{niu_mobicom20}, the additional information needed in FedSubAvg is how many clients have each individual feature, thereby computing the feature heat dispersion and the diagonal pre-conditioner. To obtain such information without revealing any client’s local features, one feasible way is to apply secure aggregation, where each client uses a vector to truly indicate whether it has feature $i$ in the $i$-th position of the vector, and the cloud server can accurately obtain the sum of all the clients’ vectors without any individual client’s vector and further can obtain the size of clients having each individual feature. Another more efficient and an unbiased way is to apply randomized response, where each client still uses a vector to indicate whether it has feature $i$ in the $i$-th position of the vector, but the difference is that, conditional only whether the client truly has feature $i$, it will indicate ``1" with a certain probability and ``0" with another probability. Based on the randomized vectors from all the clients, the cloud server can obtain an unbiased estimation of how many clients having each individual feature after certain corrections. Meanwhile, each client can hold plausible deniability (in terms of local differential privacy) against whether it has a certain feature.

\end{document}